\documentclass[twoside,11pt]{article}

%

%
%
%

\usepackage{jmlr2e}


\usepackage[utf8]{inputenc} 
\usepackage[T1]{fontenc}    
\usepackage{hyperref}       
\usepackage{url}            
\usepackage{booktabs}       
\usepackage{amsfonts}       
\usepackage{nicefrac}       
\usepackage{microtype}      

\usepackage{graphicx}
\usepackage{booktabs} 


\usepackage[ruled,vlined, noend]{algorithm2e}
\usepackage{algorithmic}

\usepackage{xcolor}
\usepackage{amssymb}
\usepackage{bbm}
\usepackage{bm}

\usepackage{amsmath}
\usepackage{amsthm}

\DeclareMathOperator*{\argmin}{arg\,min}

\usepackage{xcolor}
\usepackage{caption}
\usepackage{subcaption}
\usepackage{float}
\usepackage{wrapfig,lipsum,booktabs}
\usepackage{mathrsfs}
\usepackage{enumitem}

\newtheorem{lemma}{Lemma}
\newtheorem{theorem}{Theorem}

\providecommand{\customgenericname}{}
\newcommand{\newcustomtheorem}[2]{%
  \newenvironment{#1}[1]
  {%
   \renewcommand\customgenericname{#2}%
   \renewcommand\theinnercustomgeneric{##1}%
   \innercustomgeneric
  }
  {\endinnercustomgeneric}
}

\newcustomtheorem{customthm}{Theorem}
\newcustomtheorem{customlemma}{Lemma}

\newcommand{\Bspace}{\ensuremath{\mathscr{B}_{HS}}}
\newcommand{\BKspace}{\ensuremath{B_K}}

\urlstyle{sf}




\ShortHeadings{Functional optimal transport}{Zhu, Guha, Do, Xu, Nguyen and Zhao}
\firstpageno{1}
\begin{document}

\title{Functional optimal transport: map estimation and domain adaptation for functional data}

\author{\name Jiacheng Zhu$^*$ \email jzhu4@andrew.cmu.edu \\
\addr Department of Mechanical Engineering, Carnegie Mellon University\\
Pittsburgh, PA 15213, USA
\AND
\name Aritra Guha$^*$ \email ag997x@att.com  \\
\addr Data Science \& AI Research, AT\&T Chief Data Office\\
Bedminster, NJ 07921, USA
\AND
\name Dat Do$^*$ \email dodat@umich.edu \\
\addr Department of Statistics, University of Michigan\\
Ann Arbor, MI 48105, USA
\AND
\name Mengdi Xu \email mengdixu@andrew.cmu.edu \\
\addr Department of Mechanical Engineering, Carnegie Mellon University\\
Pittsburgh, PA 15213, USA
\AND
\name XuanLong Nguyen \email xuanlong@umich.edu \\
\addr Department of Statistics, University of Michigan\\
Ann Arbor, MI 48105, USA
\AND
\name Ding Zhao \email dingzhao@cmu.edu \\
\addr Department of Mechanical Engineering, Carnegie Mellon University\\
Pittsburgh, PA 15213, USA
}

\editor{ } 
 
\maketitle

\begin{abstract}
We introduce a formulation of optimal transport problem for distributions on function spaces, where the stochastic map between functional domains can be partially represented in terms of an (infinite-dimensional) Hilbert-Schmidt operator mapping a Hilbert space of functions to another. 
For numerous machine learning tasks, data can be naturally viewed as samples drawn from spaces of functions, such as curves and surfaces, in high dimensions. 
Optimal transport for functional data analysis provides a useful framework of treatment for such domains. 
{ Since probability measures in infinite dimensional spaces generally lack absolute continuity (that is, with respect to non-degenerate Gaussian measures), the Monge map in the standard optimal transport theory for finite dimensional spaces may not exist. Our approach to the optimal transport problem in infinite dimensions is by a suitable regularization technique --- we restrict the class of transport maps to be a Hilbert-Schmidt space of operators.}
To this end, we develop an efficient algorithm for finding the stochastic transport map between functional domains and provide theoretical guarantees on the existence, uniqueness, and consistency of our estimate for the Hilbert-Schmidt operator. 
We validate our method on synthetic datasets and examine the functional properties of the transport map. Experiments on real-world datasets of robot arm trajectories further demonstrate the effectiveness of our method on applications in domain adaptation. 
\end{abstract}

\begin{keywords}
  Optimal transport, Optimal transport map estimation, Functional data analysis, Hilbert Schmidt operator, Domain adaptation
\end{keywords}

\section{Introduction}

Optimal transport (OT) is a formalism for finding and quantifying the movement of mass from one probability distribution to another \citep{villani2008optimal}. In recent years, it has been instrumental in the development of various new machine learning methods, including deep generative modeling \citep{arjovsky2017wgan,salimans2018imp_GANOT}, unsupervised learning \citep{ho2017multilevel,mallasto2017learning_wGP} and domain adaptation \citep{ganin2015unsupervisedDA,bhushan2018deepjdot}. As statistical machine learning algorithms are applied to increasingly complex domains, it is of interest to develop optimal transport-based methods for complex data structures. A particularly common form of such structures arises from functional data --- data that may be viewed as random samples of smooth functions, curves, or surfaces in high dimension spaces \citep{Ferraty-Vieu-06,ramsay2005_fda,hsing2015theoretical_fda,mirshani_ICML2019_functional,dupont2021generative_dist_function}. Examples of real-world applications involving functional data are numerous, ranging from robotics \citep{deisenroth2013gaussianprocess_robotics} and natural language processing \citep{rodrigues2014gaussianprocess_nlp} to economics \citep{horvath2012inference_func_data_economics} and healthcare \citep{cheng2020sparse_multi_GP_time_series}. 
Therefore, it is of interest to extend and develop a suitable optimal transport formulation to the \emph{functional} data domains. 

The goal of this paper is to provide a novel formulation of the optimal transport problem in function spaces,
to develop an efficient learning algorithm for estimating a suitable notion of optimal stochastic map that transports samples from one functional domain to another, to provide theoretical guarantees regarding the \textit{existence}, \textit{uniqueness}, and \textit{consistency} of our estimates, and to demonstrate the effectiveness of our approach to several application domains where the functional optimal transport (FOT) viewpoint proves natural and useful. 
There are several formidable challenges: both the source and the target function spaces can be quite complex and in general of \textit{infinite dimensions}. 
One needs to deal with probability distributions over such spaces, which is difficult if one is to model them with data. Moreover, optimal coupling or optimal transport map between the two distributions on infinite dimensional spaces is generally hard to characterize and compute efficiently, except for very few specific cases. Yet, to be useful in practice, one must find an explicit transport map that can approximate the optimal coupling well, i.e., to find an approximate solution to the original Monge problem \citep{villani2008optimal}. 

{ The primary technical challenge here, more specifically, is that the optimal Monge map may not exist in general. By a Brenier-type theorem, a sufficient condition for the existence of the Monge map under a suitable convex cost function is that the source distribution be absolutely continuous with respect to non-degenerate Gaussian measures in the source domain (see \cite{ambrosio2005gradient}, Sec. 6.2). In finite dimensional (Euclidean) domains, absolute continuity with respect to non-degenerate Gaussian measures is equivalent to absolute continuity with respect to the Lebesgue measure. However, in infinite dimensional domains, probability measures tend to lack the requisite absolute continuity. In fact, distributions in infinite-dimensional space can be discrete and tend to be singular to each other \citep{kakutani1948equivalence}. Historically, the lack of existence of the Monge map was a key motivation for Kantorovich's optimal coupling formulation, which is well-posed for probability distributions in Polish spaces, and the discovery of Brenier-type theorems linking the optimal coupling to that of Monge map helped to ignite renewed interest and fresh new developments in the field of optimal transport in the past several decades. 
Since we work in a setting where the Monge map is not expected to exist in general, and moreover, a direct application of the Kantorovich's optimal coupling formulation seems difficult, we shall take a rather natural approach based on regularization: we seek to find the best deterministic map within a class of operators acting on the space of functions in the source domain. As we shall see shortly, this approach can be viewed as a regularized form of the optimal coupling problem as well.
}



We note that there is currently a growing interest, especially in the machine learning literature, in finding an explicit optimal transport map linked to the Monge problem, although such attempts were mainly confined to finite-dimensional domains. For discrete distributions, map estimation can be tackled by jointly learning the coupling and a transformation map \citep{perrot2016mapping_est_discre_OT}. This basic idea and extensions were shown to be useful for the alignment of multimodal distributions \citep{lee2019hierarchical_hiwa} and word embedding \citep{zhang2017earth_mover,grave2019unsupervised}; such joint optimization objective was shown \citep{alvarez2019towards_joint} to be related to the softassign Procrustes method \citep{rangarajan1997softassign}.
 Meanwhile, a different strand of work focused on scaling up the computation of the transport map \citep{genevay2016stoch_LSOT, meng2019large_map_est}, including approximating transport maps with neural networks \citep{seguy2017LSOT,makkuva2019ICNN}, deep generative models \citep{xie2019scalable_LSOT}, and flow models \citep{huang2020convex_ot}. 
 It is emphasized that all these methods are not quite suitable for capturing the distributions on the space of functions.  
 Recent developments on Gromov-Wasserstein distance enable the comparisons of distributions on space of different dimensions~\citep{memoli2011gromov}. 
 An alternative approach to constructing distances between probability measures on (Euclidean) spaces of different dimensions was recently proposed by \cite{cai2022_w_inequal}. 
These techniques are relevant but not immediately applicable to the functional domains, which are of infinite dimensions. In addition, a common feature of functional data analysis is that the function samples are typically observed at the different and possibly varying number of design points. A naive approach to functional data is to treat a function as a vector of components sampled at a number of design points in its domain. Such an approach fails to exploit the fine structures (e.g., continuity, regularity) present naturally in many functional domains. 
Moreover, non-functional approaches to functional data may be highly sensitive to the choice of design points as one moves from one domain to another.



{ Most known results and techniques on optimal transport between distributions on function spaces are related to Gaussian measures and Gaussian processes 
~\citep{mallasto2017learning_wGP,masarotto2019procrustes_GP,knott1984optimal,pigoli2014distances}. These results are natural generalization from those of the multivariate Gaussian distributions~\citep{dowson1982frechet,givens1984class_W_prob}. Specifically, the 2-Wasserstein distance between Gaussian processes with certain covariance coincides with the Procrustes distance between the two covariance operators (cf. Section 2 of ~\cite{masarotto2019procrustes_GP}). Furthermore, there exists a linear subspace in Hilbert spaces where the optimal map between two centered Gaussian processes is well-defined as a linear operator. In practice, the Gaussian distribution assumption is clearly too restrictive in many domains. Our work may be viewed as a first step at addressing optimal transport in the domains of functions that go beyond the Gaussian assumption, and with a particular focus on learning the explicit transport map for sampled functional data.}



In our approach the mathematical machinery of functional data analysis (FDA) 
\citep{hsing2015theoretical_fda,ramsay2005_fda}, along with recent advances in computational optimal transport via regularization techniques will be brought to bear on the aforementioned problems. There are several ingredients in our work. First, we take a probabilistic model-free approach, by avoiding making assumptions on the source and target distributions of functional data. Instead, we aim to learn the (stochastic) transport map directly. Second, we follow the FDA perspective by assuming that both the source and target distributions be supported on suitable Hilbert spaces of functions $H_1$ and $H_2$, respectively. A map $T: H_1 \rightarrow H_2$ sending elements of $H_1$ to that of $H_2$ will be represented by a class of linear operators, namely the integral operators. 
In fact, we shall restrict ourselves to Hilbert-Schmidt operators, which are compact, computationally convenient to regularize and amenable to theoretical analysis. Finally, the optimal \emph{deterministic} transport map between two probability measures on function spaces may not exist, due to the general lack of absolute continuity discussed earlier. 
To overcome this difficulty, we enlarge the space of transport maps by allowing for stochastic coupling $\Pi$ between the two domains $T(H_1) \subseteq H_2$ and $H_2$, while the complexity of such coupling can be controlled via the entropic regularization technique~\citep{Cuturi-Sinkhorn-13}. 

It is quite interesting to note that our formulation for optimal transport in the functional domains has two complementary interpretations: it can be viewed as learning an integral operator $T$ regularized by a transport plan (a coupling distribution $\Pi$) or it can also be seen as an optimal coupling problem for $\Pi$ (the Kantorovich problem), which is associated with a cost matrix parameterized by the integral operator $T$. In any case, we take a joint optimization approach for the transport map $T$ and the coupling distribution $\Pi$ in functional domains. Subject to suitable regularization on the space of transport maps, the existence of the optimal $(T,\Pi)$ and the uniqueness of $T$ can be established, which leads to a consistency result of our estimation procedure. 

{

There are several advantages in our choice of bounded linear operators for modeling the transport map. First, in functional analysis and functional data analysis in particular, bounded linear operators (including the rich class of integral operators) are the main workhorse for representing transformation among spaces of functions \citep{yosida1995functional,Ramsay-Silverman-05,hsing2015theoretical_fda}. Using this class of operators allows us to draw from the principled machinery and tools from functional analysis to establish a solid theoretical foundation for optimal transport in infinite-dimensional function spaces.
Second, compact linear operators are easily regularizable, which result in fast computational procedures for learning from the functional data.
The third advantage is the interpretability of the representation of the transport map $T$, which can be helpful in applications. In fact, linearity in the representation space is the desired evaluation protocol for even the most sophisticated large-scale pre-trained models~\citep{radford2021learning_CLIP}, which are powering applications across both natural language and image generation. 
}

{
There are several limitations in our approach. First, bounded (or compact) linear operators may be too restrictive in some domains. Learning nonlinear operators in infinite dimensional spaces is an interesting direction, but is beyond the scope of this paper. Moreover, as discussed earlier, for discrete probability measures in the source domain, even nonlinear operators are not enough since the Monge map generally does not exist anyway. When this is the case, a reasonable approach, in our opinion, is to revert to the Kantorovich's coupling formulation, where linear operators can still be efficiently utilized as a building block from a regularization viewpoint for the optimal coupling problem. Indeed, our modeling choice is sufficiently rich when coupled with the stochastic coupling to obtain the optimal $(T,\Pi)$, which ensures both the existence and uniqueness of the solution to our formulation of the optimal coupling problem. 
The second limitation is more practical, as it is related to the fact that the learned linear operator $T$ is represented in terms of its action on the eigenfunction basis of the function spaces of the source and target domains. This also highlights a key distinction for the functional optimal transport problem and our corresponding approach from the linear transformation techniques for fixed-dimensional vector spaces, which are relatively simpler and do not have to grapple with this issue (see, e.g.~\cite{perrot2016mapping_est_discre_OT,alvarez2019towards_joint}).
In many practical domains, the eigenfunction basis may not be available for certain types of datasets. In that case, we may rely on available methods to construct data-driven basis functions, such as functional principal component analysis (FPCA)~\citep{shang2014survey_fpca,liu2017functional_fpca}.
}


In summary, the contributions presented in this paper are the following.
\begin{itemize}
     \item  We propose functional optimal transport (FOT), a novel formulation of optimal transport in the infinite-dimensional functional spaces. We take a probabilistic model-free approach, by avoiding making assumptions on the source and target distributions of functional data.
    \item An approximate optimization algorithm is developed to estimate the transport map from data. We follow the FDA perspective by assuming that both the source and target distributions be supported on suitable Hilbert spaces of functions $H_1$ and $H_2$, respectively. A map $T: H_1 \rightarrow H_2$ sending elements of $H_1$ to that of $H_2$ will be represented by a class of linear operators, namely the integral operators. Despite the infinite-dimensional nature of this problem, we propose an approximate estimator and solve it with an alternative minimization algorithm. 
    \item We establish the existence and uniqueness for such a transport map on empirical samples of functional data from both source and target domains, as well as consistency theorems providing the support for our estimator. Specifically, we show the asymptotic convergence in terms of the number of basis functions $K$, the number of observed sample functions $n$, and the number of design points $d$. To the best of our knowledge, this is the first work in which a rigorous statistical theory for optimal transport in the domains of functions is established. 
    \item Simulation studies and experiments are conducted to validate our method and the associated theory. First, the convergence properties of our map estimation when the samples are synthesized with known basis functions are verified via simulations. In the task of estimating an explicit transport map for two sets of functional data, our proposed method displayed superior performance from both qualitative and quantitative perspectives in comparison to non-functional techniques.
    Next, our method is applied to several real-world datasets. We conduct the optimal transport domain adaptation for predicting robot-arm motion from two different datasets. 
    
\end{itemize}

\subsection{Organization}
The remainder of the paper is organized as follows.
Section~\ref{section:prelim_knowledge} contains some preliminary background of optimal transport and functional data analysis.
Section 3 presents the formulation of functional optimal transport based on Hilbert-Schmidt operators, followed by theoretical results on the existence, uniqueness, and consistency of our map estimator. 
In Section~\ref{section:methodology}, we describe an implementation of our estimation procedure by solving a block coordinate-wise convex optimization problem. The result is an efficient algorithm for finding explicit transport maps that can be applied on sampled functions.
Then, in Section~\ref{section:Experiments}, the effectiveness of our approach is validated first on synthetic datasets of smooth functional data and then applied in a suite of experiments for mapping real-world 3D trajectories between robotic arms with different configurations.
All proofs are given in Section~\ref{section:proof}. 
Finally, Section~\ref{section:conclusion} provides further discussions of related work, as well as several directions for future work.




\section{Preliminaries}
This section provides some basic background of optimal transport and functional data analysis.
\label{section:prelim_knowledge}
\subsection{Optimal transport}
The basic problem in optimal transport, also known as the Kantorovich problem~\citep{villani2008optimal,kantorovitch1958}, is to find an optimal coupling $\pi$ of given measures $\mu$ on space $\mathcal{X}$ and $\nu$ on space $\mathcal{Y}$ to minimize 
\begin{equation}\label{eq:Kantorovich}
    \inf_{\pi \in \Pi} \int_{ \mathcal{X} \times \mathcal{Y}} c(x, y) d \pi(x, y), \text{   subject to } \Pi=\{\pi: \gamma_\#^{\mathcal{X}} \pi = \mu, \gamma_\#^{\mathcal{Y}} \pi = \nu\}.
\end{equation}
In the above display, $c: \mathcal{X} \times \mathcal{Y} \mapsto \mathbb{R}^+$ is a cost function and  $\gamma^{\mathcal{X}}$, $\gamma^{\mathcal{Y}}$ 
denote projections from $\mathcal{X}\times \mathcal{Y}$ onto $\mathcal{X}$ and $\mathcal{Y}$ respectively, and hence the corresponding pushforward measures denoted by $\gamma_\#^{\mathcal{X}} \pi, \gamma_\#^{\mathcal{Y}} \pi$ of $\pi$ are its the marginal distributions on $\mathcal{X}$ and $\mathcal{Y}$. This optimization is well-defined and the optimal $\pi$ exists under mild conditions (in particular, $\mathcal{X},\mathcal{Y}$ are both separable and complete metric spaces, $c$ is lower semi-continuous \citep{villani2008optimal}). When $\mathcal{X}=\mathcal{Y}$ are metric spaces, $c(x,y)$ is the square of the distance between $x$ and $y$, then the square root of the optimal cost given by \eqref{eq:Kantorovich} defines the Wasserstein metric $W_2(\mu,\nu)$ on the space of square integrable probability measures on $\mathcal{X}$. A related problem is Monge problem, where one finds a Borel map $T: \mathcal{X} \to \mathcal{Y}$ that realizes the infimum
\begin{eqnarray}
\label{eq:Monge}
\inf_T  \int_{\mathcal{X}} c(x, T(x)) d \mu(x)  \text{ \smallskip {}{} subject to {}{} \smallskip  }   T_{\#}\mu = \nu.
\end{eqnarray}
Here $T_\#\pi$ generally denotes the pushforward measure of $\pi$ by $T$.

Note that the existence of the optimal deterministic map $T$ is not always guaranteed \citep{villani2008optimal}. However, in various applications, it is of interest to find a deterministic map that approximates the optimal coupling to the Kantorovich problem.
In many recent work, $T$ is typically restricted to a family of maps $\mathcal{F}$ followed by joint optimization of $T$ and $\pi$ \citep{perrot2016mapping_est_discre_OT, alvarez2019towards_joint,grave2019unsupervised,seguy2017LSOT,alvarez2020unsupervised_joint}:
\begin{eqnarray}
\label{eq:map}
\inf_{\pi \in \Pi ,T\in \mathcal{F}} \int_{ \mathcal{X} \times \mathcal{Y}} c(T(x), y) d \pi(x, y),
\end{eqnarray}
where $c: \mathcal{Y} \times \mathcal{Y} \mapsto \mathbb{R}^+$ is a cost function on $\mathcal{Y}$. On the one hand, the class of maps $\mathcal{F}$ may be chosen to be sufficiently rich to approximate the optimal transport maps for the measures $\mu$ and $\nu$ of interest defined on the respective spaces $\mathcal{X}$, $\mathcal{Y}$. On the other hand, $\mathcal{F}$ may be chosen to ease up the computational burden and facilitate meaningful interpretations. For instance, $\mathcal{F}$ may be a class of linear functions (e.g. rigid transformations) \citep{perrot2016mapping_est_discre_OT,alvarez2020unsupervised_joint} or neural networks \citep{seguy2017LSOT}. 

At a high level, our approach will be analogous to~\eqref{eq:map}, except that $\mathcal{X}$ and $\mathcal{Y}$ are taken to be spaces of functions, as we are motivated by applications in functional domains. As an illustration for a toy example, Figure \ref{fig:one} depicts a transport map that sends sample paths from the famous \texttt{Swiss-roll} curve data set to that of the target \texttt{Wave} curve data set. In a real-world application considered later in Section~\ref{section:Experiments}, $\mathcal{X}$ and $\mathcal{Y}$ represent the space of (smooth) trajectories of robot motions. 
A natural and powerful approach to such data domains is the framework of functional data analysis. In this framework, the data may be viewed as samples of random functions. In particular, we will be working with distributions on Hilbert spaces of functions, while $\mathcal{F}$ is a suitable class of operators acting on such Hilbert spaces. We proceed to a brief background of FDA in the sequel.

\subsection{Functional data analysis} 
Functional data analysis adopts the perspective that certain types of data may be viewed as samples of random functions, which are viewed as random elements taking value in Hilbert spaces of functions~\citep{hsing2015theoretical_fda}. The data analysis techniques on functional data involve operators acting on Hilbert spaces. Let $A: H_1 \to H_2$ be a bounded linear operator, where $H_1$ (respectively, $H_2$) is a Hilbert space equipped with scalar product $\langle \cdot, \cdot \rangle_{H_1}$ (respectively, $\langle \cdot, \cdot \rangle_{H_2}$) and $\{U_{i}\}_{i \geq 1} (\{V_{j}\}_{j \geq 1})$ is the Hilbert basis in $H_1$ ($H_2$). We will focus on a class of compact integral operators, namely Hilbert-Schmidt operators, that are sufficiently rich for many applications and yet amenable to analysis and computation. $A$ is said to be Hilbert-Schmidt if $\sum_{i \geq 1}\| A U_i \|_{H_2}^2<\infty$ for any Hilbert basis $\{U_i\}_{i \geq 1}$. The space of Hilbert-Schmidt operators between $H_1$ and $H_2$, to be denoted by $\mathscr{B}_{HS}(H_1, H_2)$, is also a Hilbert space endowed with the scalar product $\langle A, B \rangle_{HS} = \sum_i \langle A U_i , B U_{i} \rangle_{H_2}$ and the corresponding Hilbert-Schmidt norm is denoted by $\| \cdot \|_{HS}$.

Denote the outer product operator between two elements $e_i \in H_i$ for $i=1,2$ by $e_1\otimes e_2: H_1 \to H_2$, which is defined by $(e_1 \otimes e_2) f= \langle e_1,f\rangle_{H_1}e_2$ for $f \in H_1$.
An important fact of Hilbert-Schmidt operators is given as follows
(see, e.g., Theorem 4.4.5 of~\citep{hsing2015theoretical_fda}). 
\begin{theorem}
\label{theorem:CONS}
The linear space $\mathscr{B}_{HS}(H_1, H_2)$ is a separable Hilbert space when equipped with the HS inner product. For any choice of complete orthonormal basis system (CONS) $\{ U_{i} \}$ and $\{V_{j}\}$ for $H_1$ and $H_2$ respectively, $\{ U_{i} \otimes V_{j} \}$ forms a CONS for $\mathscr{B}_{HS}(H_1,H_2)$. 
\end{theorem}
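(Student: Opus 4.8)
The plan is to establish the three structural claims in the statement separately: that $\langle\cdot,\cdot\rangle_{HS}$ is a genuine inner product on $\mathscr{B}_{HS}(H_1,H_2)$, that the resulting normed space is complete, and that $\{U_i\otimes_1 V_j\}$ is a countable orthonormal system whose closed span is all of $\mathscr{B}_{HS}(H_1,H_2)$.

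First, I would check well-definedness of the inner product. For $A\in\mathscr{B}_{HS}(H_1,H_2)$, fix a CONS $\{V_j\}$ of $H_2$ and expand $\|AU_i\|_{H_2}^2=\sum_j|\langle AU_i,V_j\rangle_{H_2}|^2=\sum_j|\langle U_i,A^*V_j\rangle_{H_1}|^2$; summing over $i$ and interchanging the two nonnegative sums (Tonelli) yields $\sum_i\|AU_i\|_{H_2}^2=\sum_j\|A^*V_j\|_{H_1}^2$, which shows the value is finite (by the Hilbert-Schmidt hypothesis) and independent of the CONS $\{U_i\}$. Running the same computation with $\langle AU_i,BU_i\rangle_{H_2}$ in place of $\|AU_i\|_{H_2}^2$, and noting that the double sum converges absolutely because $\sum_{i,j}|\langle U_i,A^*V_j\rangle_{H_1}||\langle U_i,B^*V_j\rangle_{H_1}|\le\|A\|_{HS}\|B\|_{HS}$ by Cauchy--Schwarz, shows $\langle A,B\rangle_{HS}$ is well-defined, bilinear, symmetric, and positive definite (it vanishes iff $AU_i=0$ for all $i$ iff $A=0$). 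Along the way this records the domination $\|A\|_{\mathrm{op}}\le\|A\|_{HS}$ (take a CONS whose first element is a chosen unit vector).

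Second, completeness. Given a Cauchy sequence $\{A_n\}$ in $\|\cdot\|_{HS}$, the bound $\|\cdot\|_{\mathrm{op}}\le\|\cdot\|_{HS}$ makes it Cauchy in operator norm, so $A_n\to A$ for some bounded operator $A$. To see $A\in\mathscr{B}_{HS}(H_1,H_2)$ and $\|A_n-A\|_{HS}\to 0$, fix $\varepsilon>0$, pick $N$ with $\|A_n-A_m\|_{HS}<\varepsilon$ for $n,m\ge N$, and for each finite $M$ write $\sum_{i\le M}\|(A_n-A)U_i\|_{H_2}^2=\lim_{m}\sum_{i\le M}\|(A_n-A_m)U_i\|_{H_2}^2\le\varepsilon^2$; letting $M\to\infty$ gives $\|A_n-A\|_{HS}\le\varepsilon$, and then $\|A\|_{HS}\le\|A_N\|_{HS}+\varepsilon<\infty$.

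Third, the CONS claim. Since $(U_i\otimes_1 V_j)U_k=\langle U_i,U_k\rangle_{H_1}V_j=\delta_{ik}V_j$, each $U_i\otimes_1 V_j$ is Hilbert--Schmidt with $\|U_i\otimes_1 V_j\|_{HS}=1$, and a one-line computation gives $\langle U_i\otimes_1 V_j,U_k\otimes_1 V_l\rangle_{HS}=\delta_{ik}\delta_{jl}$, so the system is orthonormal; it is countable since $H_1,H_2$ are separable. For completeness of the system, note $\langle A,U_i\otimes_1 V_j\rangle_{HS}=\sum_m\langle AU_m,(U_i\otimes_1 V_j)U_m\rangle_{H_2}=\langle AU_i,V_j\rangle_{H_2}$ for every $A$; hence if $A$ is orthogonal to all $U_i\otimes_1 V_j$ then $\langle AU_i,V_j\rangle_{H_2}=0$ for all $i,j$, so $AU_i=0$ for all $i$, so $A=0$. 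Therefore $\{U_i\otimes_1 V_j\}$ is a CONS, and $\mathscr{B}_{HS}(H_1,H_2)$ is a separable Hilbert space. The routine parts are the orthonormality and the final density argument; the step needing genuine care is the first one --- showing the defining series converges absolutely and independently of the chosen bases --- which is exactly where the Tonelli/Fubini interchange of double sums does the real work, with completeness and separability then following from the operator-norm domination and standard Hilbert-space reasoning.
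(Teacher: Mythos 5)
The paper does not prove this statement at all: it is imported as background, with a pointer to Theorem 4.4.5 of the Hsing--Eubank reference, so there is no in-paper argument to compare yours against. Your self-contained proof is correct and is the standard textbook argument: basis-independence of $\|\cdot\|_{HS}$ via the Tonelli interchange $\sum_i\|AU_i\|_{H_2}^2=\sum_j\|A^*V_j\|_{H_1}^2$, completeness via the domination $\|\cdot\|_{\mathrm{op}}\le\|\cdot\|_{HS}$ plus a finite-section limit, and totality of $\{U_i\otimes_1 V_j\}$ from the identity $\langle A,U_i\otimes_1 V_j\rangle_{HS}=\langle AU_i,V_j\rangle_{H_2}$. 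Two cosmetic points only: separability of $H_1$ and $H_2$ is used (and is implicit in the paper's FDA setting) to make the orthonormal system countable; and in the last step $AU_i=0$ for all $i$ gives $A=0$ on the dense span of $\{U_i\}$, with boundedness of $A$ then forcing $A=0$ everywhere --- worth stating explicitly, but not a gap.
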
 

As a result, the following representation of Hilbert-Schmidt operators and their norm will be useful.

\begin{lemma}
\label{lemma:HS norm}
Let $\{U_{i}\}_{i=1}^{\infty},\{V_{j}\}_{j=1}^{\infty}$ be a CONS for $H_1,H_2$, respectively. Then any Hilbert-Schmidt operator $T\in \mathscr{B}_{HS}(H_1,H_2)$ can be decomposed as
\begin{eqnarray}
T= \sum_{i,j} \lambda_{ji} U_{i} \otimes V_{j}, \text{  where   } \|T\|_{HS}^2= \sum_{i,j} \lambda_{ji}^2.
\end{eqnarray}
\end{lemma}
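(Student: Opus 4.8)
The plan is to obtain the lemma as an immediate coordinate form of Theorem~\ref{theorem:CONS} combined with the standard Fourier-expansion machinery of separable Hilbert spaces. By Theorem~\ref{theorem:CONS}, the family $\{U_i \otimes_1 V_j\}_{i,j\geq 1}$ is a complete orthonormal system for $\mathscr{B}_{HS}(H_1,H_2)$ under $\langle\cdot,\cdot\rangle_{HS}$. The abstract fact I would invoke is: if $\{e_k\}_{k\geq 1}$ is a CONS of a separable Hilbert space $\mathcal{H}$, then every $x\in\mathcal{H}$ satisfies $x=\sum_k \langle x,e_k\rangle\, e_k$ with unconditional convergence, and $\|x\|^2=\sum_k |\langle x,e_k\rangle|^2$ (Parseval). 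Applying this with $\mathcal{H}=\mathscr{B}_{HS}(H_1,H_2)$ and the doubly-indexed orthonormal family above gives $T=\sum_{i,j}\lambda_{ij}\,U_i\otimes_1 V_j$ and $\|T\|_{HS}^2=\sum_{i,j}\lambda_{ij}^2$, where $\lambda_{ij}:=\langle T, U_i\otimes_1 V_j\rangle_{HS}$ and the unordered double sums are interpreted as limits over finite subsets of $\mathbb{N}\times\mathbb{N}$ (equivalently, under any fixed enumeration, by unconditional convergence).

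It then remains to identify the coefficients in a usable form. Using the definitions of the HS inner product and of the outer-product operator, for the CONS $\{U_k\}$ of $H_1$ one computes
\[
\lambda_{ij}=\langle T, U_i\otimes_1 V_j\rangle_{HS}=\sum_k \langle T U_k, (U_i\otimes_1 V_j)U_k\rangle_{H_2}=\sum_k \langle U_i,U_k\rangle_{H_1}\langle T U_k, V_j\rangle_{H_2}=\langle T U_i, V_j\rangle_{H_2},
\]
the last step using orthonormality of $\{U_k\}$. Summing squares recovers $\sum_{i,j}\lambda_{ij}^2=\sum_i \|T U_i\|_{H_2}^2$, which is finite precisely because $T$ is Hilbert--Schmidt; this gives an independent verification of Parseval's identity in the present setting.

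I do not expect a genuine obstacle: the statement is essentially Theorem~\ref{theorem:CONS} written in coordinates. The only points deserving care are (i) the meaning of the unordered double sum $\sum_{i,j}$ --- convergence in $\|\cdot\|_{HS}$ is unconditional, so the expansion is independent of the enumeration of $\mathbb{N}\times\mathbb{N}$ --- and (ii) the interchange of the sum over $k$ with the inner products in the computation of $\lambda_{ij}$, which is justified by Cauchy--Schwarz together with $\sum_k\|T U_k\|_{H_2}^2<\infty$.
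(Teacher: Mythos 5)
Your proof is correct and takes the same route the paper intends: the paper gives no explicit proof of this lemma, presenting it as an immediate consequence of Theorem~\ref{theorem:CONS} (that $\{U_i\otimes_1 V_j\}$ is a CONS for $\mathscr{B}_{HS}(H_1,H_2)$), which is precisely the Fourier-expansion-plus-Parseval argument you spell out. Your identification $\lambda_{ij}=\langle TU_i,V_j\rangle_{H_2}$ and the remarks on unconditional convergence are correct and go slightly beyond what the paper records.
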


\begin{figure*}[t!]

\subfloat[samples paths]{%
\includegraphics[height=0.12\textheight]{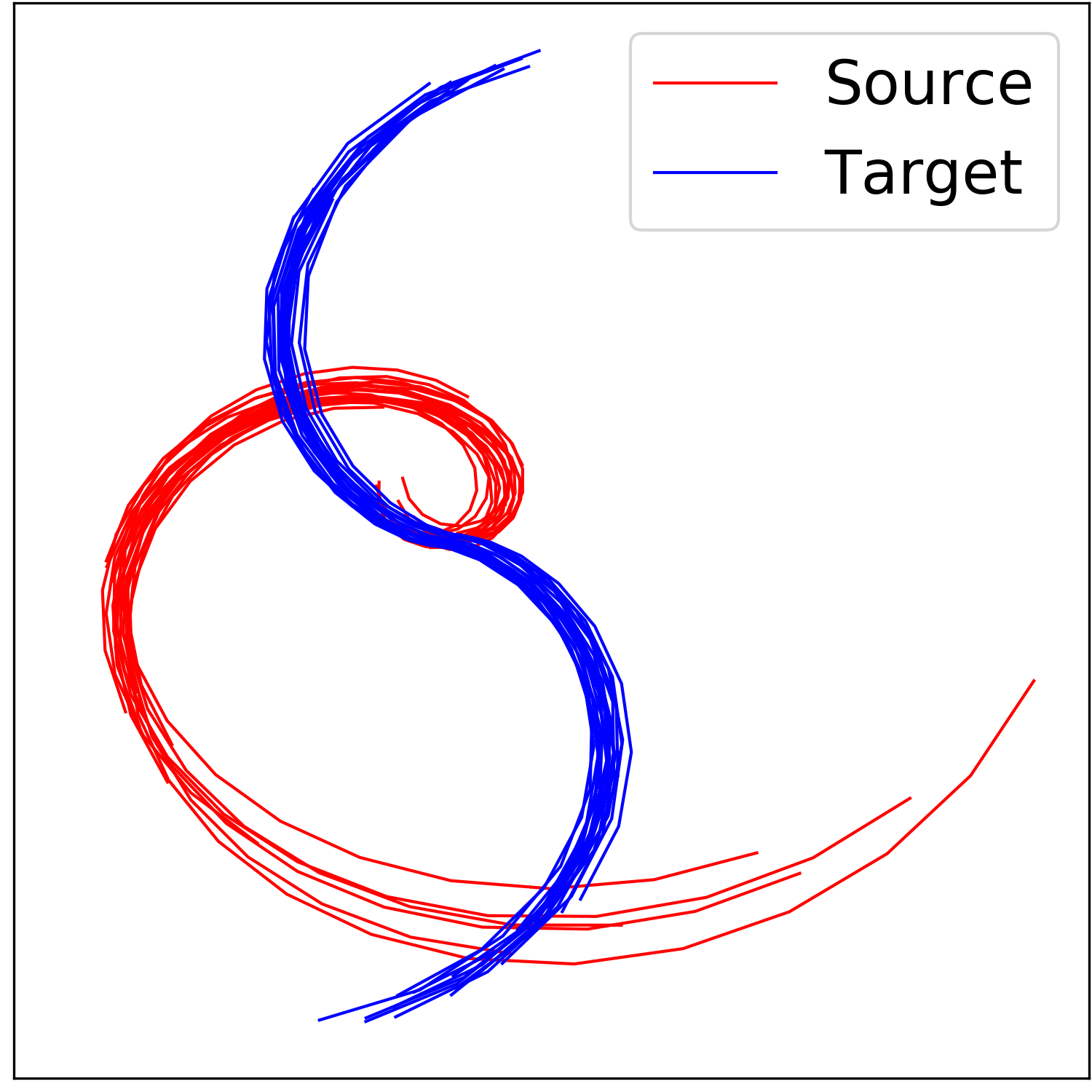}}
\hspace*{\fill}
\subfloat[mapping of individual sample paths]{%
\includegraphics[height=0.12\textheight]{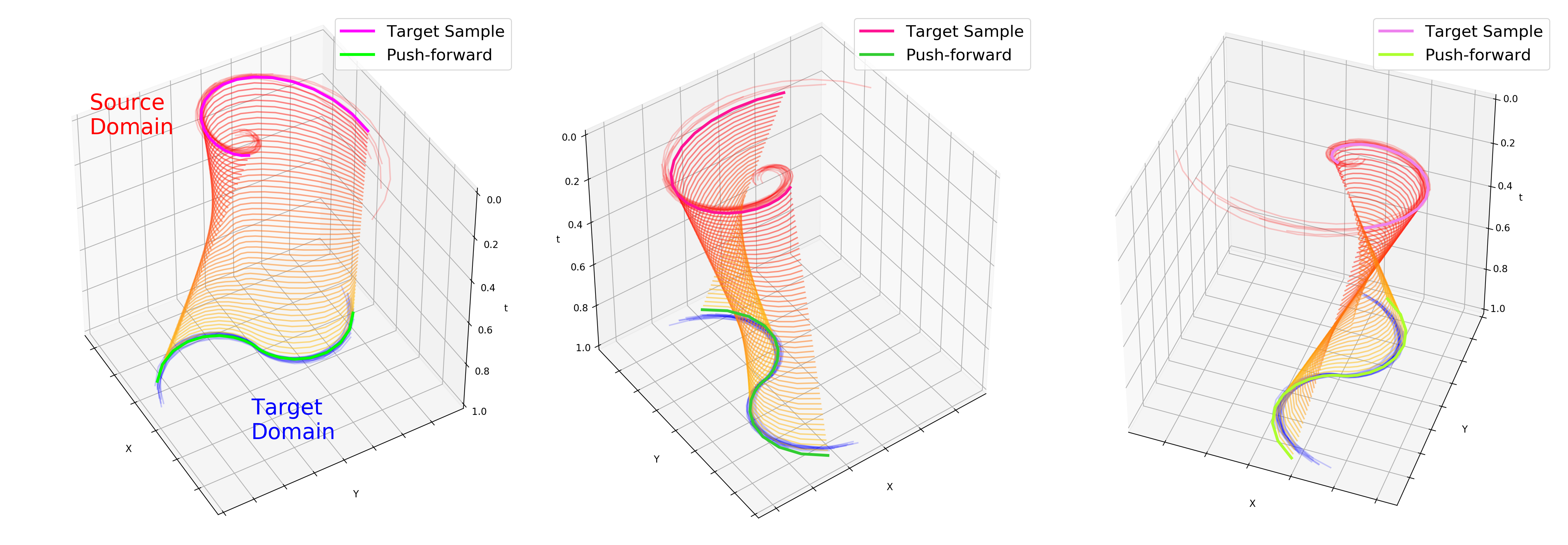}}
\hspace*{\fill}
\subfloat[The pushforward]{%
\includegraphics[height=0.12\textheight]{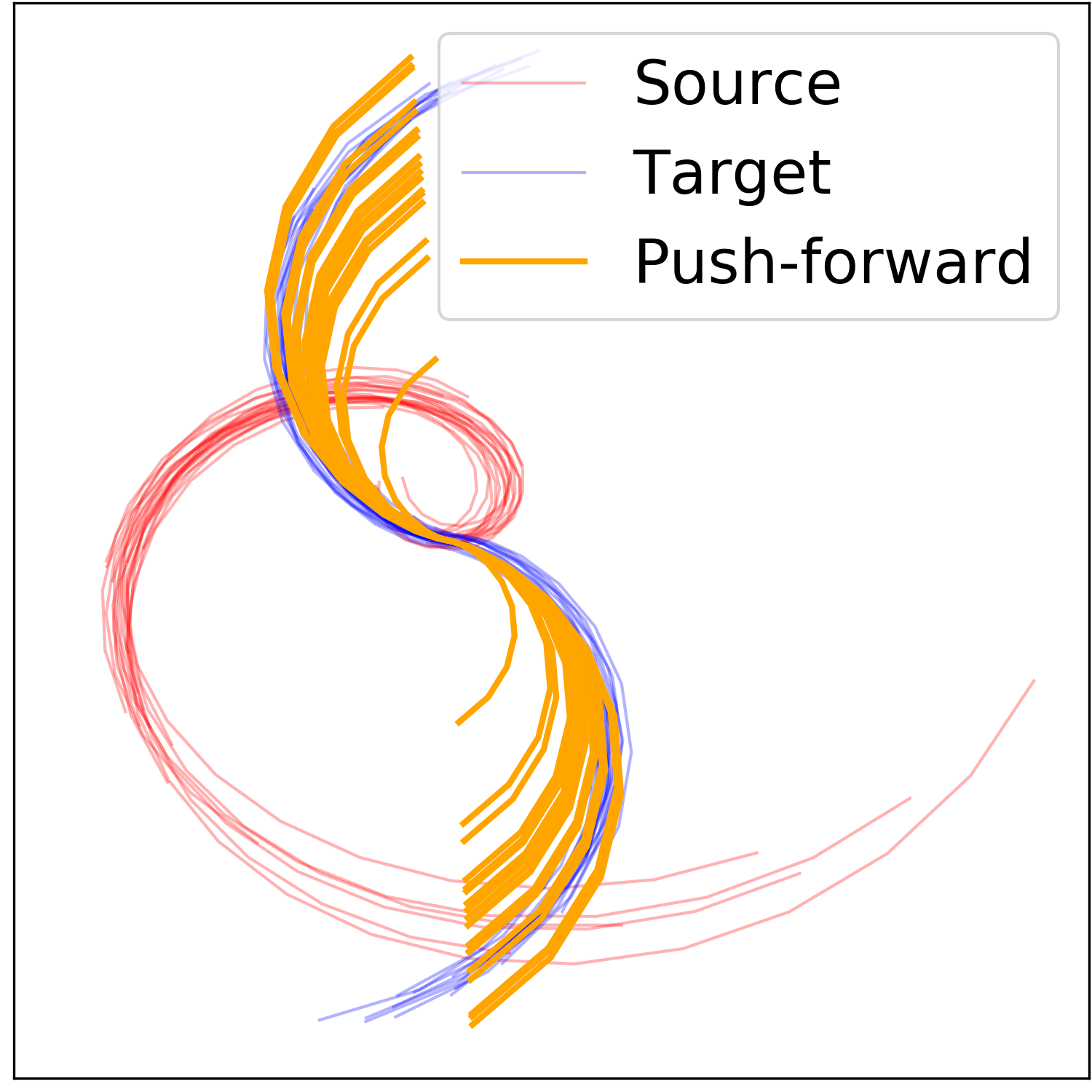}}

\begin{subfigure}[b]{\textwidth}
\centering
\includegraphics[width=\linewidth]{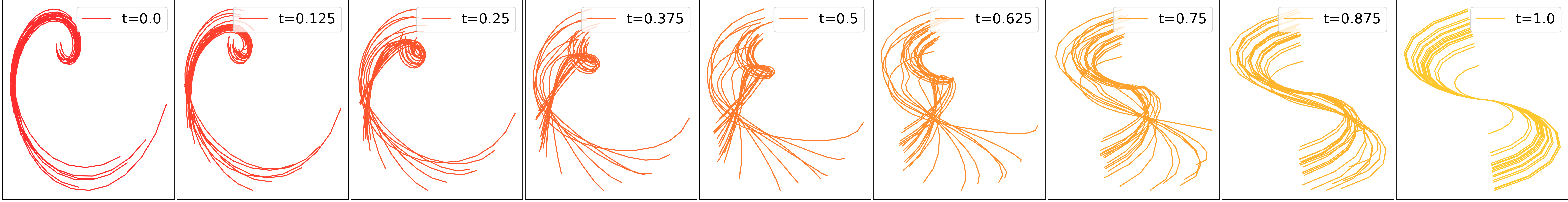}%
\caption{the geodesic parameterized by $t\in [0,1]$ connecting the source and target domains.}
\end{subfigure}
\caption{Illustration of an estimated pushforward map that sends sample paths from the source of {\color{red} Swiss-roll curves} to the target of {\color{blue} Wave curves}. (a) Datasets are a collection of continuous sample paths. (b) Three individual samples are mapped from source to target. (c) Resulting curves obtained by applying the pushforward map to the source's samples. (d) Illustration of the resulting geodesic between source and target distributions.}
\label{fig:one}
\end{figure*}

\section{Functional optimal transport: optimization and convergence analysis}
\label{section:OT on Hilbert}

We are ready to introduce a formulation for the functional optimal transport (FOT) problem, by reposing on the foundation of functional data analysis described earlier. Then we shall introduce estimators for solving the functional optimal transport problem from empirical data. Since we are formulating an infinite dimensional optimization problem, care must be taken to ensure the existence, uniqueness, and consistency of our proposed estimators, given sampled functions from source and target domains.


 
Given Hilbert spaces of functions $H_1$ and $H_2$, which are endowed with Borel probability measures $\mu$ and $\nu$, respectively, we wish to find a Borel map $\Gamma: H_1 \mapsto H_2$ such that $\nu$ is the pushforward measure of $\mu$ by $\Gamma$. Expressing this statement probabilistically, if $f \sim \mu$ represents a random element of $H_1$, then $\Gamma f $ is a random element of $H_2$ and $\Gamma f \sim \nu$. As noted in Section~\ref{section:prelim_knowledge}, such a map may not always exist. Thus one is interested in finding a map by which the resulting pushforward measure approximates as well as possible the target distribution. This motivates the following formulation:
\begin{eqnarray}
\label{eq:gamma}
\Gamma := \arg\inf_{T \in \mathscr{B}_{HS}(H_1, H_2)} W_2(T_\# \mu,\nu),
\end{eqnarray}
where $T_\# \mu$ is the pushforward of $\mu$ by $T$, and $W_2$ is the 2-Wasserstein distance of probability measures on the metric space $(H_2, \|\cdot\|_{H_2})$ (we suppress the dependence on $H_2$ of the notation $W_2$ for ease of notations because we only consider 2-Wasserstein distance on $H_2$ in this work). The space of solutions of Eq.~\eqref{eq:gamma} may still be large and the problem itself might be ill-posed. Thus we consider imposing a shrinkage penalty, which leads to the problem of finding the infimum of the following objective function $J: \Bspace \rightarrow \mathbb{R}_+$:
\begin{eqnarray}
\label{eq:shrinkage}
\inf_{T \in \Bspace} J(T),\; J(T) :=  W_2^2(T_\# \mu,\nu) + \eta \|T\|_{HS}^2,
\end{eqnarray}
where $\eta > 0$. It is natural to study the objective function $J$ and ask if it has a unique minimizer. 
To characterize this problem precisely, we shall place a mild condition on the moments of $\mu$ and $\nu$, which are typically assumed for probability measures on Hilbert spaces \citep{lei2020convergence}. 
\begin{enumerate}
    \item[(A.1)] Assume \begin{equation}\label{cond:finite2moments}
    E_{f_1 \sim \mu} \|f_1\|_{H_1}^2 < \infty, \quad E_{f_2 \sim \nu} \|f_2\|_{H_2}^2 < \infty.
\end{equation}
\end{enumerate}

Several key properties of objective function \eqref{eq:shrinkage} can be established as follows. 
\begin{lemma}\label{lem:propsofJ} Under assumption (A.1), the following statements hold.
    \begin{enumerate}[itemsep=-0.5mm]
        \item[(i)] $W_2(T \# \mu, \nu)$ is a Lipschitz continuous function of $T \in \Bspace(H_1, H_2)$, which implies that 
        $J:\Bspace(H_1, H_2) \rightarrow \mathbb{R}_+$ is also continuous.
        \item[(ii)] $J$ is a strictly convex function.
        \item[(iii)] There are constants $C_1,C_2 >0$ such that $J(T)\leq C_1 \|T \|_{HS}^2 + C_2\;\; \forall T\in \Bspace(H_1, H_2)$.
        \item[(iv)] $\lim_{\|T\|_{HS} \to \infty} J(T) = \infty$.
    \end{enumerate}
\end{lemma}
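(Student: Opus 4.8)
The plan is to dispatch the four claims in the order (iv), (iii), (i), (ii), since only the last requires a genuinely new idea; throughout write $m_1:=E_{f_1\sim\mu}\|f_1\|_{H_1}^2$ and $m_2:=E_{f_2\sim\nu}\|f_2\|_{H_2}^2$, both finite by \eqref{cond:finite2moments}. Part (iv) is immediate: $W_2^2(T_\#\mu,\nu)\ge 0$, so $J(T)\ge\eta\|T\|_{HS}^2\to\infty$ as $\|T\|_{HS}\to\infty$. For (iii) I would insert the Dirac mass $\delta_0$ at the origin of $H_2$ and use the triangle inequality for $W_2$ on $\mathcal{P}_2(H_2)$:
\[
W_2(T_\#\mu,\nu)\ \le\ W_2(T_\#\mu,\delta_0)+W_2(\delta_0,\nu)\ =\ \Big(\int_{H_1}\|Tf\|_{H_2}^2\,d\mu(f)\Big)^{1/2}+m_2^{1/2}\ \le\ \|T\|_{HS}\,m_1^{1/2}+m_2^{1/2},
\]
using $\|Tf\|_{H_2}\le\|T\|_{\mathrm{op}}\|f\|_{H_1}\le\|T\|_{HS}\|f\|_{H_1}$. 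Squaring and absorbing the cross term via $2ab\le a^2+b^2$ yields $J(T)\le C_1\|T\|_{HS}^2+C_2$ with, e.g., $C_1=m_1+\sqrt{m_1m_2}+\eta$ and $C_2=\sqrt{m_1m_2}+m_2$.

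For (i), the only estimate needed is that the ``diagonal'' coupling $(T\times S)_\#\mu$ lies in $\Pi(T_\#\mu,S_\#\mu)$, hence is admissible in the Kantorovich problem, so
\[
W_2(T_\#\mu,S_\#\mu)\ \le\ \Big(\int_{H_1}\|Tf-Sf\|_{H_2}^2\,d\mu(f)\Big)^{1/2}\ \le\ m_1^{1/2}\,\|T-S\|_{HS}.
\]
Combined with $|W_2(T_\#\mu,\nu)-W_2(S_\#\mu,\nu)|\le W_2(T_\#\mu,S_\#\mu)$ (triangle inequality again), this shows $T\mapsto W_2(T_\#\mu,\nu)$ is globally Lipschitz on $\Bspace(H_1,H_2)$ with constant $m_1^{1/2}$; composing with $t\mapsto t^2$, which is locally Lipschitz, and adding the continuous map $T\mapsto\eta\|T\|_{HS}^2$, we obtain continuity of $J$.

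For (ii), decompose $J=J_0+\eta\|\cdot\|_{HS}^2$ with $J_0(T):=W_2^2(T_\#\mu,\nu)$. Since $T\mapsto\eta\|T\|_{HS}^2$ is strictly convex on the Hilbert space $\Bspace(H_1,H_2)$ (the Jensen defect along a segment equals $\eta\lambda(1-\lambda)\|T_0-T_1\|_{HS}^2>0$), it suffices to prove $J_0$ is convex. I would start from the representation
\[
W_2^2(T_\#\mu,\nu)\ =\ \inf_{\gamma\in\Pi(\mu,\nu)}\int_{H_1\times H_2}\|Tf-g\|_{H_2}^2\,d\gamma(f,g),
\]
obtained by disintegrating an optimal plan for $(T_\#\mu,\nu)$ over its $H_2$-marginal and pulling it back along $T$ (and, for ``$\le$'', pushing an arbitrary $\gamma$ forward by $T\times\mathrm{id}$; separability of the Hilbert spaces makes the disintegration legitimate). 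For each fixed $\gamma$ the map $T\mapsto\int\|Tf-g\|^2\,d\gamma$ is a convex quadratic, and one would then try to pass convexity through the infimum by choosing, for an interpolant $T_\lambda=(1-\lambda)T_0+\lambda T_1$, a coupling built from the optimal plans at $T_0$ and $T_1$ that undercuts the convex combination of their optimal costs. \emph{This is the step I expect to be the main obstacle}: the pushforward $T\mapsto T_\#\mu$ is nonlinear in $T$, so convexity of $J_0$ does not follow merely by composing the (jointly convex in its two measure arguments) functional $(\alpha,\beta)\mapsto W_2^2(\alpha,\beta)$ with it, nor from the fact that $J_0$ is an infimum of convex-in-$T$ maps; making it go through requires genuinely exploiting the joint structure of $(T,\gamma)\mapsto\int\|Tf-g\|^2\,d\gamma$ over the convex set $\Pi(\mu,\nu)$ (or, failing that, mild additional structural assumptions on $\mu,\nu$). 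Once $J_0$ is shown convex, strict convexity of $J$ is inherited from the $\eta\|\cdot\|_{HS}^2$ term.
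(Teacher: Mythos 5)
Your treatments of (i), (iii) and (iv) are correct and essentially the paper's own arguments: for (i) the paper uses the same identity coupling, triangle inequality, and the bound of the operator norm by the Hilbert--Schmidt norm, arriving at the same Lipschitz constant $(E_{f\sim\mu}\|f\|_{H_1}^2)^{1/2}$; (iv) is the same one-line coercivity remark. For (iii) you route through $W_2(T_\#\mu,\delta_0)+W_2(\delta_0,\nu)$ whereas the paper bounds $\int\|Tf_1-f_2\|^2\,d\pi$ directly by $2\|Tf_1\|^2+2\|f_2\|^2$ for an arbitrary coupling $\pi$; the constants differ slightly but both are valid.

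For (ii), the obstacle you flag is genuine, and it is precisely the step the paper gets wrong. The paper's proof first shows that $\nu_1\mapsto W_2^2(\nu_1,\nu)$ is convex under mixtures of measures (correct), and then asserts the identity $((1-\lambda)T_1+\lambda T_2)_\#\mu=(1-\lambda)(T_{1\#}\mu)+\lambda(T_{2\#}\mu)$. That identity is false: the left-hand side is the law of $(1-\lambda)T_1f+\lambda T_2f$ for $f\sim\mu$, while the right-hand side is the mixture obtained by sampling from $T_{1\#}\mu$ with probability $1-\lambda$ and from $T_{2\#}\mu$ otherwise. The map $T\mapsto T_\#\mu$ is not affine with respect to the mixture structure on measures, which is exactly the nonlinearity you identified. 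Moreover, no repair is possible in the stated generality, because statement (ii) is false for every $\eta>0$. Take a unit vector $e\in H_1=H_2$, let $\mu=\nu=\frac12\delta_{-e}+\frac12\delta_{e}$ (finite second moments), and set $T_1=c\,(e\otimes_1 e)$, $T_2=-T_1$ with $0<c<2/(1+\eta)$; these are rank-one Hilbert--Schmidt operators with $\|T_1\|_{HS}=\|T_2\|_{HS}=c$. Then $T_{1\#}\mu=T_{2\#}\mu=\frac12\delta_{-ce}+\frac12\delta_{ce}$, so $J(T_1)=J(T_2)=(1-c)^2+\eta c^2<1$, while the midpoint $\frac12(T_1+T_2)=0$ gives $0_\#\mu=\delta_0$ and $J(0)=W_2^2(\delta_0,\nu)=1$, violating midpoint convexity. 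So your infimum representation $W_2^2(T_\#\mu,\nu)=\inf_{\gamma\in\Pi(\mu,\nu)}\int\|Tf-g\|^2\,d\gamma$ (which is correct, via the disintegration you describe) cannot be pushed through the infimum: an infimum of convex-in-$T$ functions need not be convex, the integrand is not jointly convex in $(T,\gamma)$, and the counterexample shows the conclusion itself fails. Convexity of $J$ would require additional structure on $\mu$, $\nu$, or the class of admissible operators; as written, both your sketch and the paper's proof of (ii) cannot be completed, and the error propagates to the uniqueness claims downstream that invoke strict convexity.
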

Thanks to Lemma~\ref{lem:propsofJ}, the existence and uniqueness properties can be established.

\begin{theorem}\label{thm:existuniq}
    Given (A.1), there exists a unique minimizer $T_0$ for problem \eqref{eq:shrinkage}.
\end{theorem}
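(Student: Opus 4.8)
The plan is to run the direct method of the calculus of variations, feeding in the four properties of $J$ collected in Lemma~\ref{lem:propsofJ}. The structural fact that makes this work is that $\Bspace(H_1,H_2)$ is itself a (separable) Hilbert space by Theorem~\ref{theorem:CONS}, hence reflexive; so although closed balls are not norm-compact in this infinite-dimensional setting, bounded sequences still admit weakly convergent subsequences. Coercivity will force a minimizing sequence to be bounded, weak compactness will produce a candidate limit, convexity plus continuity will upgrade to weak lower semicontinuity and certify that the limit is a minimizer, and strict convexity will pin it down uniquely.

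First I would note that $m := \inf_{T \in \Bspace} J(T)$ satisfies $0 \le m < \infty$: nonnegativity is built into the definition, and $m \le J(0) \le C_2 < \infty$ by Lemma~\ref{lem:propsofJ}(iii) (alternatively, $J(0) = W_2^2((0)_\#\mu,\nu)$ is finite thanks to the moment assumption~\eqref{cond:finite2moments}). Take a minimizing sequence $(T_n)_{n\ge 1}$ with $J(T_n) \downarrow m$. Then $(T_n)$ is bounded in $\Bspace(H_1,H_2)$: if not, some subsequence would have $\|T_{n_k}\| \to \infty$, whence $J(T_{n_k}) \to \infty$ by Lemma~\ref{lem:propsofJ}(iv), contradicting $J(T_n) \to m < \infty$. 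By reflexivity of the Hilbert space $\Bspace(H_1,H_2)$, a subsequence converges weakly, $T_{n_k} \rightharpoonup T_0$, to some $T_0 \in \Bspace(H_1,H_2)$.

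Next I would show $T_0$ is a minimizer. By Lemma~\ref{lem:propsofJ}(ii) $J$ is convex, and by Lemma~\ref{lem:propsofJ}(i) it is norm-continuous, hence norm-lower-semicontinuous; a convex functional that is lower semicontinuous for the strong topology is lower semicontinuous for the weak topology, since each sublevel set $\{T : J(T) \le c\}$ is convex and strongly closed, therefore weakly closed (Mazur's lemma / Hahn--Banach separation). Consequently $J(T_0) \le \liminf_{k} J(T_{n_k}) = m$, which forces $J(T_0) = m$, so $T_0$ attains the infimum. For uniqueness, suppose $T_0 \ne T_1$ both minimize $J$. Then $\tfrac12(T_0 + T_1) \in \Bspace(H_1,H_2)$ and strict convexity, Lemma~\ref{lem:propsofJ}(ii), gives $J\bigl(\tfrac12(T_0+T_1)\bigr) < \tfrac12 J(T_0) + \tfrac12 J(T_1) = m$, contradicting the definition of $m$. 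Hence the minimizer is unique, which we call $T_0$.

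The only genuinely non-routine step is the passage from norm-continuity to weak lower semicontinuity, which is unavoidable because in infinite dimensions one cannot extract a norm-convergent subsequence from a bounded minimizing sequence; the rest is bookkeeping with the already-established properties of $J$. A small point worth verifying is that the weak convergence $T_{n_k} \rightharpoonup T_0$ is genuinely in $\Bspace$ (equivalently, convergence of all $\langle T_{n_k}, A\rangle_{HS}$); testing against the CONS $\{U_i \otimes_1 V_j\}$ of Lemma~\ref{lemma:HS norm} shows this is just coefficientwise convergence of the expansions together with uniform $\ell^2$ bounds, so no subtlety is hidden there.
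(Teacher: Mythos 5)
Your proof is correct and follows essentially the same route as the paper's: both run the direct method, using coercivity (Lemma~\ref{lem:propsofJ}(iv)) to bound a minimizing sequence, weak sequential compactness of bounded sets in the Hilbert space $\Bspace(H_1,H_2)$ to extract a weak limit, Mazur's theorem to upgrade convexity plus strong continuity to weak lower semicontinuity, and strict convexity for uniqueness. No substantive differences.
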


The challenge of solving~\eqref{eq:shrinkage} is that this is an optimization problem in the infinite-dimensional space of operators $\Bspace$. To alleviate this complexity, we reduce the problem to a suitable finite-dimensional approximation. We follow techniques in numerical functional analysis by taking a finite number of basis functions. 

In particular, for some finite $K_1, K_2$, let 
$\BKspace = Span(\{U_i \otimes V_j: i=\overline{1, K_1}, j = \overline{1, K_2}\})$, where $K = (K_1, K_2)$. This yields the optimization problem of $J(T)$ over the space $T \in \BKspace$. The following result validates the choice of approximate optimization.

\begin{lemma}\label{lem:convergeTK}
For each $K=(K_1,K_2)$, there exists a unique minimizer $T_K$ of $J$ over $\BKspace$. Moreover, $T_K \to T_0$ in $\|\cdot\|_{HS}$ as $K_1, K_2\to \infty$.
\end{lemma}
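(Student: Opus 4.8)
The plan is to handle the two claims separately. The existence and uniqueness of $T_K$ is a finite-dimensional instance of the reasoning behind Theorem~\ref{thm:existuniq}: by Theorem~\ref{theorem:CONS} the family $\{U_i\otimes_1 V_j\}_{i\le K_1,\,j\le K_2}$ is orthonormal in $\Bspace(H_1,H_2)$, so $\BKspace$ is a closed, convex, $K_1K_2$-dimensional subspace; the restriction $J|_{\BKspace}$ is continuous, strictly convex and coercive by Lemma~\ref{lem:propsofJ}(i),(ii),(iv); hence the sublevel set $\{T\in\BKspace:J(T)\le J(0)\}$ is compact, $J$ attains a global minimum $T_K$ over $\BKspace$ on it, and strict convexity makes $T_K$ unique.

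For the convergence, I would first build a comparison sequence inside $\BKspace$. Expanding $T_0=\sum_{i,j}\lambda_{ij}\,U_i\otimes_1 V_j$ with $\sum_{i,j}\lambda_{ij}^2=\|T_0\|_{HS}^2<\infty$ (Lemma~\ref{lemma:HS norm}), let $S_K$ be the orthogonal projection of $T_0$ onto $\BKspace$, i.e.\ the partial sum over $i\le K_1,\ j\le K_2$. Then $\|T_0-S_K\|_{HS}^2$ is the tail of the convergent series $\sum_{i,j}\lambda_{ij}^2$ and so $S_K\to T_0$ in $\Bspace$. Sandwiching then gives $J(T_0)\le J(T_K)\le J(S_K)$: the left inequality because $T_K\in\BKspace\subseteq\Bspace$ and $T_0$ is the global minimizer (Theorem~\ref{thm:existuniq}), the right because $S_K\in\BKspace$ and $T_K$ minimizes $J$ over $\BKspace$. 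Since $J(S_K)\to J(T_0)$ by continuity of $J$, we conclude $J(T_K)\to J(T_0)$, and coercivity (Lemma~\ref{lem:propsofJ}(iv)) then forces $\{T_K\}$ to be bounded in $\Bspace$.

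It remains to upgrade $J(T_K)\to J(T_0)$ to $T_K\to T_0$. Since $\Bspace$ is a separable Hilbert space (Theorem~\ref{theorem:CONS}) it is reflexive, so every subsequence of the bounded sequence $\{T_K\}$ has a weakly convergent further subsequence $T_{K'}\rightharpoonup\overline{T}$; weak lower semicontinuity of $J$ (a convex, continuous functional on a Hilbert space is weakly lsc) yields $J(\overline{T})\le\liminf J(T_{K'})=J(T_0)=\min J$, so $\overline{T}=T_0$ by uniqueness, hence $T_K\rightharpoonup T_0$. Because weak convergence together with convergence of norms implies strong convergence in a Hilbert space, it suffices to prove $\|T_K\|_{HS}\to\|T_0\|_{HS}$. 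One inequality, $\|T_0\|_{HS}\le\liminf\|T_K\|_{HS}$, is weak lower semicontinuity of the norm; for the other, write $\eta\|T_K\|_{HS}^2=J(T_K)-W_2^2\big((T_K)_\#\mu,\nu\big)$ and use $\liminf_K W_2^2\big((T_K)_\#\mu,\nu\big)\ge W_2^2\big((T_0)_\#\mu,\nu\big)$ together with $J(T_K)\to J(T_0)$ to get $\eta\limsup_K\|T_K\|_{HS}^2\le J(T_0)-W_2^2\big((T_0)_\#\mu,\nu\big)=\eta\|T_0\|_{HS}^2$.

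The main obstacle is exactly that lower-semicontinuity input: $\liminf_K W_2^2\big((T_K)_\#\mu,\nu\big)\ge W_2^2\big((T_0)_\#\mu,\nu\big)$ along $T_K\rightharpoonup T_0$. The route I would take is: pairing $T_K$ against $f\otimes_1 h$ in the HS inner product shows $T_Kf\rightharpoonup T_0f$ in $H_2$ for every fixed $f\in H_1$, whence $(T_K)_\#\mu$ converges to $(T_0)_\#\mu$ in the narrow topology induced by the weak topology of $H_2$; since $(x,y)\mapsto\|x-y\|_{H_2}^2$ is convex and norm-continuous it is weakly lower semicontinuous, and the Kantorovich value is lower semicontinuous under narrow convergence of the marginals (the target marginal $\nu$ being fixed). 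Making this rigorous — handling the non-metrizability of the weak topology, e.g.\ by restricting to the bounded sets where it is metrizable and invoking the uniform second-moment bound \eqref{cond:finite2moments} to get tightness — is where the real work lies. (If the proof of Lemma~\ref{lem:propsofJ}(ii) in fact shows $J$ to be $2\eta$-strongly convex, the regularizer being $2\eta$-strongly convex, this last part is unnecessary: one gets $\eta\|T_K-T_0\|_{HS}^2\le J(T_K)-J(T_0)\to0$ directly.)
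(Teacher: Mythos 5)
Your setup---existence and uniqueness of $T_K$ over the closed convex subspace $\BKspace$, the projected comparison point $S_K$, and the sandwich $J(T_0)\le J(T_K)\le J(S_K)\to J(T_0)$---is exactly the paper's argument and is correct. Where you diverge is in upgrading $J(T_K)\to J(T_0)$ to $T_K\to T_0$: the paper never passes to weak limits. Its proof of Lemma~\ref{lem:propsofJ}(ii) shows that $\Phi^2(T):=W_2^2(T_\#\mu,\nu)$ is convex, so $J=\Phi^2+\eta\|\cdot\|_{HS}^2$ is $2\eta$-strongly convex, and the parallelogram identity yields
\[
\frac{\eta}{2}\|T_K-T_0\|_{HS}^2 \;\le\; J(T_K)+J(T_0)-2J\Bigl(\tfrac{1}{2}(T_K+T_0)\Bigr)\;\le\; J(T_K)-J(T_0)\;\longrightarrow\;0,
\]
the last inequality holding because $T_0$ is the global minimizer so $J\bigl(\tfrac{1}{2}(T_K+T_0)\bigr)\ge J(T_0)$. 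In other words, the parenthetical remark at the very end of your write-up \emph{is} the paper's entire convergence argument; it should be promoted to the main line, and the weak-compactness machinery dropped.

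Regarding the route you actually pursued: the step you flag as ``where the real work lies''---lower semicontinuity of $T\mapsto W_2^2(T_\#\mu,\nu)$ along weakly convergent operator sequences---does not require narrow convergence of pushforwards, metrizability workarounds, or tightness. Since $\Phi^2$ is convex (Lemma~\ref{lem:propsofJ}(ii)) and norm-continuous (Lemma~\ref{lem:propsofJ}(i)), Mazur's theorem applied to its sublevel sets gives weak lower semicontinuity immediately; the paper invokes precisely this fact in the proof of Theorem~\ref{thm:existuniq}. So your route can be closed, and the norm-convergence bookkeeping you do (Radon--Riesz plus the $\limsup$ bound on $\eta\|T_K\|_{HS}^2$) is sound, but as written the key lower-semicontinuity input is asserted rather than proved, and the whole detour is unnecessary given the strong convexity already available.
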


\paragraph{Consistency of M-estimator} 
In practice, we are given i.i.d. samples $f_{11}, f_{12}, \dots, f_{1n_1}$ from $\mu$ and $f_{21}, f_{22}, \dots, f_{2n_2}$ from $\nu$, the empirical version of our optimization problem becomes:
\begin{equation}
    \label{eq:empirical_objective}
  \inf_{T\in \Bspace}\hat{J}_{n}(T), \;\; \hat{J}_{n}(T) := W_2^2(T_\# \hat{\mu}_{n_1}, \hat{\nu}_{n_2}) + \eta \|T \|_{HS}^2,
\end{equation}
where $\hat{\mu}_{n_1} = \dfrac{1}{n_1}\sum_{l=1}^{n_1} \delta_{f_{1,l}}$ and $\hat{\nu}_{n_2} = \dfrac{1}{n_2}\sum_{k=1}^{n_2} \delta_{f_{2,k}}$ are the empirical measures, and $n = (n_1, n_2)$. 
We proceed to show that the minimizer of this problem exists and provides a consistent estimate of the minimizer of  \eqref{eq:shrinkage}. The common technique to establish consistency of M-estimators is via the uniform convergence of objective functions $\hat{J}_n$ to $J$. The technical challenge here is that $\Bspace(H_1, H_2)$ is unbounded and locally non-compact. Thus care must be taken to ensure that the minimizer of~\eqref{eq:empirical_objective} is eventually bounded so that a suitable uniform convergence behavior can be established, as explicated in the following key lemma:  
\begin{lemma}
\label{lem-unifconv}
Under assumption (A.1), the following hold.
\begin{enumerate}[itemsep=-1mm]
\item For any fixed $C_0 >0$, 
\begin{equation}
    \sup_{\|T\|_{HS} \leq C_0} |\hat{J}_n(T) - J(T)| \xrightarrow{P} 0 \quad (n\to \infty).
\end{equation}
\item For any $n, K$, $\hat{J}_n$ has a unique minimizer $\hat{T}_{K, n}$ over $\BKspace$. Moreover, there exists a finite constant $D$ such that $P(\sup_{K} \|\hat{T}_{K, n} \|_{HS} < D) \to 1$ as $n\to \infty$.    
\end{enumerate}
\end{lemma}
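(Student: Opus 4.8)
The plan is to handle the two assertions in turn. First I would fix notation: set $M_1:=E_{f_1\sim\mu}\|f_1\|_{H_1}^2$, $M_2:=E_{f_2\sim\nu}\|f_2\|_{H_2}^2$ (finite by \eqref{cond:finite2moments}) and write $\hat M_{1,n_1}:=\tfrac1{n_1}\sum_l\|f_{1l}\|_{H_1}^2$, $\hat M_{2,n_2}:=\tfrac1{n_2}\sum_k\|f_{2k}\|_{H_2}^2$ for the empirical second moments, which converge to $M_1,M_2$ almost surely by the strong law of large numbers. For \textbf{part 1}, the penalty $\eta\|T\|_{HS}^2$ cancels, so $\hat J_n(T)-J(T)=W_2^2(T_\#\hat\mu_{n_1},\hat\nu_{n_2})-W_2^2(T_\#\mu,\nu)$ and only the squared-Wasserstein terms must be controlled uniformly over $\|T\|_{HS}\le C_0$. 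Abbreviating $a(T):=W_2(T_\#\hat\mu_{n_1},\hat\nu_{n_2})$ and $b(T):=W_2(T_\#\mu,\nu)$, the triangle inequality for $W_2$ gives $|a(T)-b(T)|\le W_2(T_\#\hat\mu_{n_1},T_\#\mu)+W_2(\hat\nu_{n_2},\nu)$, and since $T$ is linear and bounded, pushing an optimal coupling of $\hat\mu_{n_1}$ and $\mu$ forward by $(x,y)\mapsto(Tx,Ty)$ shows $W_2(T_\#\hat\mu_{n_1},T_\#\mu)\le\|T\|_{\mathrm{op}}W_2(\hat\mu_{n_1},\mu)\le\|T\|_{HS}W_2(\hat\mu_{n_1},\mu)$. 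Hence $\sup_{\|T\|_{HS}\le C_0}|a(T)-b(T)|\le C_0 W_2(\hat\mu_{n_1},\mu)+W_2(\hat\nu_{n_2},\nu)$. The right-hand side tends to $0$ in probability because empirical measures on a separable Hilbert space converge weakly a.s.\ (Varadarajan), their second moments converge a.s.\ by the SLLN, and weak convergence together with convergence of second moments implies $W_2$-convergence; thus $\sup_{\|T\|_{HS}\le C_0}|a(T)-b(T)|\xrightarrow{P}0$.

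To pass from the Wasserstein distances to their squares, use $|a^2-b^2|=|a-b|\,(a+b)$ (valid since $a,b\ge0$); it then suffices to show $\sup_{\|T\|_{HS}\le C_0}(a(T)+b(T))$ is $O_P(1)$. Comparing to the Dirac mass at the origin, $b(T)\le W_2(T_\#\mu,\delta_0)+W_2(\delta_0,\nu)\le\|T\|_{\mathrm{op}}M_1^{1/2}+M_2^{1/2}\le C_0 M_1^{1/2}+M_2^{1/2}$, a deterministic bound, and similarly $a(T)\le C_0\hat M_{1,n_1}^{1/2}+\hat M_{2,n_2}^{1/2}$, whose right-hand side converges a.s.\ to $C_0 M_1^{1/2}+M_2^{1/2}$. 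Therefore $\sup_{\|T\|_{HS}\le C_0}(a(T)+b(T))=O_P(1)$, and multiplying by the $o_P(1)$ bound above yields $\sup_{\|T\|_{HS}\le C_0}|\hat J_n(T)-J(T)|\xrightarrow{P}0$, which proves part 1.

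For \textbf{part 2}, observe that $\hat\mu_{n_1},\hat\nu_{n_2}$ are finitely supported, hence trivially satisfy \eqref{cond:finite2moments}, so $\hat J_n$ has exactly the form of $J$ and inherits continuity and strict convexity on $\Bspace(H_1,H_2)$ from Lemma~\ref{lem:propsofJ}; it is also coercive since $\hat J_n(T)\ge\eta\|T\|_{HS}^2$. A continuous, coercive, strictly convex function on the finite-dimensional (closed, convex) subspace $\BKspace$ attains its infimum at a unique point $\hat T_{K,n}$, which gives the first claim. For the uniform bound, $0\in\BKspace$ for every $K$, so
\[
\eta\,\|\hat T_{K,n}\|_{HS}^2\le\hat J_n(\hat T_{K,n})\le\hat J_n(0)=W_2^2(\delta_0,\hat\nu_{n_2})=\hat M_{2,n_2},
\]
a bound independent of $K$; since $\hat M_{2,n_2}\to M_2<\infty$ almost surely, taking $D:=\sqrt{(M_2+1)/\eta}$ yields $P\big(\sup_K\|\hat T_{K,n}\|_{HS}<D\big)\to1$.

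I expect part 2 to be essentially routine, since the regularizer simultaneously supplies coercivity and a $K$-free a priori bound on the minimizers. The main obstacle is part 1: invoking the Wasserstein law of large numbers for empirical measures on an infinite-dimensional separable Hilbert space (only consistency, with no rate, is needed), and then carefully controlling $a(T)+b(T)$ uniformly in $T$ — via comparison with $\delta_0$ and the SLLN for the empirical second moments — so as to upgrade the uniform convergence of the Wasserstein distances to that of their squares.
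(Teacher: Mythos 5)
Your proof is correct, and it diverges from the paper's in two places worth noting. For part 1 the skeleton is identical --- the triangle inequality plus $\|T\|_{op}\le\|T\|_{HS}$ reduces everything to $C_0W_2(\hat\mu_{n_1},\mu)+W_2(\hat\nu_{n_2},\nu)$, and the passage from $W_2$ to $W_2^2$ goes through $|a^2-b^2|=|a-b|(a+b)$ with a uniform bound on $a+b$ --- but you justify $W_2(\hat\mu_{n_1},\mu)\to 0$ via Varadarajan's theorem together with the SLLN for second moments and the characterization of $W_2$-convergence as weak convergence plus convergence of second moments, whereas the paper invokes the sample-complexity bounds of Lei (2020) to get $E[W_2(\hat\mu_{n_1},\mu)]\to 0$ and then passes from $L_1$ to convergence in probability. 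Your route is more elementary and self-contained; the paper's buys explicit rates (depending on the Karhunen--Lo\`eve eigenvalue decay), which it alludes to when discussing how $K$ should grow with $n$, though only consistency is needed for the lemma as stated. For part 2 your argument is genuinely simpler and cleaner than the paper's: since $0\in\BKspace$ for every $K$, the minimizer satisfies $\eta\|\hat T_{K,n}\|_{HS}^2\le\hat J_n(\hat T_{K,n})\le\hat J_n(0)=W_2^2(\delta_0,\hat\nu_{n_2})=\hat M_{2,n_2}$, a $K$-free bound that converges a.s.\ to $E_{f_2\sim\nu}\|f_2\|_{H_2}^2$, and the conclusion follows with $D=\sqrt{(M_2+1)/\eta}$. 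The paper instead establishes uniform convergence of $|\hat\Phi_n(T)-\Phi(T)|/\|T\|$ on $\{\|T\|\ge C_0\}$ and compares against a point $\tilde T_K$ with $\|\tilde T_K\|=C_0$ to bound $\inf_{\BKspace}\hat J_n$ by a constant; this is more machinery than the statement requires, and your comparison with $T=0$ achieves the same uniform-in-$K$ bound in one line. Both arguments rely on the same essential mechanism --- the regularizer $\eta\|T\|^2$ supplies coercivity and hence an a priori bound on any point with controlled objective value.
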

Building upon the above results, we can establish consistency of our $M$-estimator when there are enough samples and the dimensions $K_1, K_2$ are allowed to grow with the sample size:

\begin{theorem}\label{thm:convergence}
The minimizer of Eq. \eqref{eq:empirical_objective} for $\hat{T}_{K,n} \in \BKspace$ is a consistent estimate for the minimizer of Eq. \eqref{eq:shrinkage}. Specifically, $\hat{T}_{K, n} \xrightarrow{P} T_0$ in $\|\cdot \|_{HS}$ as $K_1, K_2, n_1, n_2 \to \infty$.
\end{theorem}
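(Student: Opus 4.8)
The plan is to run the standard $M$-estimation consistency scheme --- uniform convergence of the empirical objectives on the region where the estimators eventually live, together with identifiability of the population minimizer --- but inside the infinite-dimensional Hilbert space $\Bspace(H_1,H_2)$, which is what makes the identifiability step the substantive one. Write $\hat T_{K,n}$, $T_K$, $T_0$ for the minimizers of $\hat J_n$ over $\BKspace$, of $J$ over $\BKspace$, and of $J$ over $\Bspace$; these exist and are unique by Lemma~\ref{lem-unifconv}, Lemma~\ref{lem:convergeTK} and Theorem~\ref{thm:existuniq}. First I would fix a bounded region: Lemma~\ref{lem-unifconv}(2) gives a finite $D$ with $P(\sup_K\|\hat T_{K,n}\|<D)\to 1$, and after enlarging $D$ we may also assume $\|T_0\|<D$ and $\sup_K\|T_K\|<D$ --- the latter because $0\in\BKspace$ forces $\eta\|T_K\|^2\le J(T_K)\le J(0)=E_{f_2\sim\nu}\|f_2\|_{H_2}^2<\infty$ for every $K$. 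Then, with $\Delta_n:=\sup_{\|T\|\le D}|\hat J_n(T)-J(T)|$ and $r_K:=J(T_K)-J(T_0)\ge 0$, on the event $\{\|\hat T_{K,n}\|<D\}$ the minimality of $\hat T_{K,n}$ for $\hat J_n$ over $\BKspace$ and of $T_K$ for $J$ over $\BKspace$ give the sandwich
\[
J(T_K)\;\le\;J(\hat T_{K,n})\;\le\;\hat J_n(\hat T_{K,n})+\Delta_n\;\le\;\hat J_n(T_K)+\Delta_n\;\le\;J(T_K)+2\Delta_n ,
\]
hence $0\le J(\hat T_{K,n})-J(T_0)\le 2\Delta_n+r_K$. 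By Lemma~\ref{lem-unifconv}(1), $\Delta_n\xrightarrow{P}0$ (using only $n_1,n_2\to\infty$), and by Lemma~\ref{lem:convergeTK} with continuity of $J$ from Lemma~\ref{lem:propsofJ}(i), $r_K\to 0$ (using only $K_1,K_2\to\infty$); so $J(\hat T_{K,n})-J(T_0)\xrightarrow{P}0$, with no link required between $K$ and $n$.

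The crux is the deterministic ``well-separated minimizer'' statement: for each $\epsilon>0$, $\delta(\epsilon):=\inf\{J(T)-J(T_0):\|T\|\le D,\ \|T-T_0\|\ge\epsilon\}>0$. I would argue by contradiction. A sequence $T_m$ in this set with $J(T_m)\to J(T_0)$ is bounded, hence has a weakly convergent subsequence $T_m\rightharpoonup T_\ast$ in $\Bspace$; since $J$ is convex (Lemma~\ref{lem:propsofJ}(ii)) and strongly continuous it is weakly sequentially lower semicontinuous, so $J(T_\ast)\le J(T_0)$ and $T_\ast=T_0$ by uniqueness (Theorem~\ref{thm:existuniq}). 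Now $T_m\rightharpoonup T_0$ in $\Bspace$ implies $T_mf\rightharpoonup T_0f$ in $H_2$ for every $f\in H_1$ (test against $f\otimes_1 g$), from which $T\mapsto W_2^2(T_\#\mu,\nu)$ is weakly sequentially lower semicontinuous --- one passes to the weak limit inside the optimal couplings, which lie in the tight set $\Pi(\mu,\nu)$ with fixed marginals of finite second moment. Together with weak lower semicontinuity of $\|\cdot\|_{HS}^2$ and the identity $J=W_2^2((\cdot)_\#\mu,\nu)+\eta\|\cdot\|_{HS}^2$, passing to a further subsequence along which $\|T_m\|$ converges yields $J(T_0)\ge W_2^2((T_0)_\#\mu,\nu)+\eta\lim\|T_m\|^2$, i.e. $\lim\|T_m\|\le\|T_0\|$; with $\|T_0\|\le\liminf\|T_m\|$ this forces $\|T_m\|\to\|T_0\|$, and in a Hilbert space weak convergence plus norm convergence gives strong convergence (Radon--Riesz). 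So $T_m\to T_0$, contradicting $\|T_m-T_0\|\ge\epsilon$.

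Granting this, the conclusion is immediate: for any $\epsilon>0$,
\[
P\bigl(\|\hat T_{K,n}-T_0\|\ge\epsilon\bigr)\;\le\;P\bigl(\|\hat T_{K,n}\|\ge D\bigr)\;+\;P\bigl(J(\hat T_{K,n})-J(T_0)\ge\delta(\epsilon)\bigr)\;\longrightarrow\;0
\]
as $K_1,K_2,n_1,n_2\to\infty$, so $\hat T_{K,n}\xrightarrow{P}T_0$. The main obstacle is precisely the well-separated minimizer step: in finite dimensions it is a one-line consequence of strict convexity, continuity and compactness of closed balls, but closed balls of $\Bspace$ are non-compact, so compactness must be replaced by weak sequential compactness and then strong convergence of a near-minimizing sequence recovered --- which works only because of the additive structure of $J$ (both summands weakly lower semicontinuous, and the quadratic penalty upgrades weak to strong convergence via Radon--Riesz). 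Verifying weak lower semicontinuity of $T\mapsto W_2^2(T_\#\mu,\nu)$, with the attendant moment and tightness bookkeeping, is the one genuinely technical point; the rest is the routine $M$-estimation skeleton enabled by Lemmas~\ref{lem:propsofJ}--\ref{lem-unifconv}.
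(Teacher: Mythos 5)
Your proof is correct and follows the same $M$-estimation skeleton as the paper's (uniform convergence of $\hat J_n$ to $J$ on a ball of radius $D$ via Lemma~\ref{lem-unifconv}, combined with a well-separated-minimizer property at $T_0$), but the crucial identifiability step is handled by a genuinely different argument. The paper obtains well-separatedness essentially for free from the remark following the proof of Lemma~\ref{lem:convergeTK}: the parallelogram identity, convexity of $\Phi^2(T)=W_2^2(T_\#\mu,\nu)$, and $J(\tfrac12(T+T_0))\ge J(T_0)$ give the quantitative bound $\tfrac{\eta}{2}\|T-T_0\|^2\le J(T)-J(T_0)$, so any sequence with $J(T_m)\to J(T_0)$ converges strongly to $T_0$; the separation is then run in balls $B(T_K,2\epsilon)$ around the finite-dimensional minimizers. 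You instead prove $\delta(\epsilon)>0$ by contradiction via Banach--Alaoglu, weak lower semicontinuity, and a Radon--Riesz upgrade from weak to strong convergence. That route does work, but it is longer, and the one sub-step you yourself flag as technical --- weak sequential lower semicontinuity of $T\mapsto W_2^2(T_\#\mu,\nu)$, sketched by passing to limits inside optimal couplings --- is delicate as stated, since $(T_m)_\#\mu$ need not converge narrowly in $H_2$ when $T_m$ converges only weakly, so combining a varying integrand with narrowly convergent couplings requires more care than a one-line appeal to tightness. Fortunately you do not need that coupling argument at all: $\Phi^2$ is convex (Lemma~\ref{lem:propsofJ}(ii)) and strongly continuous (Lemma~\ref{lem:propsofJ}(i)), hence weakly sequentially lower semicontinuous by exactly the Mazur-type argument you already invoke for $J$; with that substitution your proof closes completely. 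In sum, the paper's parallelogram-law route buys a quantitative, strong-convexity-like separation with no weak-compactness machinery, while your route is more generic --- it would survive if the quadratic penalty were replaced by a merely strictly convex, coercive one --- at the cost of extra functional-analytic bookkeeping.
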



It is worth emphasizing that the consistency of estimating $\hat{T}_{K,n}$ is ensured as long as sample sizes and approximate dimensions are allowed to grow. The specific schedule at which $K_1,K_2$ grows relatively to $n_1,n_2$ will determine the rate of convergence to $T_0$, which is also dependent on the choice of regularization parameter $\eta>0$, the true probability measures $\mu,\nu$, and the choice of CONS. It is of great interest to have a refined understanding on this matter. In practice, we can choose $K_1,K_2$ by a simple cross-validation technique, which we shall discuss further in the sequel.

Finally, note that in Theorem~\ref{thm:convergence}, we assume that the data samples consist of the entire sample curves $\{f_{1,l}, f_{2,k}\}$ for $l=1,\ldots, n_1$, $k=1,\ldots, n_2$. In reality, the sampled functions $f_{1,l}$ and $f_{2,k}$ may be partially given only at selected design points on their domains. A consistency theorem, Theorem~\ref{thm:consistency-nkd}, for our estimator in this more realistic setting will be given in the following section. 


\section{Approximation, functional design and optimization}
\label{section:methodology}

We will now translate the theoretical formulation for functional optimal transport and the regularized M-estimation framework presented in the preceding section into implementable algorithms. Recall that our FOT formulation is intrinsically an infinite dimensional problem: both source and target distributions are supported by infinite dimensional Hilbert spaces of functions, and so is the space of transport maps that we seek to estimate. On the other hand, we are given only a finite sample $(n_1,n_2)$ of the source and target functions, and moreover such functions are observed only at a finite number of design points on their domains. Thus our approach is to derive approximate algorithms to approach the original objective of solving Eq's ~\eqref{eq:shrinkage} and ~\eqref{eq:empirical_objective} by appropriate finite-dimensional approximations and by taking into considerations design choices for the space of functions and operators. Moreover, the consistency of our estimator in such practical forms of approximation is still maintained.



\subsection{Approximation of the HS operator}

Lemma \ref{lem:convergeTK} in the previous section paves the way for us to find an approximate solution to the original fully continuous infinite-dimensional problem, by utilizing finite sets of basis function, in the spirit of Galerkin method \citep{fletcher1984computational_Galerkin}, which is justified by the consistency theorem (Theorem~\ref{thm:convergence}). 
Thus, we can focus on solving the optimization problem given in Eq. \eqref{eq:empirical_objective} instead of Eq. \eqref{eq:shrinkage}.


Choosing a basis $\{U_i\}_{i=1}^{\infty}$ of $H_1$ and a basis $\{V_j\}_{j=1}^{\infty}$ of $H_2$, and fixing $K_1, K_2$, we want to find $T$ based on the $K_1\times K_2$ dimensional subspace of $\Bspace(H_1, H_2)$ with the basis $\{U_i \otimes V_j\}_{i=\overline{1, K_1}, j=\overline{1, K_2}}$. Lemma \ref{lemma:HS norm} gives us the following formula for $T$ and its norm
\begin{eqnarray}
 T = \sum_{i=1}^{K_1} \sum_{j=1}^{K_2} \lambda_{ji} U_i \otimes V_j, \quad \|T\|_{HS}^2 = \sum_{i=1}^{K_1} \sum_{j=1}^{K_2} \lambda_{ji}^2.
\end{eqnarray}
As $T$ is represented by matrix $\mathbf{\Lambda} := (\lambda_{ji})_{j,i=1}^{K_2,K_1}$, the cost to move function $f_{1,l}$ in $H_1$ to $f_{2,k}$ in $H_2$ is
\begin{equation}\label{eq:cost_old}
    \|Tf_{1,l} - f_{2,k} \|^2 = \left\|\sum_{i=1}^{K_1} \sum_{j=1}^{K_2} \lambda_{ji} V_j \langle f_{1,l}, U_i \rangle_{H_1} - f_{2,k} \right\|_{H_2}^2 =: C_{lk}(\mathbf{\Lambda}).
\end{equation}
Hence, the optimization problem \eqref{eq:empirical_objective} as restricted to $\BKspace$ can be written as
\begin{eqnarray}\label{eq:optimize_connection}
\min_{T\in \BKspace} \hat{J}_n(T) = \min_{\mathbf{\Lambda} \in \mathbb{R}^{K_2\times K_1}, \pi \in \hat{\Pi}} \sum_{l,k=1}^{n_1,n_2} \pi_{lk} C_{lk}(\mathbf{\Lambda}) + \eta \|\mathbf{\Lambda}\|_F^2,
\end{eqnarray}
where $\|\cdot\|_F$ is the Frobenius norm, and the empirical joint measure $\hat{\Pi} := \{ \mathbf{\pi} \in (\mathbb{R}^+)^{n_1 \times n_2} |\; \mathbf{\pi} \mathbf{1}_{n_2} = \mathbf{1}_{n_1}/{n_1},\; \mathbf{\pi}^T \mathbf{1}_{n_1} = \mathbf{1}_{n_2}/{n_2} \}$ with $\mathbf{1}_n$ a length $n$ vector of ones.
Eq.\eqref{eq:optimize_connection} indicates we need to simultaneously learn the HS operator $T$ and the coupling distribution $\pi$. 

For theoretical purposes we proceed to simplify the objective \eqref{eq:optimize_connection} to arrive at a finite-dimensional formulation specific to the basis $\{U_i\}_{i=1}^{\infty}$ of $H_1$ and basis $\{V_j\}_{j=1}^{\infty}$ of $H_2$. For each $l=1,\ldots, n_1$ and $k=1,\ldots, n_2$, by Parseval's identity on $H_2$, 
\begin{eqnarray}\label{eq:simplified-costs}
    C_{lk}(\mathbf{\Lambda}) = \underbrace{\sum_{j=1}^{K_2} \left|\sum_{i=1}^{K_1} \lambda_{ji} \langle f_{1,l}, U_i  \rangle_{H_1} - \langle f_{2,k}, V_j  \rangle_{H_2}\right|^2}_{D_{lk}(\mathbf{\Lambda})} + \sum_{j=K_2+1}^{\infty} \left|\langle f_{2,k}, V_j  \rangle_{H_2}\right|^2.
\end{eqnarray}
Our optimization problem becomes 
\begin{align*}
    \sum_{l, k} \pi_{lk} C_{lk}(\mathbf{\Lambda}) + \eta \| \mathbf{\Lambda}\|_F^2 & = \sum_{l, k} \pi_{lk} D_{lk}(\mathbf{\Lambda}) + \sum_{j=K_2+1}^{\infty}\sum_{k=1}^{n_2} (\sum_{l=1}^{n_1}\pi_{lk}) \left|\langle f_{2,k}, V_j  \rangle_{H_2}\right|^2 + \eta \| \mathbf{\Lambda}\|_F^2\\
    & = \sum_{l, k} \pi_{lk} D_{lk}(\mathbf{\Lambda}) + \sum_{j=K_2+1}^{\infty} \sum_{k=1}^{n_2} \dfrac{1}{n_2} \left|\langle f_{2,k}, V_j  \rangle_{H_2}\right|^2+ \eta \| \mathbf{\Lambda}\|_F^2.
\end{align*}
Since the second term in the above display does not depend on  $\mathbf{\Lambda}$ and $\pi$, it does not affect the optimization problem. The term  $D_{lk}(\mathbf{\Lambda})$ can be further written as
\begin{equation}\label{eq:cost_new}
     D_{lk}(\mathbf{\Lambda}) = \|\mathbf{\Lambda} a_l - b_k\|_{2}^2,
\end{equation}
where $a_{li} =  \langle f_{1,l}, U_i  \rangle_{H_1}$, and $ a_l = (a_{li})_{i=1}^{K_1}$ are vectors in $\mathbb{R}^{K_1}$ (coordinates of $f_{1,l}$ in the first $K_1$ basis); $b_{kj} =  \langle f_{2,k}, V_j  \rangle_{H_2}$, and $b_k = (b_{kj})_{j=1}^{K_2}$ are vectors in $\mathbb{R}^{K_2}$ (coordinates of $f_{2,k}$ in the first $K_2$ basis). This leads to an equivalent presentation for \eqref{eq:optimize_connection}
\begin{eqnarray}\label{eq:optimize_connection_new}
\min_{T\in \BKspace} \hat{J}_n(T) = \min_{\mathbf{\Lambda} \in \mathbb{R}^{K_2\times K_1}, \pi \in \hat{\Pi}} \sum_{l,k=1}^{n_1,n_2} \pi_{lk} D_{lk}(\mathbf{\Lambda}) + \eta \|\mathbf{\Lambda}\|_F^2.
\end{eqnarray}
In this way, we easily see a direct connection between a finite-dimensional approximation of the functional optimal transport relative to a pair of orthonormal bases $\{U_i\}_{H_1}$ and $\{V_j\}_{H_2}$ to a corresponding OT problem on fixed (finite) dimensional vectors.

\subsection{Functional data computation via design points}

In real-world applications with data in the functional domains, one typically does not directly observe functions $(f_{1,l})_{l=1}^{n_1}$ and $(f_{2,k})_{k=1}^{n_2}$ but only their values $(\mathbf{y}_{1,l})_{l=1}^{n_1}$ and $(\mathbf{y}_{2,k})_{k=1}^{n_2}$ at design points  $(\mathbf{x}_{1,l})_{l=1}^{n_1}$ and $(\mathbf{x}_{2,k})_{k=1}^{n_2}$, respectively, where $\mathbf{x}_{1,l} \in X_1^{d_{1,l}}, \mathbf{y}_{1,l} \in \mathbb{R}^{d_{1,l}}, \mathbf{x}_{2,k} \in X_2^{d_{2,k}}, \mathbf{y}_{2,k} \in \mathbb{R}^{d_{2,k}}\,\,\forall \, l=1,\ldots, n_1; k=1,\ldots, n_2$, { and $X_1$ and $X_2$ denote the space of design points for the source and the target domain, respectively}. In other words, $d_{1,l}$ and $d_{2,k}$ denote the possibly varying number of design points observed for the sampled function $f_{1,l}$ in the source and $f_{2,k}$ in the target domain, respectively.
In order to evaluate the objective~\eqref{eq:optimize_connection_new}, the relevant inner products in $H_1$ and $H_2$ must be approximated using the observed values of sampled functions given at the design points. 

{ As a concrete example, suppose that $H_1 = H_2 = L^2([0, 1])$ so that the ordered design points $\mathbf{x}_{1,l} \in [0, 1]^{d_{1,l}}$ and $\mathbf{x}_{2,k} \in [0, 1]^{d_{2,k}}$ for all $l, k$.
Moreover, assume that the support of $\mu$ and $\nu$ are contained in the subsets of continuous functions of $L_2$, and the basis functions $\{U_i\}$ and $\{V_j\}$ are continuous, then a simple numerical strategy that one can use to approximate $\langle f_{1,l}, U_i \rangle_{H_1}$ is by the Riemann sum approximation
\begin{equation}\label{eq:approximation-Riemann}
    \langle f_{1,l}, U_i \rangle_{d} := \sum_{j=2}^{d_{1,l}} ({\mathbf{x}_{1l, j} - \mathbf{x}_{1l, j-1}}) f(\mathbf{x}_{1l, j}) U_i(\mathbf{x}_{1l, j}).
\end{equation}
In the above display, the subindex $d$ is used to signify the fact our estimate of the relevant inner products for the sampled function is calculated using design points. We also say that $d\to \infty$ if $d_{1,l}, d_{2,k} \to\infty \, \forall\, l, k$. 
Thanks to the continuity of the sampled functions, for any functions $f_1, f_2$ being in the support of $\mu, \nu$ respectively, we have as $d \to \infty$
\begin{equation}\label{eq:approximation-scheme}
    \langle f_1, U_i \rangle_{d} \to \langle f_1, U_i \rangle_{H_1}, \quad \langle f_2, V_j \rangle_{d} \to \langle f_2, V_j \rangle_{H_2} \quad \forall i, j\in \mathbb{N}.
\end{equation}
It is clear that the more design points where the function values are observed, the better the representation of continuous functions via a given orthonormal basis. In fact, this condition is sufficient for us to establish the consistency of our estimation procedure for the transport map as the number of design points increases. We formalize this intuition with the following consistency theorem, where we want to emphasize that this result holds for all approximation schemes satisfying Eq.~\eqref{eq:approximation-scheme}, not only the approximation~\eqref{eq:approximation-Riemann}.}

\begin{theorem}\label{thm:consistency-nkd}
(i)        For every $n_1, n_2, K_1, K_2$ and sequences of design points in source and target domains, the cost function
\begin{eqnarray}\label{eq:optimize_connection_new_d}
    \hat{J}_{n, K, d}(\mathbf{\Lambda}) = \min_{\pi \in \hat{\Pi}} \sum_{l,k=1}^{n_1,n_2} \pi_{lk} D_{lkd}(\mathbf{\Lambda}) + \eta \|\mathbf{\Lambda}\|_F^2,
\end{eqnarray}
where 
\begin{equation*}
     D_{lkd}(\mathbf{\Lambda}) = \|\mathbf{\Lambda} a_{ld} - b_{kd}\|_2^2,
\end{equation*}
in which $a_{ld} =  (\langle f_{1,l}, U_i  \rangle_{d})_{i=1}^{K_1}$ and  $b_{kd} =  (\langle f_{2,k}, V_j  \rangle_{d})_{i=1}^{K_2}\, \forall l, k$, has unique minimizer $\mathbf{\Lambda}_{n, K, d} \in \mathbb{R}^{K_2\times K_1}$ that corresponds to operator $T_{n, K, d}$. 

(ii) Suppose that for any natural index pair $(i, j)$, there holds
\begin{equation}
    \langle f, U_i \rangle_d \to \langle f, U_i \rangle_{H_1},  \langle g, V_j \rangle_d \to \langle g, V_j \rangle_{H_2}, 
\end{equation}
almost surely as $d\to \infty$, where $f \sim \mu$ and $g \sim \nu$. Then for any sequences of $n_1, n_2, K_1, K_2 \to \infty$ and $d\to \infty$ with a rate depends on $n_1, n_2, K_1, K_2$, we have $T_{n, K, d} \xrightarrow{P} T_0$ in $\|\cdot \|_{HS}$. Here, $T_0$ denotes the minimizer of the population version of FOT given in Eq.~\eqref{eq:shrinkage}.
\end{theorem}


We make the following remarks regarding the evaluation of equivalent objectives given in Eq.~\eqref{eq:optimize_connection}, \eqref{eq:optimize_connection_new} and their estimate \eqref{eq:optimize_connection_new_d}.

\begin{itemize}
\item 
It is worth noting that the objective function \eqref{eq:optimize_connection_new_d} can be computed easily from the sampled function observations. Moreover, our method works even in the case where different functions are observed at different design points (with possibly different numbers of design points). It is quite obvious that one \emph{cannot} treat each function as a multidimensional vector to apply existing multivariate OT techniques in this case due to the dimensions mismatch.


\item The objectives derived in the foregoing require the selection of basis functions for the Hilbert space in both the source and target domains. 
Since our method requires finite-dimensional approximations, a particular choice of orthonormal bases may have a substantial impact on the number of basis functions that one ends up using for approximating the support of the distributions (of the source and the target domain), and for the representation of the approximate pushforward map going from one domain to another. 
Note that increasing $K_1$ and $K_2$ can lower the objective function, but it may negatively affect the generalization of the estimate as we only observe a finite number of sampled functions (at a finite number of design points). Cross-validation is a simple and very effective technique for choosing $K_1, K_2$ and regularization parameters $\eta, \gamma$. This technique will be demonstrated via a simulation study in Section~\ref{section_simulation}.

\item As an example of the choice of basis functions for $H_1$ and $H_2$, we may take those that arise from a user-specified kernel via Mercer's theorem. Recall that if $K$ is a continuous, symmetric, non-negative definite kernel on a measurable space $(E, \mathcal{B}, \mu)$ then it admits the following representation
\begin{equation}
        K(s, t) = \sum_{j=1}^{\infty} \lambda_j \phi_j(s) \phi_j(t),
    \end{equation}
where the convergence is absolute and uniform, and $(\phi_j)_{j=1}^{\infty}$ forms an orthogonal basis for $L^2(E, \mathcal{B}, \mu)$. Examples of such bases can be found in \citep{wang2008karhunen, zhu1997gaussian_eigenfunction}.

\item { A more adaptive approach for estimating basis functions is to use empirical samples. This can be achieved through the analysis of functional principal components (FPCA).}
{
Since FPCA is employed in some of our numerical experiments, we provide a brief introduction here. The literature on FPCA is extensive and covers a wide range of topics, such as incorporating smoothness~\citep{foutz2010research_fda,liu2017functional_fpca}, robustness~\citep{gervini2008robust_fda}, and sparsity~\citep{kayano2010sparse_fda_fpca}.
For theoretical perspectives of FPCA, see \cite{dauxois1982asymptotic_fpca,yao2005functional_fpca,hsing2015theoretical_fda}.

A basic starting point of FPCA is a result of Karhunen and Lo\'eve, who developed the optimal series expansion theory for continuous stochastic processes. Given $N$   real valued functions $y_1,\ldots, y_N \in L^2$ defined 
on a closed interval $\mathcal{T} \subset \mathbb{R}$.  We would like to select a collection of weight functions $\beta: \mathcal{T} \rightarrow \mathbb{R}$ that outlines the most significant types of variation, which is captured by
$$f_i = \langle \beta, y_i \rangle_2 = \int \beta(t) y_i(t) dt, i=1,...,N.$$
Similar to the multivariate principal component analysis technique, one first finds a weight function $\beta_1$ by solving the maximization problem:
$$\max_{\beta} \frac{1}{N} \sum_{i=1}^N ( \langle \beta_1, y_i \rangle_2)^2, \| \beta_1 \|^2_2=1.$$
Then, inductively, for $m > 1$, one finds the next weight function $\beta_m$ that maximizes $\frac{1}{N} \sum_{i=1}^N (\int \beta_m x_i)^2$ with the normalizing restriction $\| \beta_m\|^2_2=1$ and the orthogonality restrictions $\langle \beta_k, \beta_m \rangle_2=0, k<m$. The obtained weight functions are called the principal components and can serve as the basis functions. In practice, the maximization problem can be seen as an eigenvalue problem so that we can employ efficient methods~\citep{ramos2022scikit_fda} such as singular value decomposition (Section 2.5 of \citep{bie2021functional_fda}). 
}
\end{itemize}

\subsection{Optimization algorithms} 
\label{sec:opt}
We are now ready to describe in detail the optimization algorithm for solving Eq.~\eqref{eq:optimize_connection}, or equivalently, Eq.~\eqref{eq:optimize_connection_new}.
Recall that for functions $(f_{1,l})_{l=1}^{n_1}$ and $(f_{2,k})_{k=1}^{n_2}$ we observe their values $(\mathbf{y}_{1,l})_{l=1}^{n_1}$ and $(\mathbf{y}_{2,k})_{k=1}^{n_2}$ at design points $(\mathbf{x}_{1,l})_{l=1}^{n_1}$ and $(\mathbf{x}_{2,k})_{k=1}^{n_2}$, respectively, where $\mathbf{x}_{1,l}, \mathbf{y}_{1,l} \in \mathbb{R}^{d_{1,l}}, \mathbf{x}_{2,k}, \mathbf{y}_{2,k} \in \mathbb{R}^{d_{2,k}}\,\,\forall \, l, k$. So the objective function that we use for the optimization is 
\begin{eqnarray}
    \argmin_{\mathbf{\Lambda} \in \mathbb{R}^{K_2 \times K_1}, \pi \in \hat{\Pi} }  \sum_{l,k=1}^{n_1,n_2} \pi_{lk} C_{lk}(\mathbf{\Lambda})   + \eta \|\mathbf{\Lambda} \|_{F}^2 ,
    \label{eq:obj_44}
\end{eqnarray}
where
\begin{eqnarray}
\label{eq:cost_function_matrix}
C_{lk}(\mathbf{\Lambda}) = \left\| \mathbf{V}_{2k} \mathbf{\Lambda} \mathbf{U}^T_{1l} \mathbf{y}_{1,l} - \mathbf{y}_{2,k} \right\|_2^2,
\end{eqnarray}
where $\mathbf{U}_{1l} = [U_{1}(\mathbf{x}_{1,l}), \dots, U_{K_1}(\mathbf{x}_{1,l})] \in \mathbb{R}^{d_{1,l}\times K_1}, \mathbf{V}_{2k} = [V_{1}(\mathbf{x}_{2,k}), \dots, V_{K_2}(\mathbf{x}_{2,k})] \in \mathbb{R}^{d_{2l}\times K_2}$ are basis functions that evaluated on the given design points (equivalently, we can work directly  with~\eqref{eq:optimize_connection_new_d}).


To speed up the computation of the classical optimal transport objective, a useful technique is to include a negative entropic penalty \citep{Cuturi-Sinkhorn-13} defined as  
$\Omega_{\gamma}(\pi) = \gamma_h \sum_{l,k=1}^{n_1,n_2}\pi_{lk} \log \pi_{lk}$. However, entropic regularization keeps the probabilistic coupling dense and causes the lack of sparsity \citep{blondel2018smooth_sparse}. To promote sparsity, we can impose an $\ell_p$ penalty by taking $\Omega_{\gamma}(\pi) = \gamma_p \sum_{l,k=1}^{n_1, n_2} \pi_{lk}^p$, for some $p \geq 1, \gamma_p > 0$. This ensures that the optimal coupling $(\pi_{lk})$ has fewer active parameters thereby easing up the computing burden for large datasets. This can be considered as promoting a robustness criterion in addition to shrinkage, a similar behavior associated with the Huber loss \citep{Huber-loss}.
Hence, by imposing an additional regularization term to Eq. (\ref{eq:obj_44}), we have our objective as
\begin{eqnarray}
\label{eq:optimize_matrix_continuous}
\argmin_{\mathbf{\Lambda} \in \mathbb{R}^{K_2 \times K_1}, \pi \in \hat{\Pi} }  \sum_{l,k=1}^{n_1,n_2} C_{lk}(\mathbf{\Lambda})  \pi_{lk} + \eta \|\mathbf{\Lambda} \|_{F}^2 + \Omega_{\gamma}(\pi),
\end{eqnarray}
where $\eta > 0$ is the regularization coefficient and $\Omega_{\gamma}(\pi)$ is the additional regularization term. 


\begin{algorithm}[h]
   \caption{Joint Learning of $\mathbf{\Lambda}$ and $\pi$}  
   \label{alg:gradient_descent}
\begin{algorithmic}
   \STATE {\bfseries Input:} Observed functional data $\{ f_{1,l}=(\mathbf{x}_{1,l}, \mathbf{y}_{1,l}) \}_{l=1}^{n_1}$ and $\{ f_{2,k}=(\mathbf{x}_{2,k}, \mathbf{y}_{2,k}) \}_{k=1}^{n_2}$, coefficient $\gamma_h$, $\gamma_p$, $\eta$, and learning rate $l_r$, source and target CONS $\{U_i(\cdot)\}_{i=1}^{K_1}$,  $\{V_j(\cdot)\}_{j=1}^{K_2}$.  
   \STATE  Initial value  $\mathbf{\Lambda}_0 \xleftarrow{} \mathbf{\Lambda}_{ini}$, ${\pi}_0 \xleftarrow{} {\pi}_{ini}$.
   \STATE $\mathbf{U}_{1l} = [U_1(\mathbf{x}_{1,l}),...,U_{K_1}(\mathbf{x}_{1,l})]$, $\mathbf{V}_{2k} = [V_1(\mathbf{x}_{2,k}),...,V_{K_2}(\mathbf{x}_{2,k})]$  \null\hfill $\#$ Evaluate eigenfunctions  
   \FOR{$t=1$ {\bfseries to} $T_{\max}$}
   \STATE $\#$ Step 1. Update ${\pi}_{t-1}$ 
   \STATE ${C}_{lk} \xleftarrow{} \| \mathbf{V}_{2k} \mathbf{\Lambda}_t \mathbf{U}^T_{1l} \mathbf{y}_{1,l}  - \mathbf{y}_{2,k} \|^2_F$ \null\hfill $\#$ Cost matrix by  Eq.(\ref{eq:cost_function_matrix}) 
   \STATE  ${\pi}_t \xleftarrow{} \text{argmin}_{{\pi}} \mathcal{L}({\pi}, \lambda; \rho)$ \null \hfill $\#$ Fix $\mathbf{\Lambda}$ update $\pi$
   \STATE $\#$ Step 2. Update $\mathbf{\Lambda}_{t-1}$ with gradient descent
   \STATE Learn $\mathbf{\Lambda}_t$, solve Eq. \eqref{eq:optimize_matrix_continuous} with fixed ${\pi}_t$ using gradient descent
   \ENDFOR
   \STATE {\bfseries Output:} ${\pi}_{T_{max}}$, $\mathbf{\Lambda}_{T_{max}}$
\end{algorithmic}
\end{algorithm}


We provide a solution for local minima of this objective via an alternative minimization over $\mathbf{\Lambda}$ and $\pi$, whereby the first is fixed while the second is minimized, followed by the second fixed and the first minimized. 
The algorithm is described in Algorithm~\ref{alg:gradient_descent} and the explicit calculations are given below. 
Note that here we introduce our algorithm following the most general setting by using Eq. (\ref{eq:cost_function_matrix}) as the transportation cost. 
Also we use the power regularization $\Omega_{\gamma}(\pi) = \gamma_p \sum_{l,k=1}^{n_1, n_2} \pi_{lk}^p$ in our objective. Later we will show that using the entropy regularization can let us utilize the Sinkhorn algorithm during the update.

\noindent \textbf{\underline{Updating $\mathbf{\Lambda}$ with $\pi$ fixed:}} Here we want to solve
\begin{eqnarray}
\label{eq_app:optimization_matrix_continuous_Lambda}
\mathbf{\Lambda}_t =  \argmin_{\mathbf{\Lambda} \in \mathbb{R}^{K_2 \times K_1}} L(\mathbf{\Lambda}, \pi) = \argmin_{\mathbf{\Lambda} \in \mathbb{R}^{K_2 \times K_1}} \sum_{l,k=1}^{n_1,n_2}  \pi_{lk} C_{lk}(\mathbf{\Lambda})  + \eta \|\mathbf{\Lambda} \|_{F}^2. 
\end{eqnarray}
The minimum is achieved by performing gradient descent updates, where the gradient is:
\begin{eqnarray}
\nabla_{\mathbf{\Lambda}} L(\mathbf{\Lambda}, \pi) = 
2 \sum_{l=1}^{n_1} \sum_{k=1}^{n_2} \pi_{lk} \left[ ( \mathbf{\Lambda} \mathbf{U}^T_{1l} \mathbf{y}_{1,l} - \mathbf{V}^T_{2k} \mathbf{y}_{2,k}  ) \mathbf{y}_{1,l}^T \mathbf{U}_{1l} \right] + 2 \eta \mathbf{\Lambda}. 
\end{eqnarray}

\noindent \textbf{\underline{Updating $\pi$ with $\Lambda$ fixed:}} Now we want to solve
\begin{eqnarray}
\label{eq_app:optimization_matrix_continuous_pi}
{\pi}_t =  \argmin_{\pi \in \hat{\Pi}} L(\mathbf{\Lambda}, \pi) = \argmin_{\pi \in \hat{\Pi}} \sum_{l,k=1}^{n_1,n_2} C_{lk}(\mathbf{\Lambda})  \pi_{lk}  + \gamma_p \sum_{l,k=1}^{n_1, n_2} \pi_{lk}^p.
\end{eqnarray}
To optimize for the probabilistic coupling $\pi$, we can consider this as a constrained linear programming problem { and solve it through the augmented Lagrangian method~\citep{afonso2010augmented_cov_opt}. 

It is straightforward to extend the optimization framework to accommodate discrete source and target probability measures given by the (non-uniform) weights $p^s = (p^s_1,…,p^s_{n_1})$ and $p^t=(p^t_1,…,p^t_{n_2})$; the constraints are depicted as $\pi \in \hat{\Pi}:= \{ \sum_{l}^{n_1} \pi_{lk} = p^t_k, \forall k;  \sum_{k}^{n_2} \pi_{lk} = p^t_l, \forall l \} $. Here, we add a slack variable $s$ to enforce the inequality constraints $ \forall p_{ij} \geq 0$. Then the augmented Lagrangian takes the form}
\begin{eqnarray}
\begin{split}
    &\mathcal{L}( \mathbf{\pi}, s_{lk}, \lambda^a, \lambda^b, \lambda ) 
    =  \sum_{l,k=1}^{n_1,n_2}  C_{lk} \pi_{lk} + \gamma_p \sum_{l,k=1}^{n_1, n_2} \pi_{lk}^p \\
    & + \sum_{k=1}^{n_2} \lambda^a_k (\sum_{l=1}^{n_1} \pi_{lk} - p^t_k) + \sum_{l=1}^{n_1} \lambda^b_l (\sum_{k=1}^{n_2} \pi_{lk} - p^s_l) 
    + \frac{\rho_k}{2} (\sum_{l=1}^{n_1} \pi_{lk} - p^t_k)^2 + \frac{\rho_l}{2} (\sum_{k=1}^{n_2} \pi_{lk} - p^s_l)^2 \\
    & + \sum_{l,k=1}^{n_1,n_2} \lambda_{lk}(\pi_{lk} - s_{lk}) + \sum_{l,k=1}^{n_1,n_2} \frac{\rho_{lk}}{2}(\pi_{lk} - s_{lk})^2.
\end{split}
\end{eqnarray}
In the above display, $\mathbf{\lambda}^a \in \mathbb{R}^{n_1 \times 1}$, $\mathbf{\lambda}^b \in \mathbb{R}^{n_2 \times 1}$, $\mathbf{\lambda} \in \mathbb{R}^{n_1 \times n_2}$ are Lagrange multipliers, $s_{lk} \in \mathbb{R}^{n_1 \times n_2}$ are the slack variables. The sub-problem is
\begin{eqnarray}
\begin{split}
    & {\pi}_t, {s_{lk}}_t = \argmin_{\pi, s_{lk}} \mathcal{L} (\mathbf{\pi}, s_{lk},
    \lambda^k, \lambda^l, \lambda^{lk}) \\
    & \lambda^a_k \leftarrow \lambda^a_k + \rho_k (\sum_{l=1}^{n_1} \pi_{lk} - p^t_k) \\
    & \lambda^l_t = \lambda^l_{t-1} + \rho_l (\sum_{k=1}^{n_2} \pi_{lk} - p^s_l) \\
    & \lambda^{lk}_t = \lambda^{lk}_{t-1} + \rho_{lk} (\sum_{l,k=1}^{n_1,n_2} \pi_{lk} - s_{lk}).
\end{split}
\end{eqnarray}

\textbf{}


\noindent\textbf{\underline{Entropic regularization:}}
We may alternatively set the additional regularization term to be the negative entropy $\Omega_{\gamma}(\pi) = \gamma_h \sum_{l,k=1}^{n_1,n_2}\pi_{lk} \log \pi_{lk}$ to leverage the computational efficiency of the Sinkhorn algorithm. In that case, when updating $\pi$ with $\mathbf{\Lambda}$ fixed, our problem reduces to an entropic regularized optimal transport problem:
\begin{eqnarray}
\label{eq_app:optimization_sinkhorn}
{\pi}_t =  \argmin_{\pi \in \hat{\Pi}} L(\mathbf{\Lambda}, \pi) = \argmin_{\pi \in \hat{\Pi}} \sum_{l,k=1}^{n_1,n_2} C_{lk}(\mathbf{\Lambda})  \pi_{lk}  + \gamma_h \sum_{l,k=1}^{n_1,n_2}\pi_{lk} \log \pi_{lk}.
\end{eqnarray}
This formulation reverts to a strictly convex optimization problem and we can efficiently obtain the solution via the Sinkhorn-Knopp algorithm \citep{Cuturi-Sinkhorn-13}. See Algorithm \ref{alg:sinkhorn_appendix}.

\begin{algorithm}[h!]
   \caption{Sinkhorn algorithm  }  
   \label{alg:sinkhorn_appendix}
\begin{algorithmic}
   \STATE {\bfseries Input:} Cost matrix $\mathbf{C} \in \mathbb{R}^{N \times n}$, entropy coefficient $\gamma_h$
   \STATE $\mathbf{K} \xleftarrow{} \exp (- \mathbf{C} / \gamma_h)$, $ \mathbf{\nu} \xleftarrow{} \frac{\mathbf{1}_n}{n}$
   \WHILE{ not converged}
   \STATE $\mathbf{\mu} \xleftarrow{}  \frac{\mathbf{1}_N}{N} \oslash \mathbf{K} \mathbf{\nu} $
   \STATE $\mathbf{\nu} \xleftarrow{}  \frac{\mathbf{1}_n}{n} \oslash \mathbf{K}^T \mathbf{\mu} $
   \ENDWHILE
   \STATE $\mathbf{\Pi} \xleftarrow{} \textrm{diag}(\mathbf{\mu}) \mathbf{K} \textrm{diag}(\mathbf{\nu})$
   \STATE {\bfseries Output:} $\mathbf{\Pi}$
\end{algorithmic}
\end{algorithm}
To summarize our learning scheme, during each iteration, our algorithm performs a gradient-type update for $\mathbf{\Lambda}$ with $\pi$ fixed, followed by a step that updates the $\pi$. For the latter step, the algorithm either minimizes $\pi$ following the Lagrangian multiplier method when using the power regularization, or invokes the Sinkhorn algorithm when using entropic regularization. 

{

We end the description of the algorithms with the following additional remarks.
\begin{itemize}
\item In its final form in Eq.~\eqref{eq:optimize_matrix_continuous}, our optimization formulation has a number of regularization terms. It is interesting to note how the different penalty terms play complementary roles in the final estimate of the joint parameter $(T,\Pi)$ which represents the optimal coupling. Specifically, the penalty $\|T\|_{HS}$ (and equivalently, $\|\Lambda\|_{F}$) is required so as the overall optimization is well-posed (with unique solution for $T$ in the space of Hilbert-Schmidt operators). The entropic regularization term for $\Pi$ serves to speed up for the computation, while the powered penalty helps to induce a sparser representation for the coupling, in addition to typically reducing the variance in its estimate from empirical data. In the next section, the roles of these regularization terms will be assessed via a simulation study.


\item Beyond the augmented Lagrangian method and Sinkhorn algorithm there are a variety of optimization approaches, e.g., the Alternating Direction Method of Multipliers (ADMM)~\citep{ghadimi2014optimal_admm} and the Hungarian algorithm~\citep{kuhn1955hungarian}, which may be employed for solving the discretized optimal transport problem. Each method exhibits specific advantages from a computational standpoint; for instance, ADMM is adept at handling distributed computing contexts.  
A thorough investigation of the variety of aforementioned algorithms and their properties within the context of the functional optimal transport problem is of interest and a subject of future research.
\end{itemize}
}

{
\noindent\textbf{\underline{Functional PCA:}} 
While the basis functions can be specified from a list of orthogonal basis families such as the Hermite polynomials or via Mercer kernels, a more data-driven approach is to estimate the basis from data using the functional PCA approach, as briefly introduced in the previous subsection. 

Let $\{y_i(x)\}_{i=1}^N$ denote the function values observed at design points $x$ so that from each function sample a vector $y_i$ of length $M$ is obtained. Then the data are presented by $\mathbf{Y}=[ \mathbf{y}_1, \ldots, \mathbf{y}_N]^\top$. 
The covariance operator associated with the function samples is approximated by matrix $\mathbf{V} = \frac{1}{N} \mathbf{Y}^\top \mathbf{Y}$. The integrals can be approximated as $\int f(x) dx \approx \sum_{m=1}^M w_m f(x_m) = \mathbf{w}^\top \mathbf{y}$ where $\mathbf{w}^\top= (w_1,…,w_M)$ is the weight vector that characterizes the numerical quadrature. Using $\mathbf{W} = \text{diag}(w)$ to denote the diagonal matrix of dimensions $M \times M$ whose elements in the diagonal are the elements of $w$, the eigen-equations for the discrete representation of functional data are

$$\mathbf{V} \mathbf{W} \beta_j (x) = \lambda_j \beta_j(x); j=1,\ldots,M.$$

Let $\nu_j(x) = \mathbf{W}^{1/2} \beta_j(x)$, then the above equations become $\mathbf{W}^{1/2}\left( \frac{1}{N} \mathbf{Y}^\top \mathbf{Y} \right) \mathbf{W}^{1/2} \nu_j (x) = \lambda_j \nu_j(x)$ for $j=1,\ldots,M.$ which can be solved by applying SVD to $\frac{1}{N} \mathbf{Y}\mathbf{W}^{1/2} = \mathbf{U} \mathbf{S} \mathbf{V}^\top$, where $\mathbf{U}=(\mathbf{u}_1, …,\mathbf{u}_N)$ is an orthonormal matrix and $\{ \mathbf{u}_n\}_{n=1}^N$ is a basis in $\mathbb{R}^N$. The corresponding eigenfunctions are represented by eigenvectors $\beta_j = \mathbf{W}^{-1/2} \mathbf{v}_j, j=1,…,M$.
Therefore, we obtain an effective approximation of basis functions when the sample curves are observed at regularly spaced design points. 
}

\section{Experiments}
\label{section:Experiments}

In this section we present a thorough simulation study to demonstrate the viability and effectiveness of our method and to validate the theory presented above. We will also compare our functional optimal transport method to other existing domain adaptation techniques in the literature. Finally, we will describe an application of FOT to a real-world task of multivariate robot-arm motion prediction.
    
\subsection{Simulation studies on the synthetic continuous functional dataset}
\label{section_simulation}
\subsubsection{Verifying consistency and interpretation of the transport map}
First, we present simulation studies to demonstrate that one can recover the "true" pushforward map via cross-validation.  
 We explicitly constructed a ground-truth map $T_0$ that has finite intrinsic dimensions $K_1^* = K_2^*=15$. Then we obtained the target curves by pushing forward source curves via $T_0$. The FOT algorithm is then applied to the data while $\hat{K}_1$ and $\hat{K}_2$ gradually being increased. The results are illustrated in Fig. \ref{fig:simlation}. They demonstrate the effects of varying the number of basis eigenfunctions $\hat{K}=(\hat{K_1}, \hat{K_2})$. We observed that the performance of the estimated map steadily improved as $\hat{K}$ increased until it exceeded $K^*$. As expected, further increasing the number of eigenfunctions did not reduce the learning objective. 

Next, we validate Lemma \ref{lem:convergeTK} by evaluating $\hat{T}_{\hat{K}}$ from an infinite dimensional map that transports sinusoidal functions. The Frobenius norm between the optimal $T^*_K$ and estimated $\hat{T}_{{K}}$, $\|T^*_K -\hat{T}_{{K}} \|_F$, decreases as ${K}$ increases. In both simulations, we set sample sizes $n_1=n_2=30$.
For hyperparameters, set $\gamma_h=20$, $\eta=1$. The results were found to be quite robust to other values of these hyperparameters. 
{
Finally, we verify Theorem \ref{thm:consistency-nkd}, by varying the numbers of sample ($n_1,n_2$) in estimating the optimal transport (OT) problems between two empirical measures. It is well-known that for any absolutely continuous measure $\mu$ on $\mathbb{R}^{d}$, we have $\mathbb{E} W_1(\hat{\mu}_n, \mu) \lesssim n^{-1/d}$, where $\hat{\mu}_n$ is the empirical measure of $\mu$ with $n$ samples \citep{dudley1969speed}. 
Here, we provide quantitative results to investigate the convergence of estimation with regard to sample size.
Similar to our previous setting, we gradually increased the sample size of both source and target ($n_1=n_2=n$). We set $K_1=K_2=30$ and used the same hyperparameters as before. We repeated the experiment 10 times for each sample size. As shown in Fig. (\ref{fig:consistency_for_n_design_points}), the Frobenius norm consistently decreases with increased sample sizes.
}

\begin{figure*}[ht]
\begin{subfigure}[]{0.95\linewidth}
  \centering
  \includegraphics[width=0.45\linewidth]{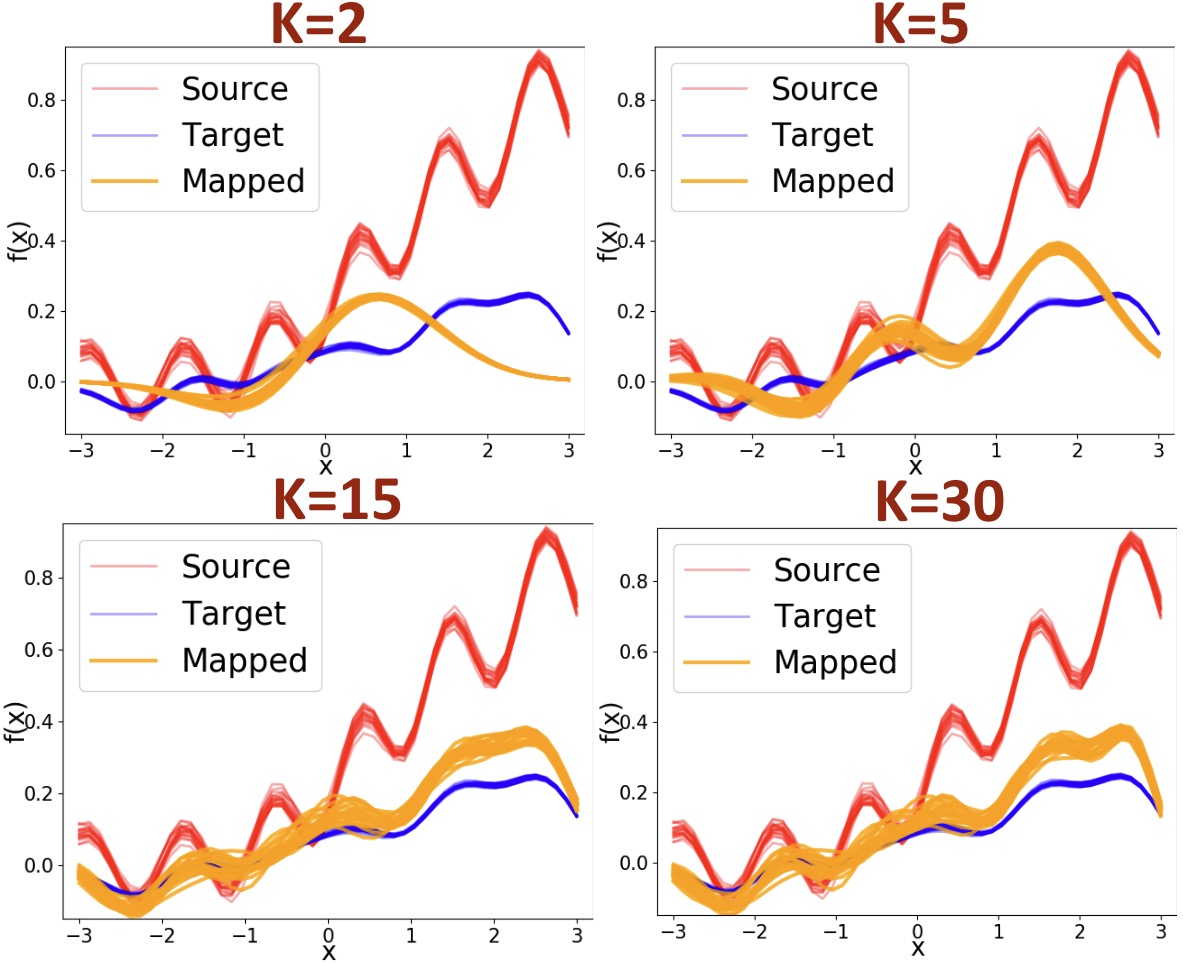}
  \hfill 
  \includegraphics[width=0.50\linewidth]{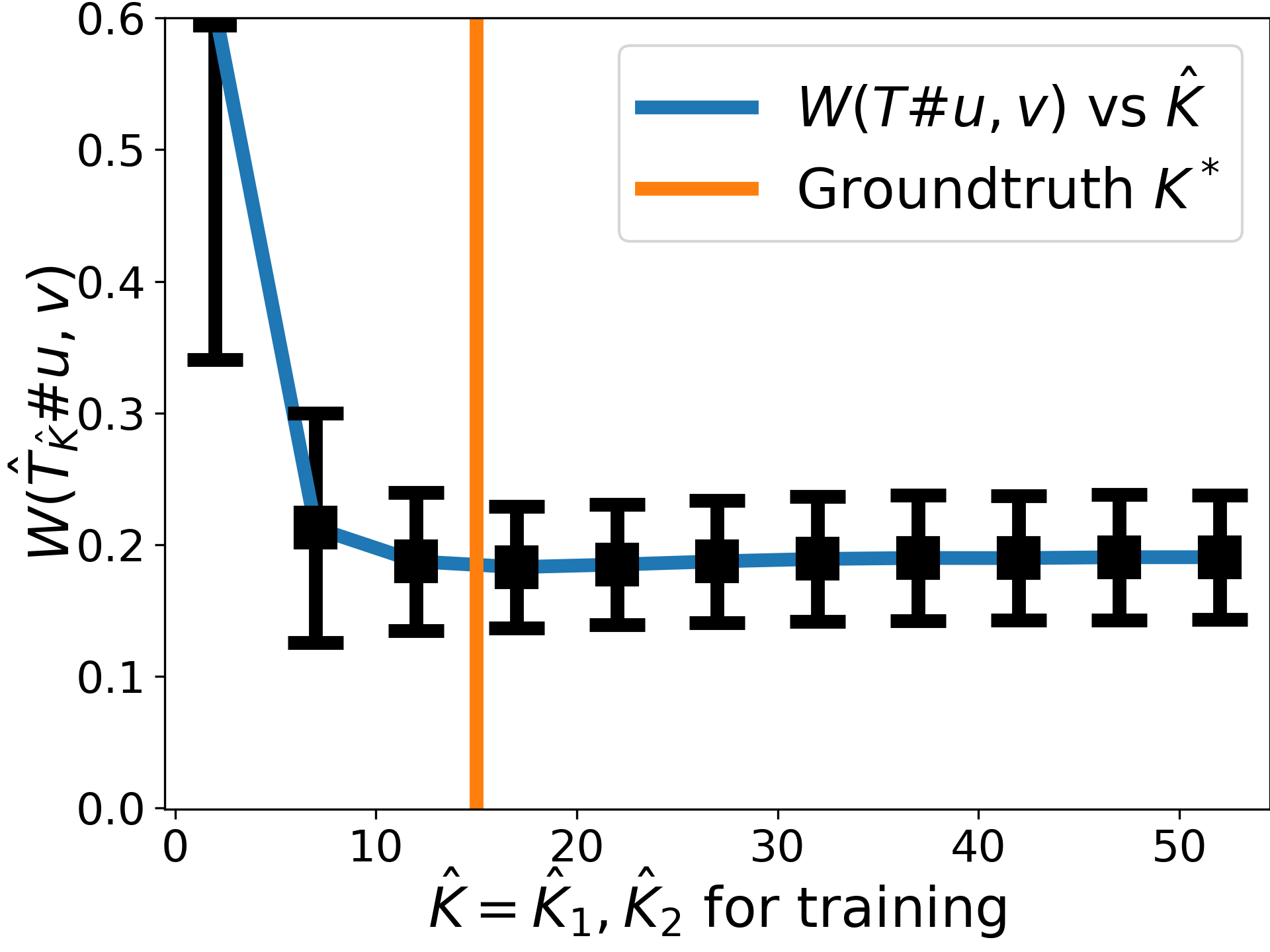}
  \caption{ As $\hat{K}$ increases, $T_{\hat{K}}\# f_1$ moves toward $f_2$ and $W(T_{\hat{K}}\# \hat{u}, \hat{v})$ decreases until $\hat{K} \geq K^*$. }
\end{subfigure}
\hfill
\begin{subfigure}[]{0.17\linewidth}
  \centering 
  \includegraphics[width=1.0\linewidth]{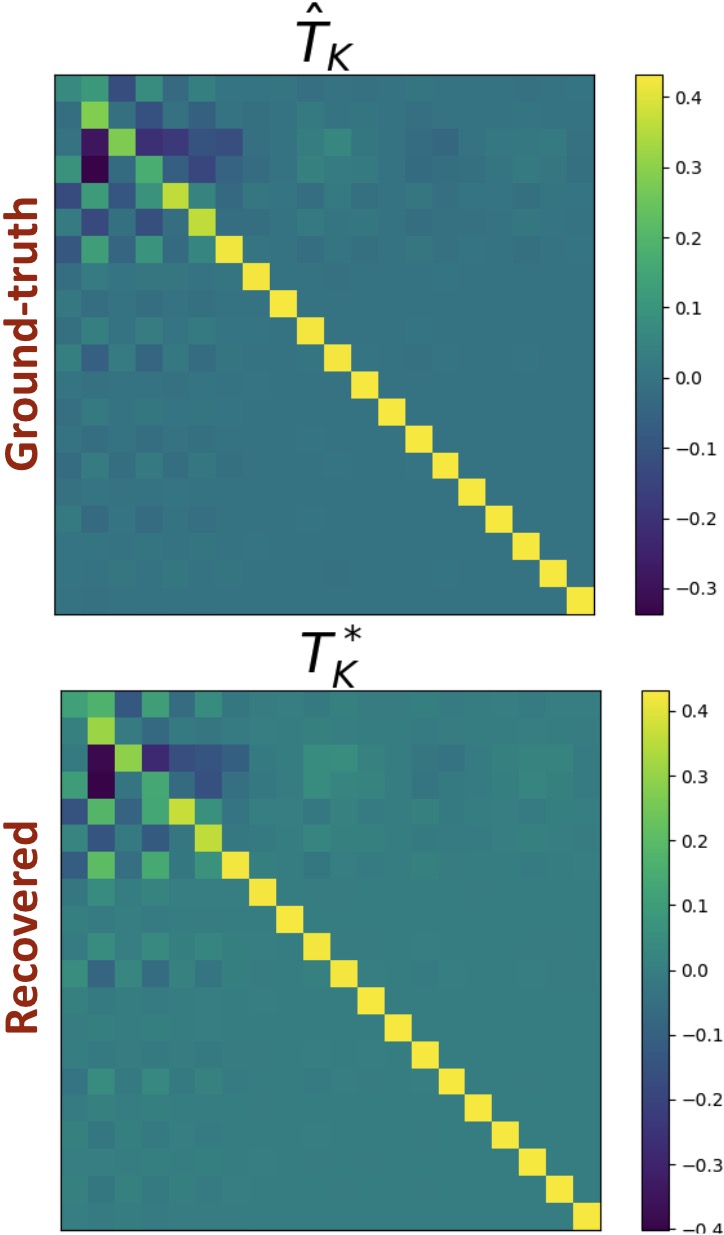}\caption{Estimate $T^*_K$}
  \label{fig:k_reconstruction}
\end{subfigure}
\begin{subfigure}[]{0.40\linewidth}
  \centering
  \includegraphics[width=1.0\linewidth]{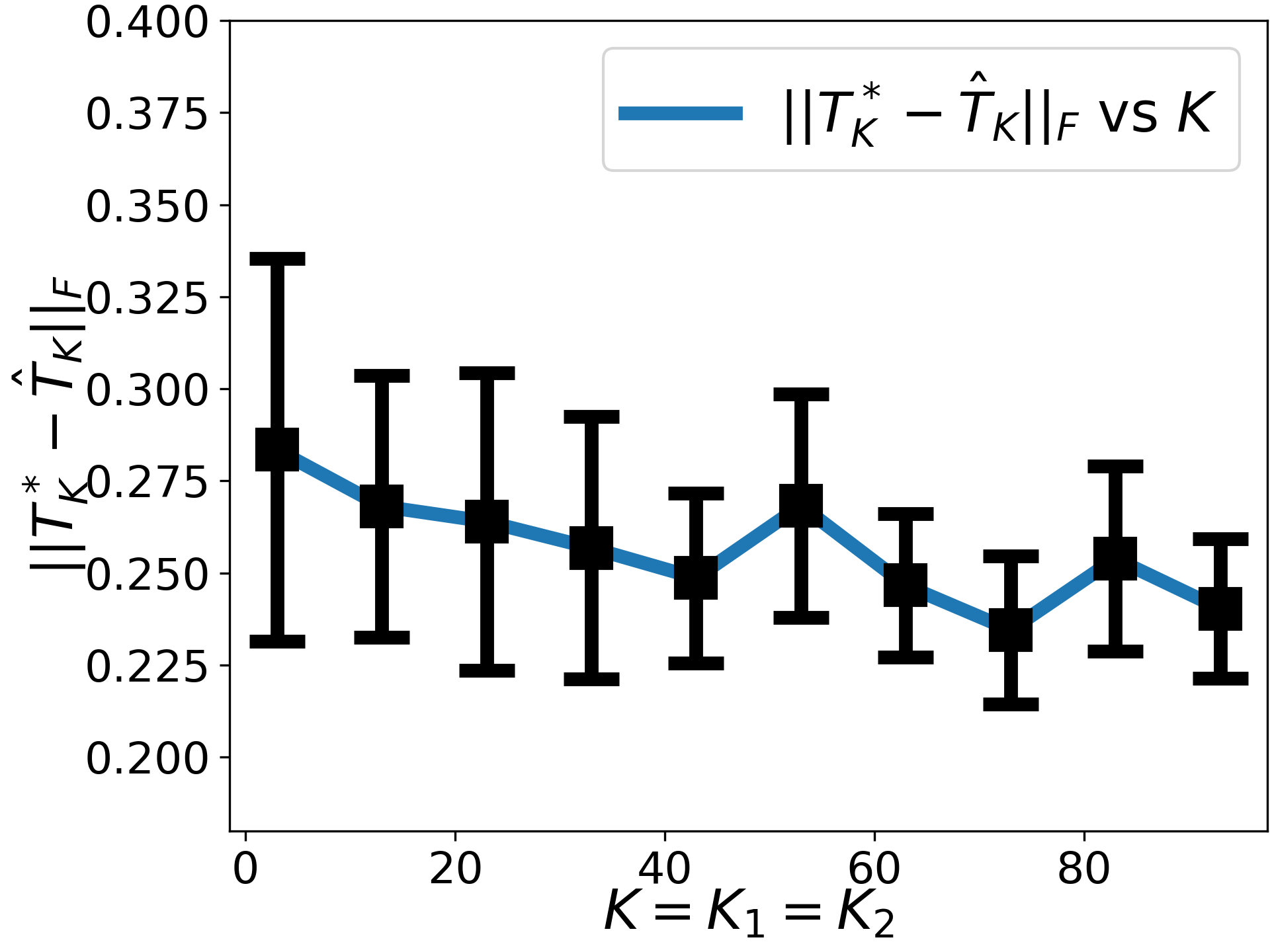}\caption{Number of basis functions}
  \label{fig:consistence_K} 
\end{subfigure}
\begin{subfigure}[]{0.40\linewidth}
  \centering   
  \includegraphics[width=1.0\linewidth]{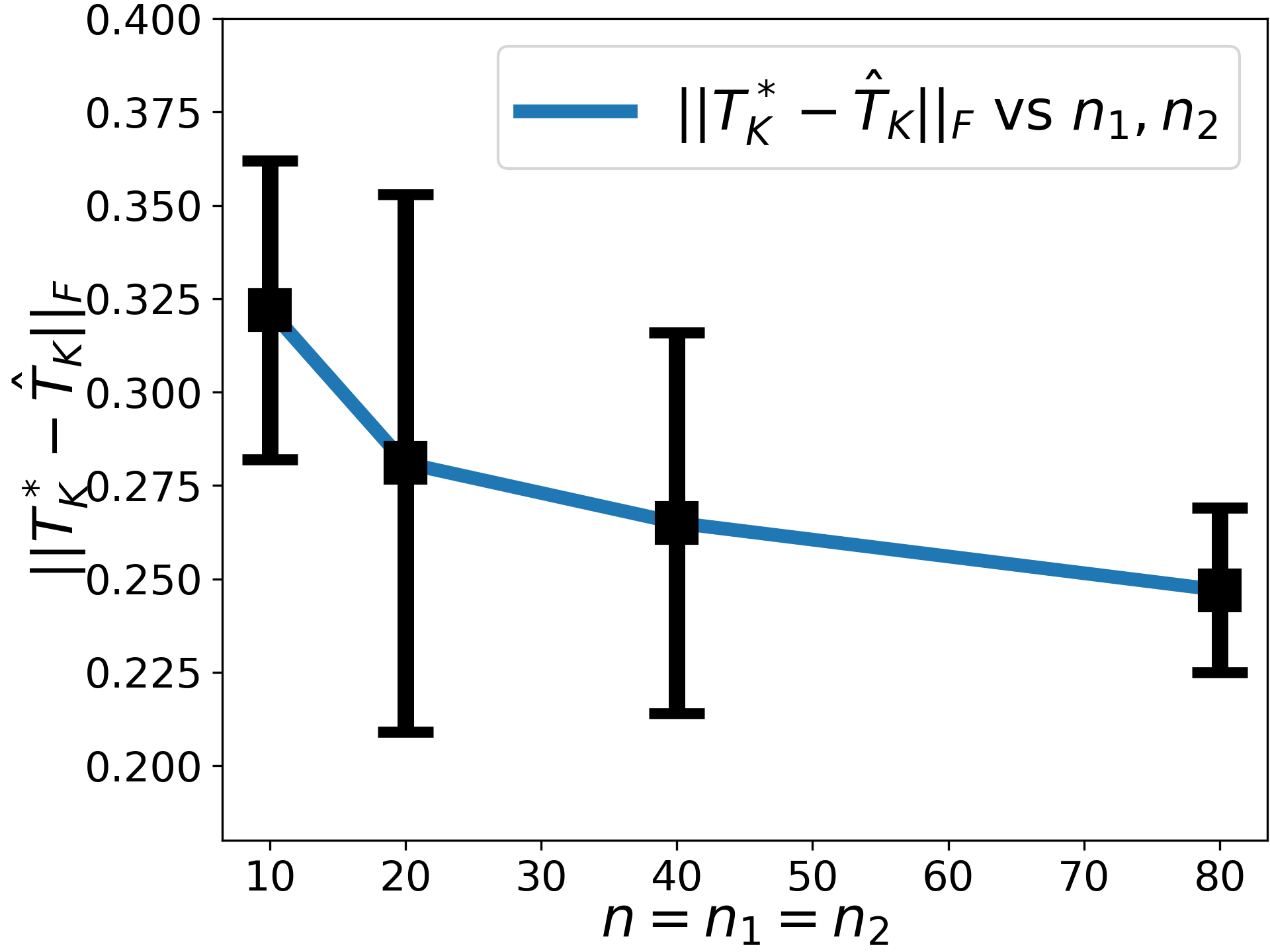}\caption{Sample size}
  \label{fig:consistency_for_n_design_points}
\end{subfigure}
\caption{
{ The experimental results that verify the convergence properties. The estimated linear operator $\hat{T}_K$ effectively recovers the groundtruth ${T}^*_K$ as shown in Fig. (\ref{fig:k_reconstruction}).
The map estimation error improves (decreases) with increased number of basis functions (Fig. (\ref{fig:consistence_K})) and sample curves $n_1,n_2$ (Fig. (\ref{fig:consistency_for_n_design_points})).
}
}
\label{fig:simlation}
\end{figure*}




\subsubsection{Map estimation}
\textbf{Synthetic data simulation:} 
We evaluated our FOT method on a synthetic dataset in which the source and target data samples were generated from a mixture of sinusoidal functions. Each sample $\{y_i (x_i)\}_{i=1}^{n}$ is a realization evaluated from
a (random) function 
$
y_i = A_k \sin (\omega_k x_i + \phi_k) + m_k
$
where the amplitude $A_k$, angular frequency $\omega_k$, phase $\phi_k$ and translation $m_k$ are random parameters generated from a probability distribution, i.e., $[A_k, \omega_k, \phi_k,m_k] \sim P(\theta_k)$, and $\theta_k$ represents the parameter vector associated with a mixture component.


\noindent\textbf{Baseline comparison:} We compared FOT method to several existing map estimation methods on the synthetic \textit{mixture of sinusoidal functions} dataset. Sample paths were drawn from sinusoidal functions with random parameters. Then, curves were evaluated on random index sets. 
In Fig. \ref{fig:toy_exp_qualitative}, FOT was compared against the following baselines: (i) Transport map of Gaussian processes
\citep{mallasto2017learning_wGP,masarotto2019procrustes_GP}, where a closed-form optimal transport map is available, (ii) Large-scale optimal transport (LSOT)
\citep{seguy2017LSOT}, and (iii) Mapping estimation for discrete OT (DSOT) \citep{perrot2016mapping_est_discre_OT}. 
For all discrete OT methods, which were not designed for functional data per se,
the functional data were treated as point clouds of high dimensional vectors. 

We observed that FOT did a remarkably good job at transporting source sample curves to match closely target samples. By contrast, GPOT only altered the oscillation of curves but failed to capture the target distribution's multi-modality. This failure is attributed to the GPOT method's Gaussian process (and thus unimodal distribution) assumption which clearly did not hold in this example. The poor performance of LSOT and DSOT can be attributed to the fact these methods essentially ignored the smoothness of the sampled curves. In other words, these multivariate adaptations were not suitable for handling functional data. 

\begin{figure*}[ht]
\begin{subfigure}[]{0.32\linewidth}
  \centering
  \includegraphics[width=1.0\linewidth]{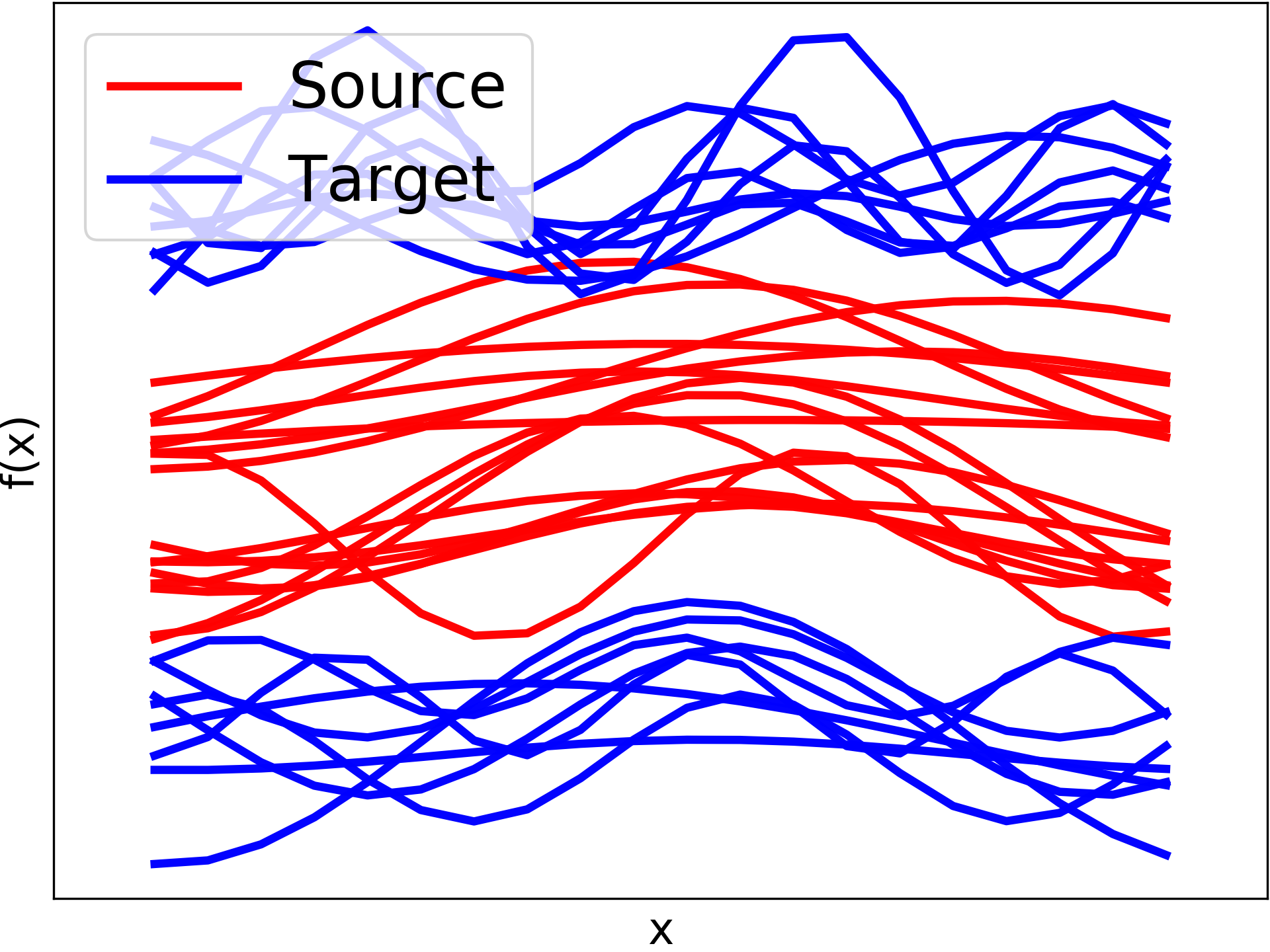}
  
  \caption{data}
  \label{fig:toy_exp_qualit_data}
\end{subfigure}
\hfill
\begin{subfigure}[]{0.32\linewidth}
  \centering
  \includegraphics[width=1.0\linewidth]{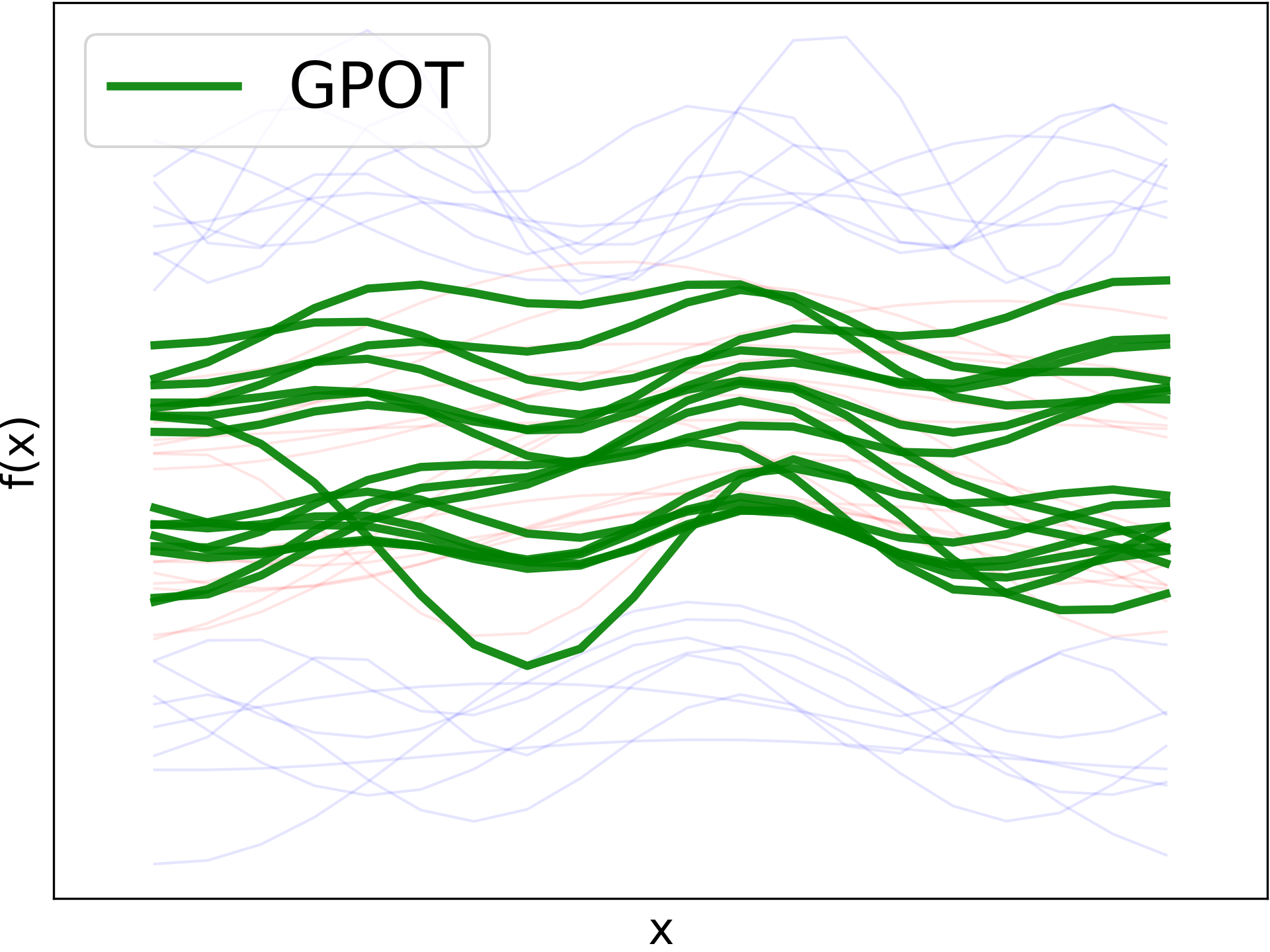}
  
  \caption{GPOT}
  \label{fig:toy_exp_qualit_GPOT}
\end{subfigure}
\hfill
\begin{subfigure}[]{0.32\linewidth}
  \centering
  \includegraphics[width=1.0\linewidth]{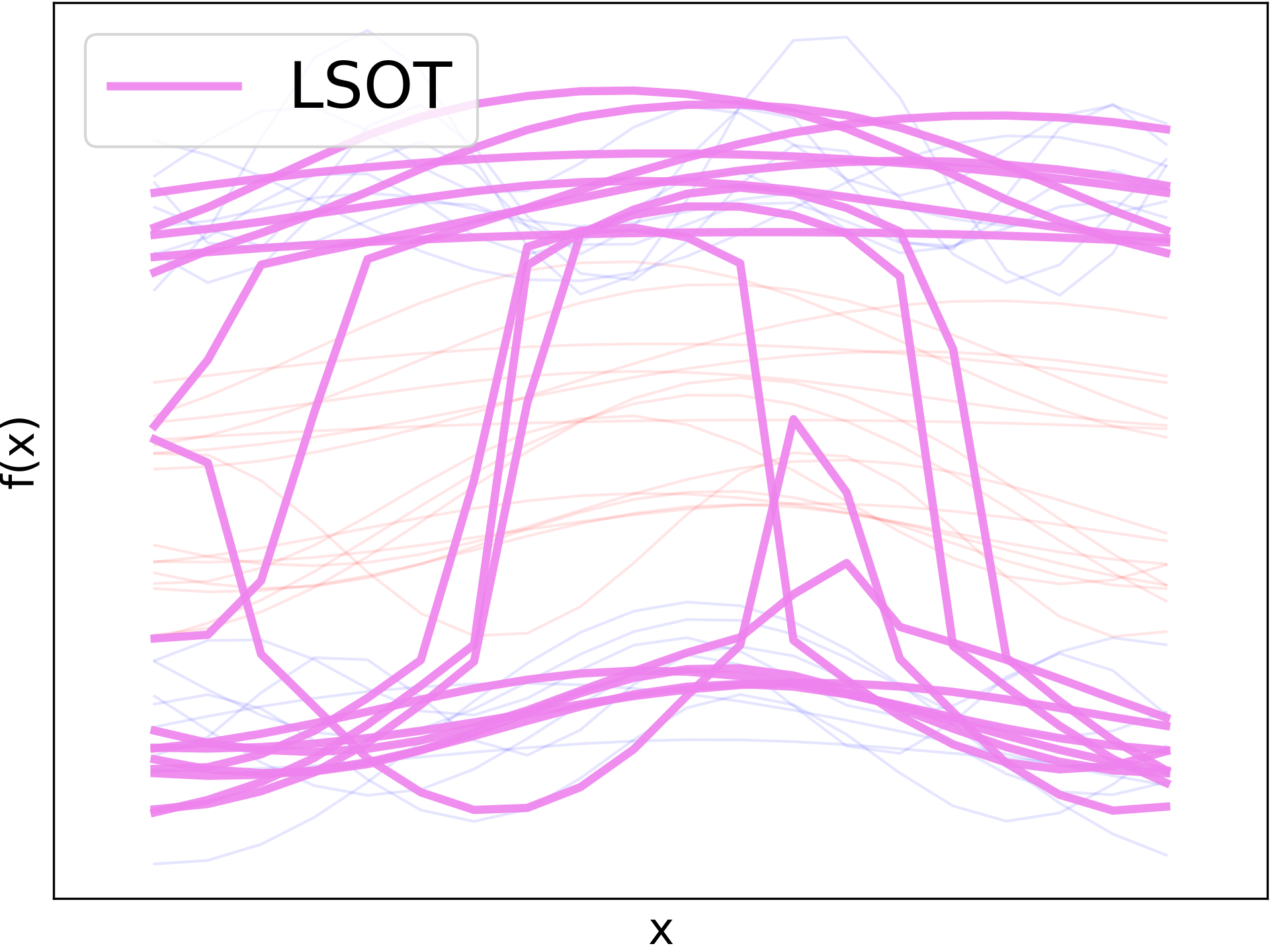}
  
  \caption{LSOT}
  \label{fig:toy_exp_qualit_LSOT}
\end{subfigure}
\hfill
\begin{subfigure}[]{0.32\linewidth}
  \centering
  \includegraphics[width=1.0\linewidth]{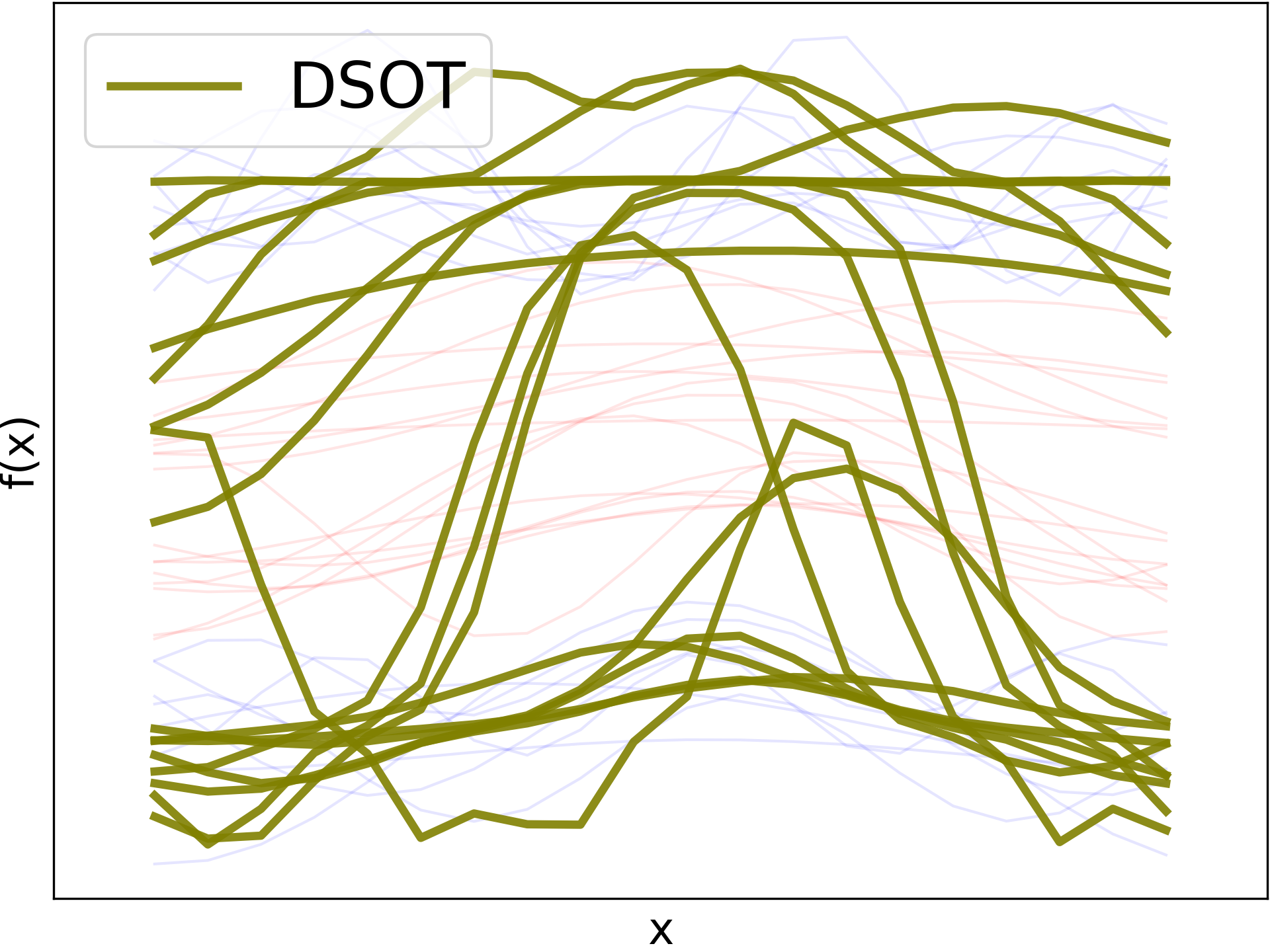}
  
  \caption{DSOT}
  \label{fig:toy_exp_qualit_FOT_DSOT}
\end{subfigure}
\hfill
\begin{subfigure}[]{0.32\linewidth}
  \centering
  \includegraphics[width=1.0\linewidth]{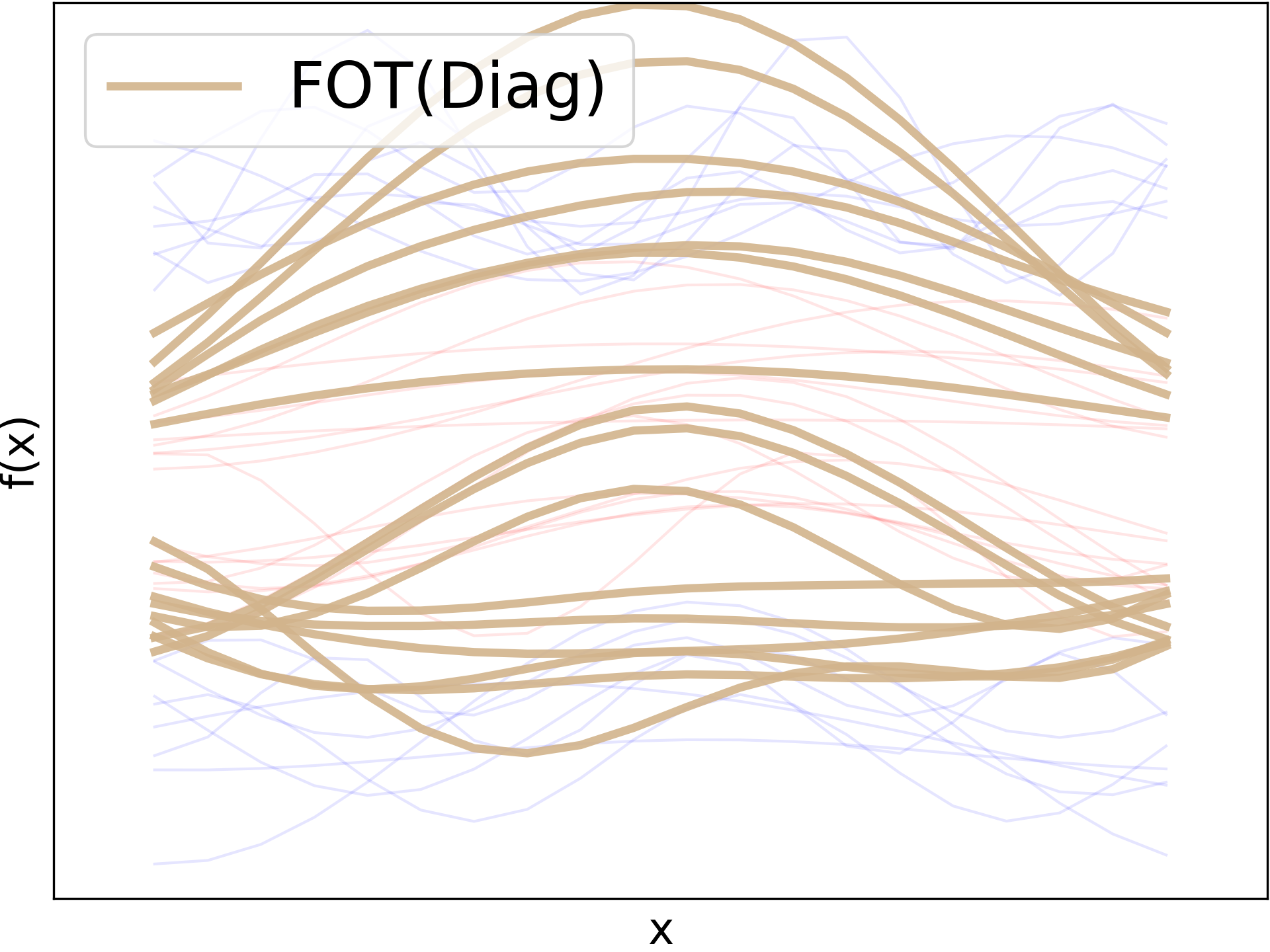}
  \caption{FOT with diagonal $\Lambda$}
  \label{fig:toy_exp_qualit_FOT_diag}
\end{subfigure}
\hfill
\begin{subfigure}[]{0.32\linewidth}
  \centering
  \includegraphics[width=1.0\linewidth]{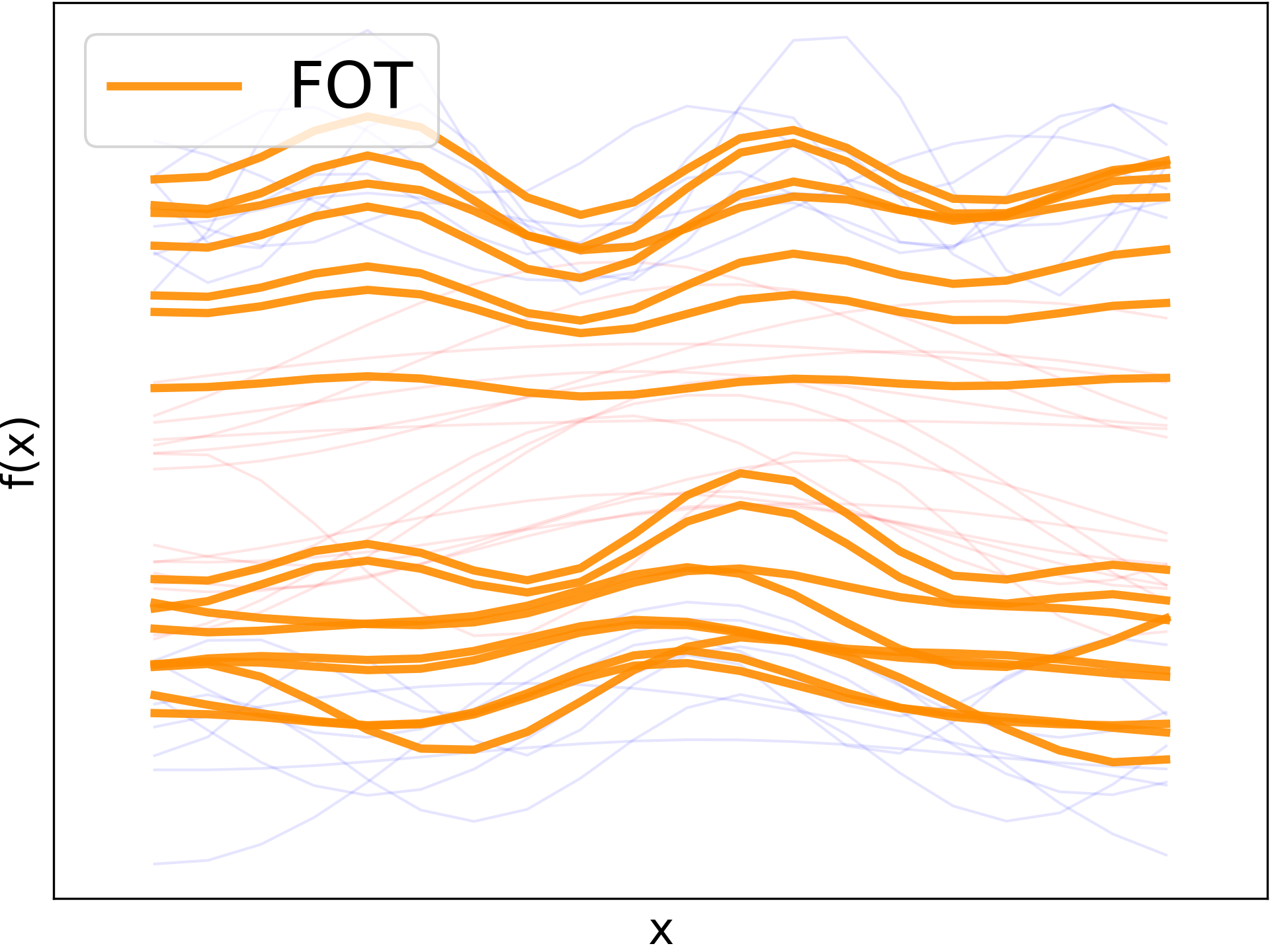}
  \caption{FOT}
  \label{fig:toy_exp_qualit_FOT}
\end{subfigure}

\caption{Pushforward measures of functions obtained by various approaches on mixtures of sinusoidal functions data: (a) Sample functions from {\color{red} source } and {\color{blue}target} domain. The resulting pushforward measures obtained by (b) GPOT \citep{mallasto2017learning_wGP}; (c) LSOT \citep{ke2019_LSOT_proj_pursuit}; and (d) DSOT \citep{perrot2016mapping_est_discre_OT}; and (e)(f) our method FOT. In (e) we parameterized the $\mathbf{\Lambda}$ as only a diagonal function. A full matrix $\mathbf{\Lambda}$ (f) produces a more expressive pushforward.}
\label{fig:toy_exp_qualitative}
\end{figure*}

For a quantitative comparison, we used the Wasserstein distance to measure how well the pushforward measure of (the empirical distribution of) source samples matches the target samples: 
\begin{eqnarray}
L = \min_{\mathbf{\Pi}} \frac{1}{n_L} \sum_{l, k} d( T (\mathbf{f_1}_l), \mathbf{f_2}_k) \Pi_{lk}. 
\end{eqnarray}
Here, $d(\mathbf{x}, \mathbf{y}):= \| \mathbf{x} - \mathbf{y} \|_2^2$, $ \{ T(\mathbf{f_1}_i) \}_{i=1}^{n_l}$ and $\{ \mathbf{f_2}_i \}_{i=1}^{n_k}$ are mapped samples and target samples, $T(\cdot)$ denotes the map given by different methods, $n_L$ the length of each sample function and $\Pi$ the probabilistic coupling. The experiments are labeled by $k_{source} \rightarrow k_{target}$, where $k_{source}$ and $k_{target}$ indicate the number of mixture components of the source and the target distribution, respectively. More mixture components typically entail more complex data distributions, and thus more complex transportation plan (more on this below).
As shown in Table \ref{table:toy_quantitative}, the pushforward map obtained by FOT performed the best in matching target sample functions quantitatively using the Wasserstein distance based objective $L$.

\begin{figure}[h]{
\centering
\begin{tabular}[h]{ p{1.2cm}|p{1.1cm}|p{1.1cm}|p{1.1cm}|p{1.1cm}|p{1.1cm}  }
 \hline
 \hline
 Method & 1 $\rightarrow$ 1 & 1$\rightarrow$2 & 2$\rightarrow$1 & 2$\rightarrow$2 & 2$\rightarrow$3  \\
 \hline
 GPOT   & 17.560    & 12.895 &   15.263 & 61.561 & 39.159 \\
 LSOT &   133.434  & 94.229   & 117.832 & 929.108 & 663.461 \\
 DSOT    & 6.871 & 13.226 &  9.679 & 46.521 & 41.009 \\
 FOT &   \textbf{2.873}  & \textbf{11.982} &  \textbf{3.316} & \textbf{44.071} & \textbf{32.547} \\
 \hline
\end{tabular}
\captionof{table}{Quantitative comparison on the mixture of sinusoidal functions data.
The maps obtained by FOT method achieved the best performance under the Wasserstein distance objective.  }
\label{table:toy_quantitative}
}
\end{figure}

\begin{figure*}[ht]
\begin{subfigure}[]{0.95\linewidth}
  \centering
  \includegraphics[width=1.0\linewidth]{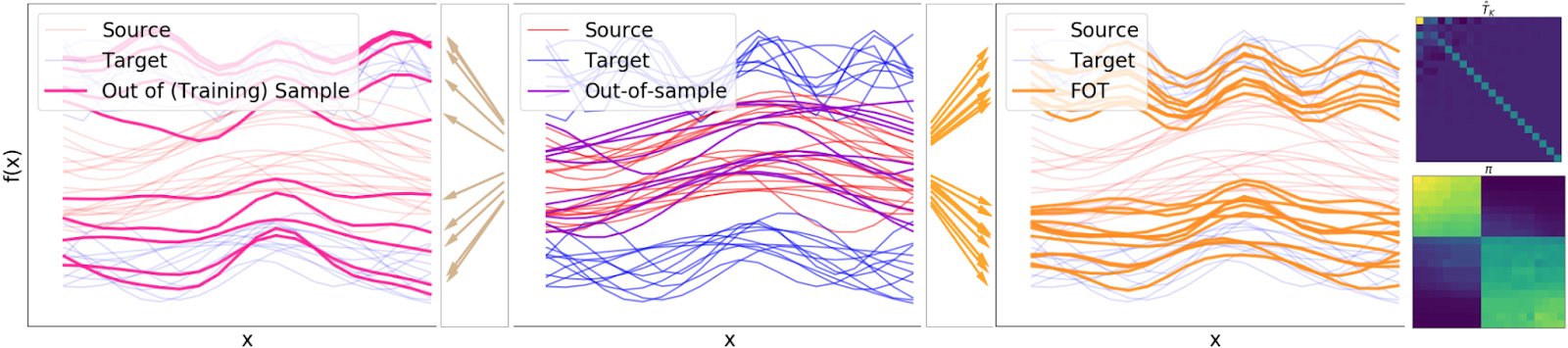}\caption{Out-of-sample curves. The rightmost and lower heatmap represents the coupling $\pi$ and reveals the multimodality.}
  \label{fig:out-of-sample}
\end{subfigure}
\hfill
\centering
\begin{subfigure}[]{0.31\linewidth}
  \centering
  \includegraphics[width=1.0\linewidth]{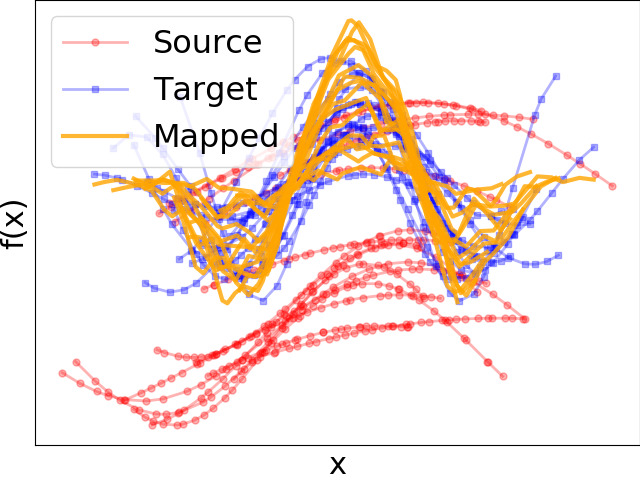}\caption{Varying design points}
  \label{fig:continuous-index}
\end{subfigure}
\begin{subfigure}[]{0.31\linewidth}
  \centering
  \includegraphics[width=1.0\linewidth]{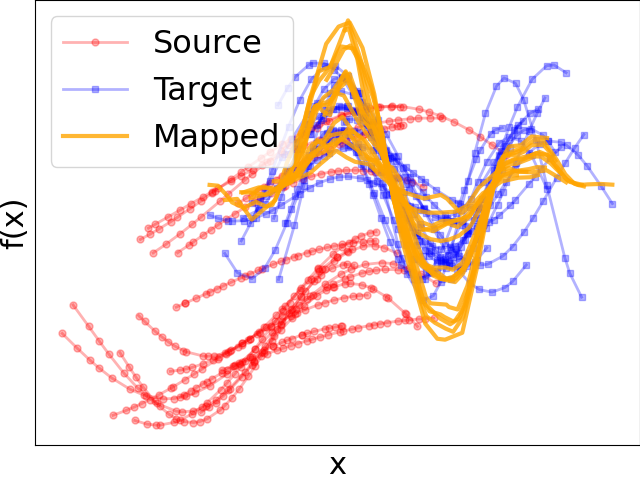}\caption{Distinct design point sets}
  \label{fig:continuous-index-out-of-domain}
\end{subfigure}
\caption{FOT performance on out-of-sample curves. }
\end{figure*}

\begin{figure*}[h]
\begin{subfigure}[]{0.32\linewidth}
  \centering
  \includegraphics[width=1.0\linewidth]{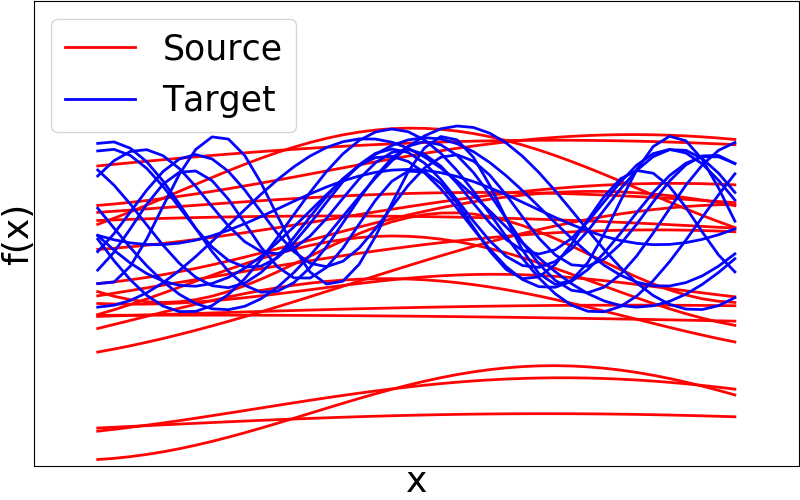}
  \caption{Data}
  
\end{subfigure}
\hfill
\begin{subfigure}[]{0.32\linewidth}
  \centering
  \includegraphics[width=1.0\linewidth]{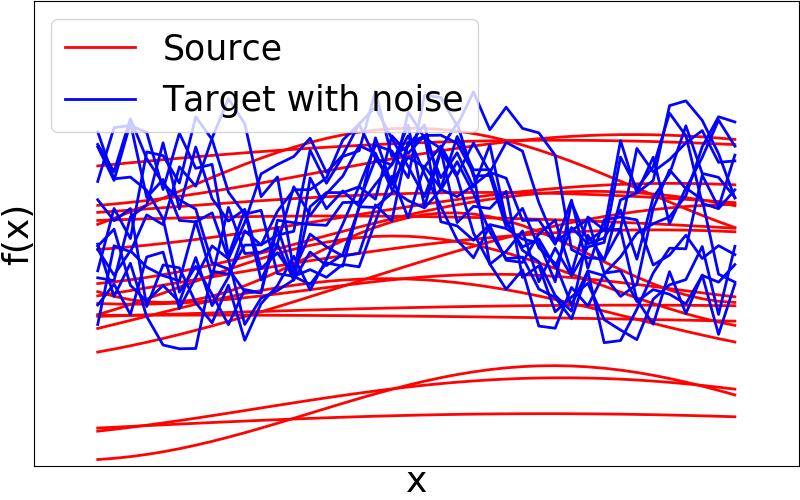}
  \caption{Data with noise}
  
\end{subfigure}
\hfill
\begin{subfigure}[]{0.32\linewidth}
  \centering
  \includegraphics[width=1.0\linewidth]{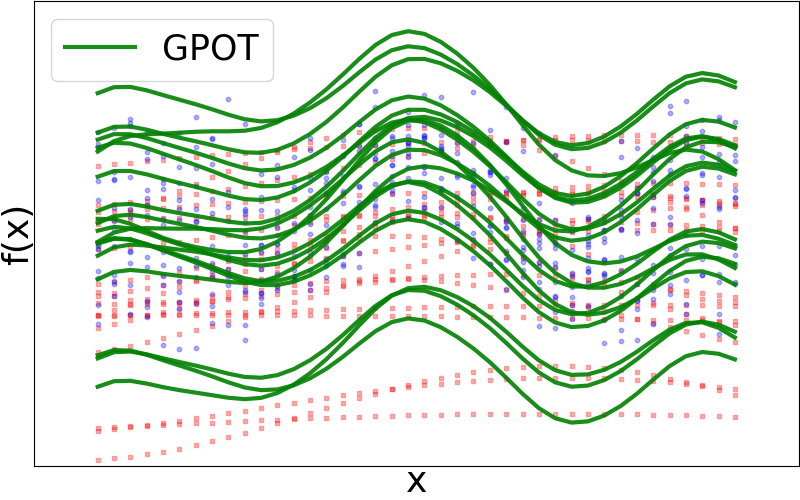}
  \caption{\citep{mallasto2017learning_wGP}}
  
\end{subfigure}
\hfill
\begin{subfigure}[]{0.32\linewidth}
  \centering
  \includegraphics[width=1.0\linewidth]{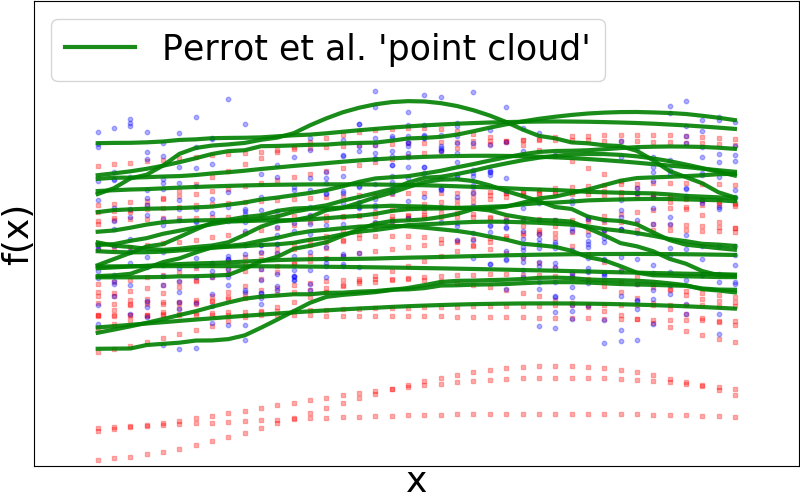}
  \caption{ \citep{perrot2016mapping_est_discre_OT}}
  
\end{subfigure}
\hfill
\begin{subfigure}[]{0.32\linewidth}
  \centering
  \includegraphics[width=1.0\linewidth]{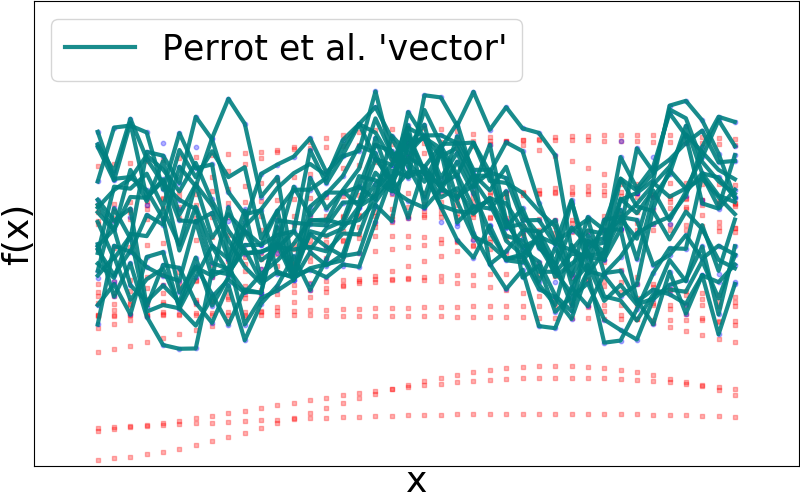}
  \caption{ \citep{perrot2016mapping_est_discre_OT}}
  
\end{subfigure}
\hfill
\begin{subfigure}[]{0.32\linewidth}
  \centering
  \includegraphics[width=1.0\linewidth]{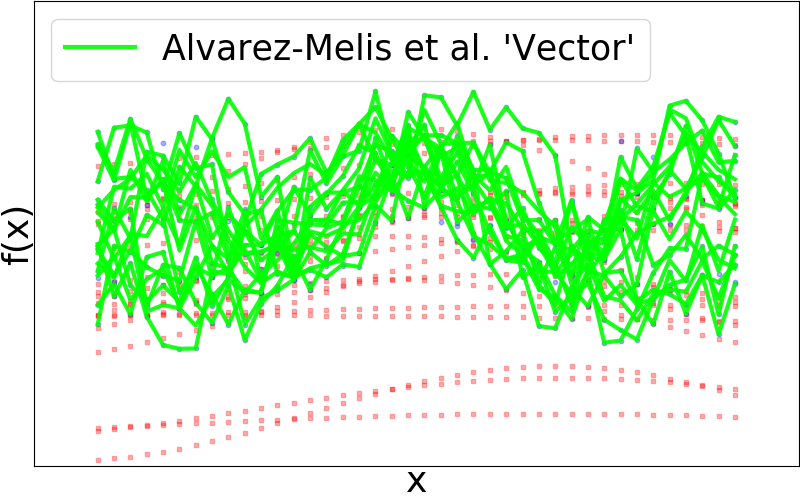}
  \caption{ \citep{alvarez2019towards_joint}}

\end{subfigure}
\hfill
\begin{subfigure}[]{0.32\linewidth}
  \centering
  \includegraphics[width=1.0\linewidth]{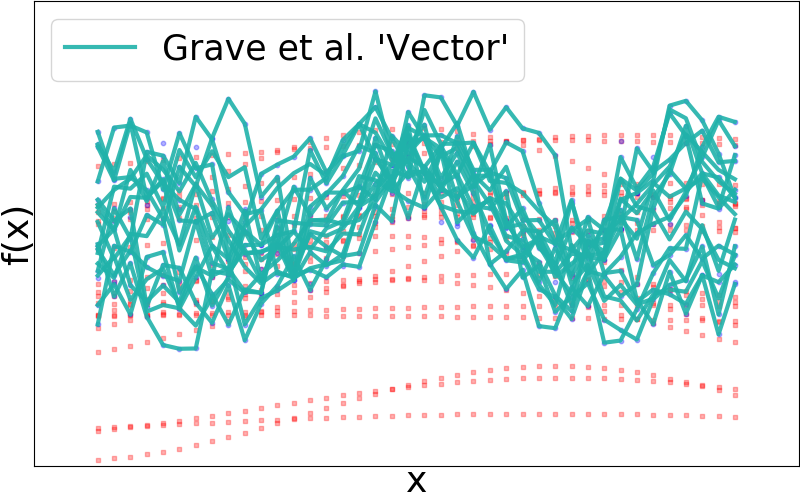}
  \caption{ \citep{grave2019unsupervised}}
  
\end{subfigure}
\hfill
\begin{subfigure}[]{0.32\linewidth}
  \centering
  \includegraphics[width=1.0\linewidth]{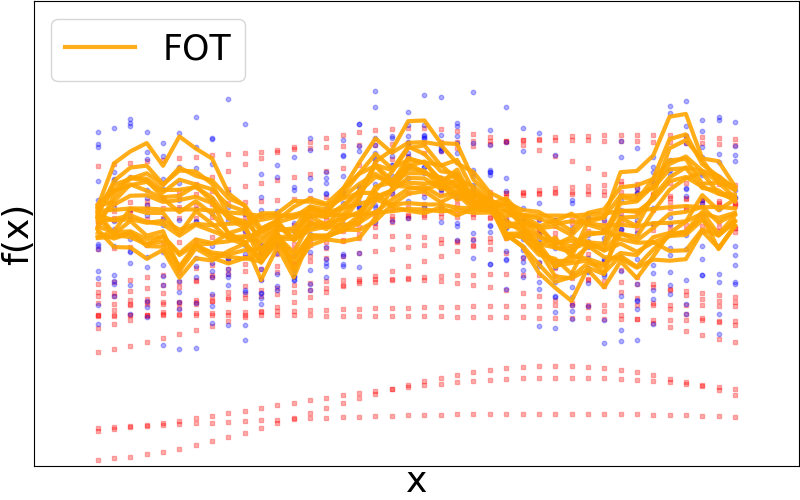}
  \caption{FOT with $\eta = 1$}
   
\end{subfigure}
\hfill
\begin{subfigure}[]{0.32\linewidth}
  \centering
  \includegraphics[width=1.0\linewidth]{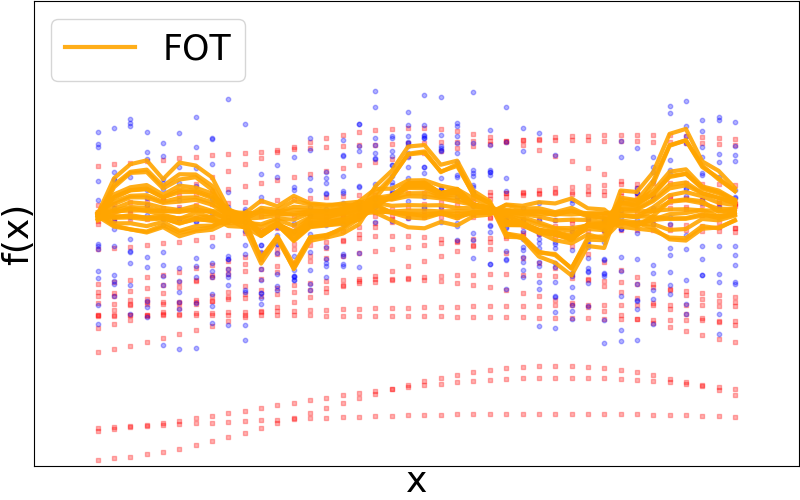}
  \caption{FOT with $\eta = 40$}

\end{subfigure}

\caption{Panel (b) depicts functional data generated by adding non-continuous noise to the smooth curves shown in panel (a). 
(c) depicts results obtained by applying the method of~\citep{mallasto2017learning_wGP}. (d) and (e) depict results obtained by applying the method of \citep{perrot2016mapping_est_discre_OT}. (f) depicts results obtained by applying the method of \citep{alvarez2019towards_joint}. (g) depicts results obtained by the method of \citep{grave2019unsupervised}. Panels (h) and (i) depict results obtained by FOT using different regularization parameters. }
\label{fig:nonfunctionalex}
\end{figure*}

\begin{figure*}[ht]
\begin{subfigure}[]{0.32\linewidth}
  \centering
  \includegraphics[width=1.0\linewidth]{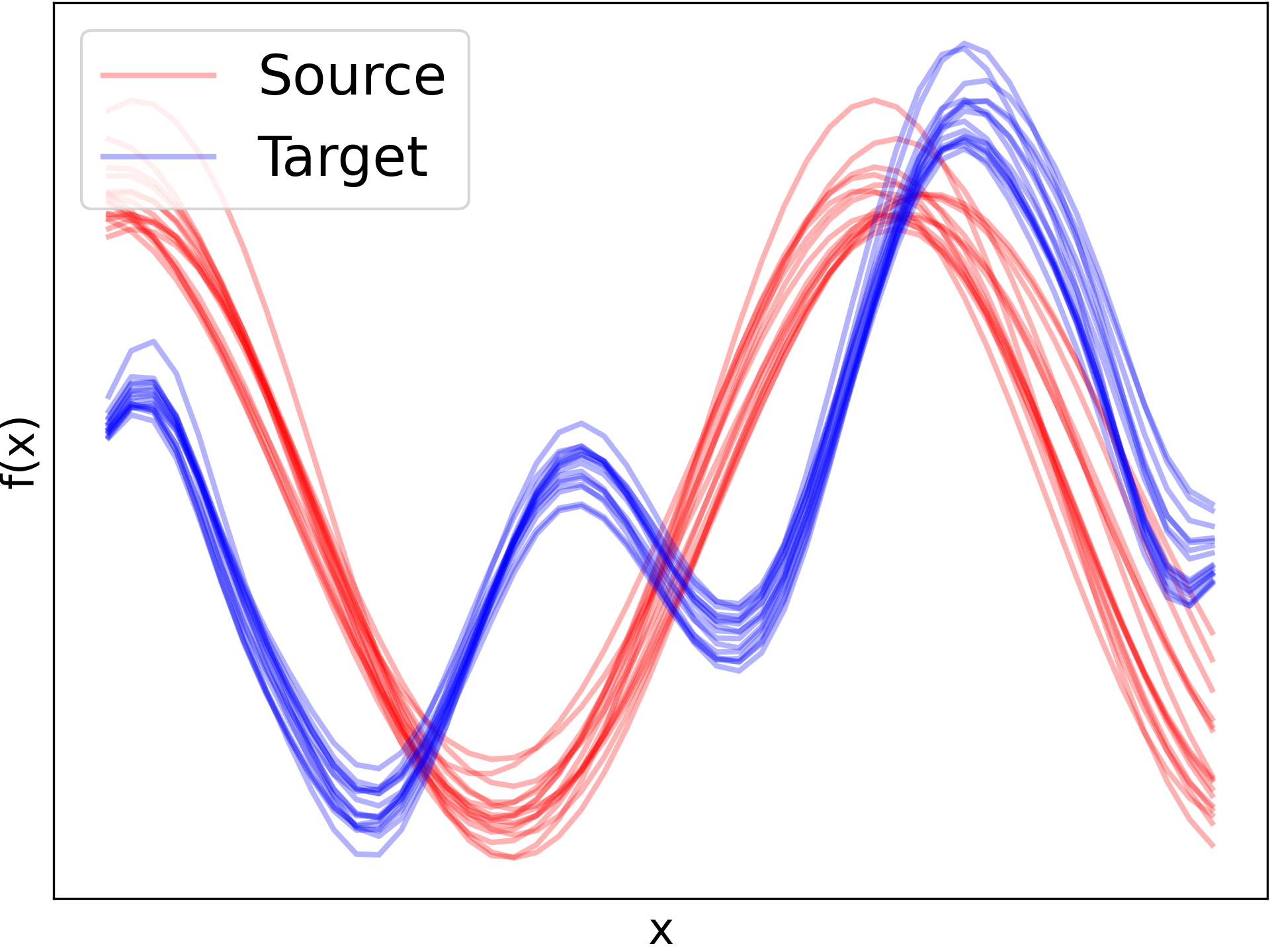}
  \caption{functional data}
  \label{fig:toy_exp_fpca_data}
\end{subfigure}
\hfill
\begin{subfigure}[]{0.32\linewidth}
  \centering
  \includegraphics[width=1.0\linewidth]{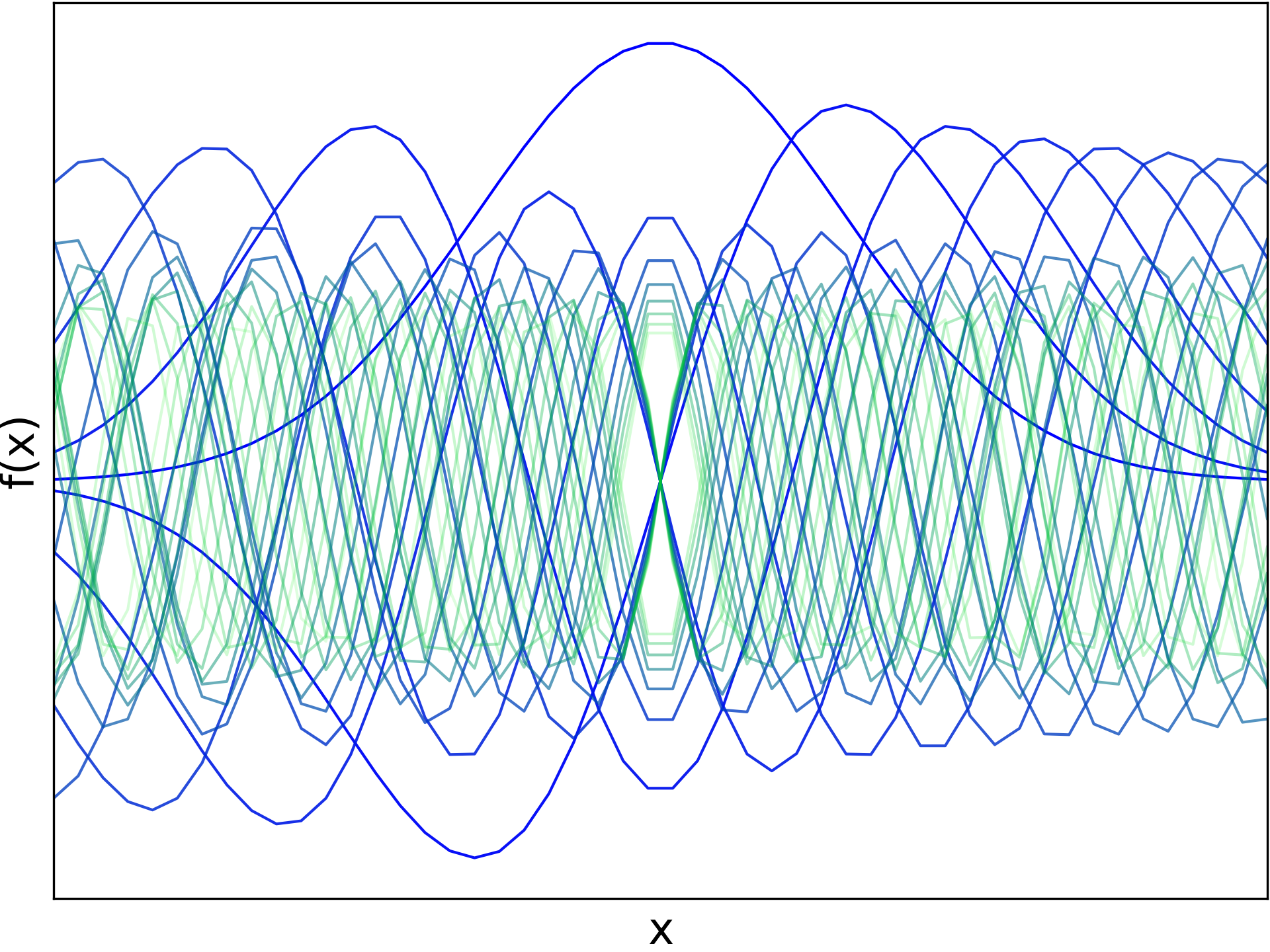}
  
  \caption{Basis functions for data generation}
  \label{fig:toy_exp_fpca_given_basis}
\end{subfigure}
\hfill
\begin{subfigure}[]{0.32\linewidth}
  \centering
  \includegraphics[width=1.0\linewidth]{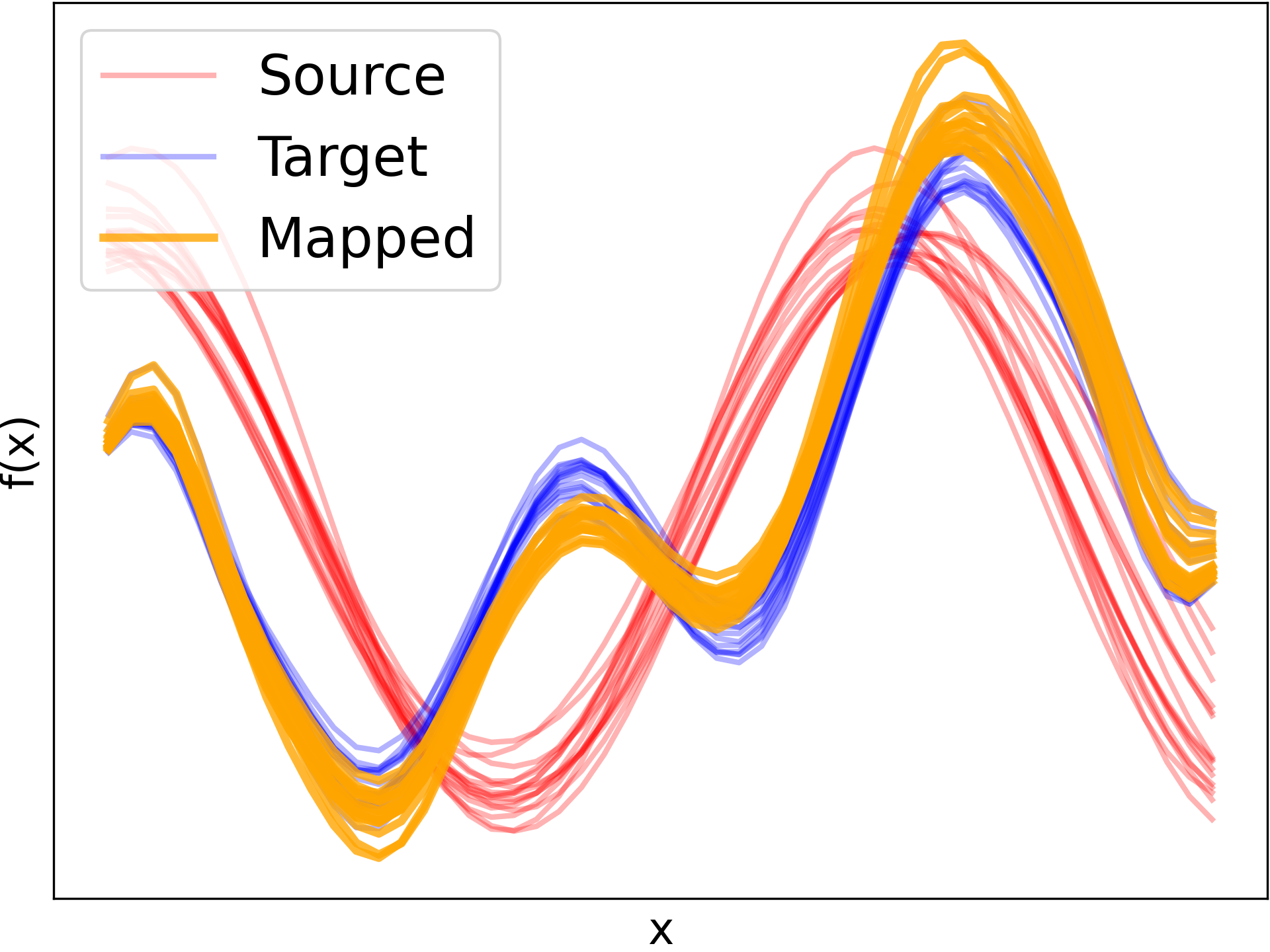}
  
  \caption{Result with given basis functions}
  \label{fig:toy_exp_fpca_pushforward_given_basis}
\end{subfigure}
\hfill
\begin{subfigure}[]{0.32\linewidth}
  \centering
  \includegraphics[width=1.0\linewidth]{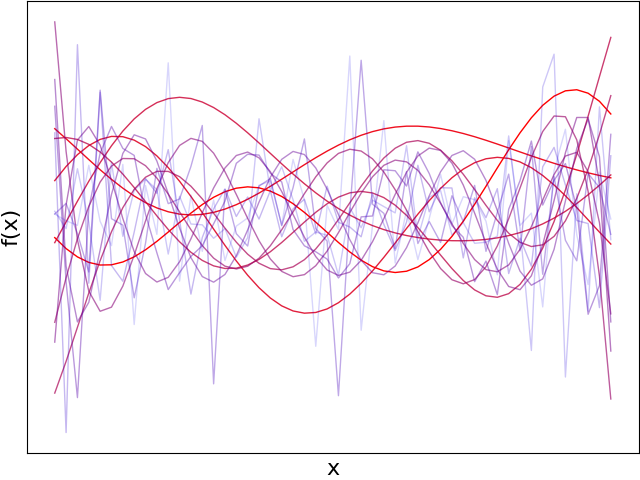}
  
  \caption{Learned basis functions $\hat{V}$}
  \label{fig:toy_exp_fpca_learned_basis_src}
\end{subfigure}
\hfill
\begin{subfigure}[]{0.32\linewidth}
  \centering
  \includegraphics[width=1.0\linewidth]{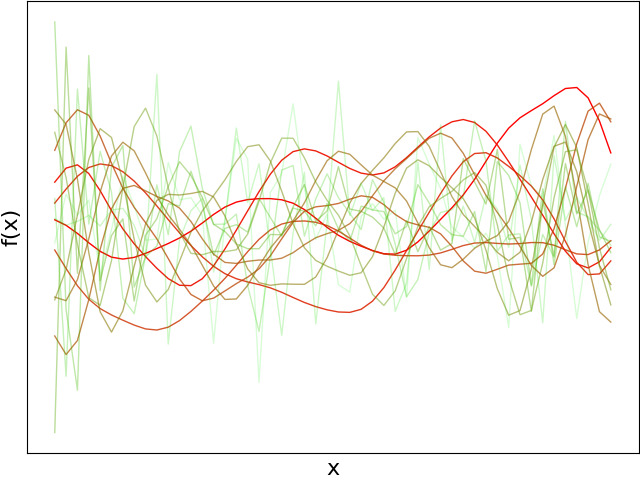}
  \caption{Learned basis functions $\hat{U}$}
  \label{fig:toy_exp_fpca_learned_basis_tgt}
\end{subfigure}
\hfill
\begin{subfigure}[]{0.32\linewidth}
  \centering
  \includegraphics[width=1.0\linewidth]{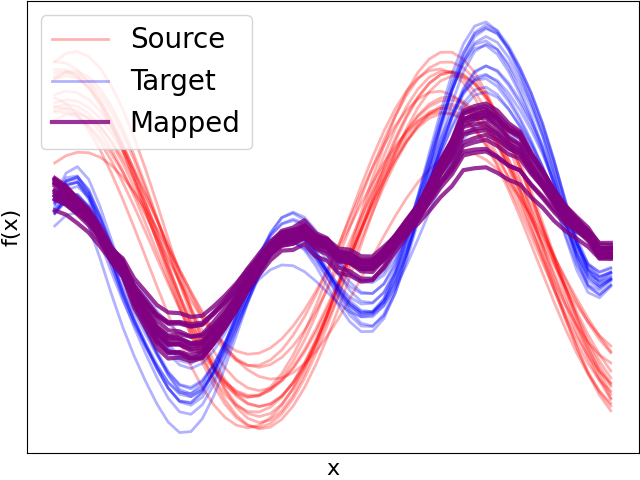}
  \caption{Result with estimated basis}
  \label{fig:toy_exp_fpca_pushforward_learned_basis}
\end{subfigure} 
\caption{{
Estimate basis functions from data using FPCA. When the ground-truth basis functions are provided, the estimated map can achieve perfect results (Fig. (\ref{fig:toy_exp_fpca_pushforward_given_basis})) with given basis (Fig. (\ref{fig:toy_exp_fpca_given_basis})). 
Using the estimated basis functions (Fig. (\ref{fig:toy_exp_fpca_learned_basis_src}) and Fig. (\ref{fig:toy_exp_fpca_learned_basis_tgt})), the map can still have satisfactory results (Fig. (\ref{fig:toy_exp_fpca_pushforward_learned_basis})). 
}}
\label{fig:toy_exp_fpca}
\end{figure*}

\begin{figure*}[ht]
\begin{subfigure}[]{0.32\linewidth}
  \centering
  \includegraphics[width=1.0\linewidth]{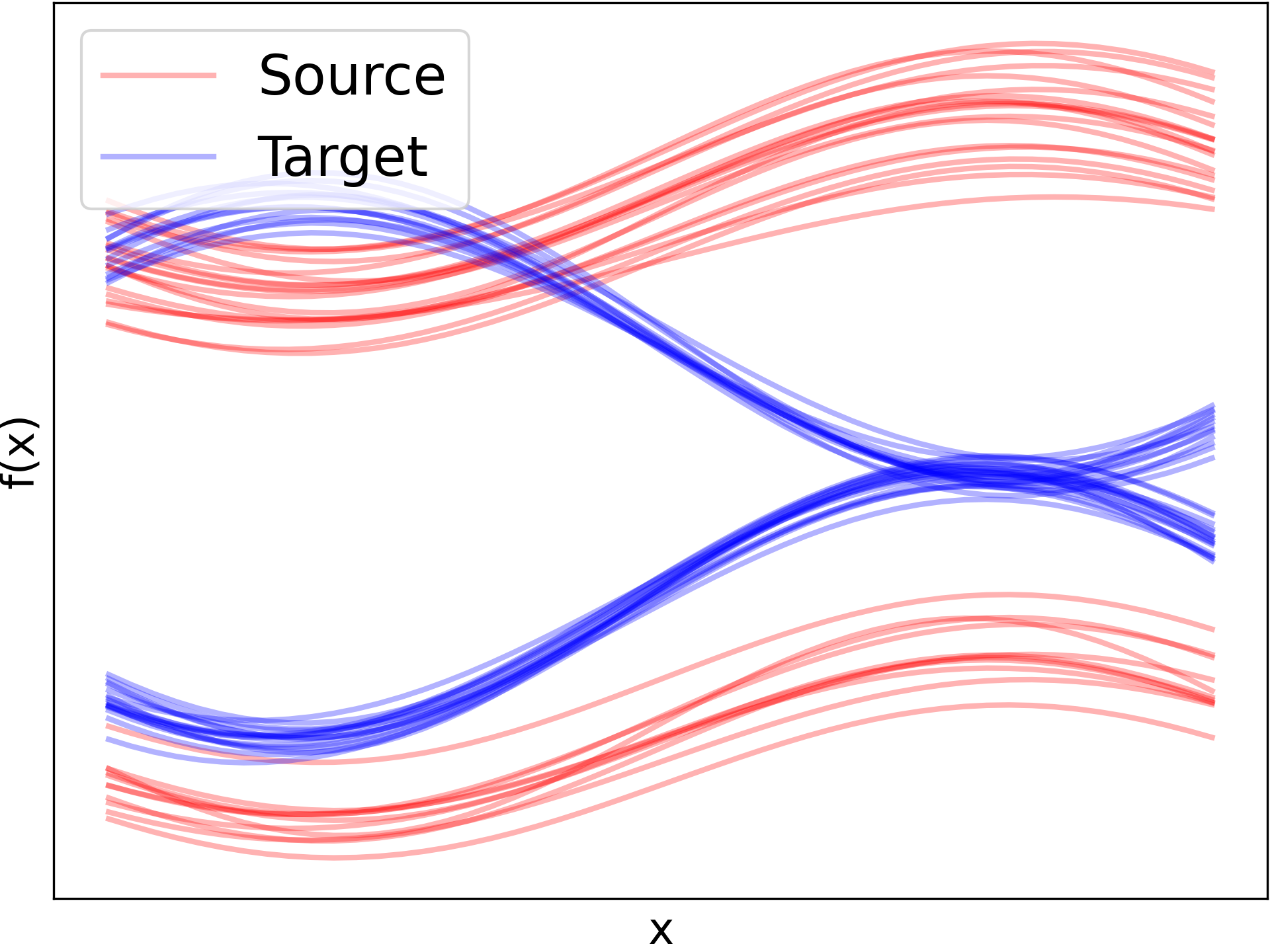}
  \caption{functional data}
  \label{fig:toy_exp_fpca_data_MM}
\end{subfigure}
\hfill
\begin{subfigure}[]{0.32\linewidth}
  \centering
  \includegraphics[width=1.0\linewidth]{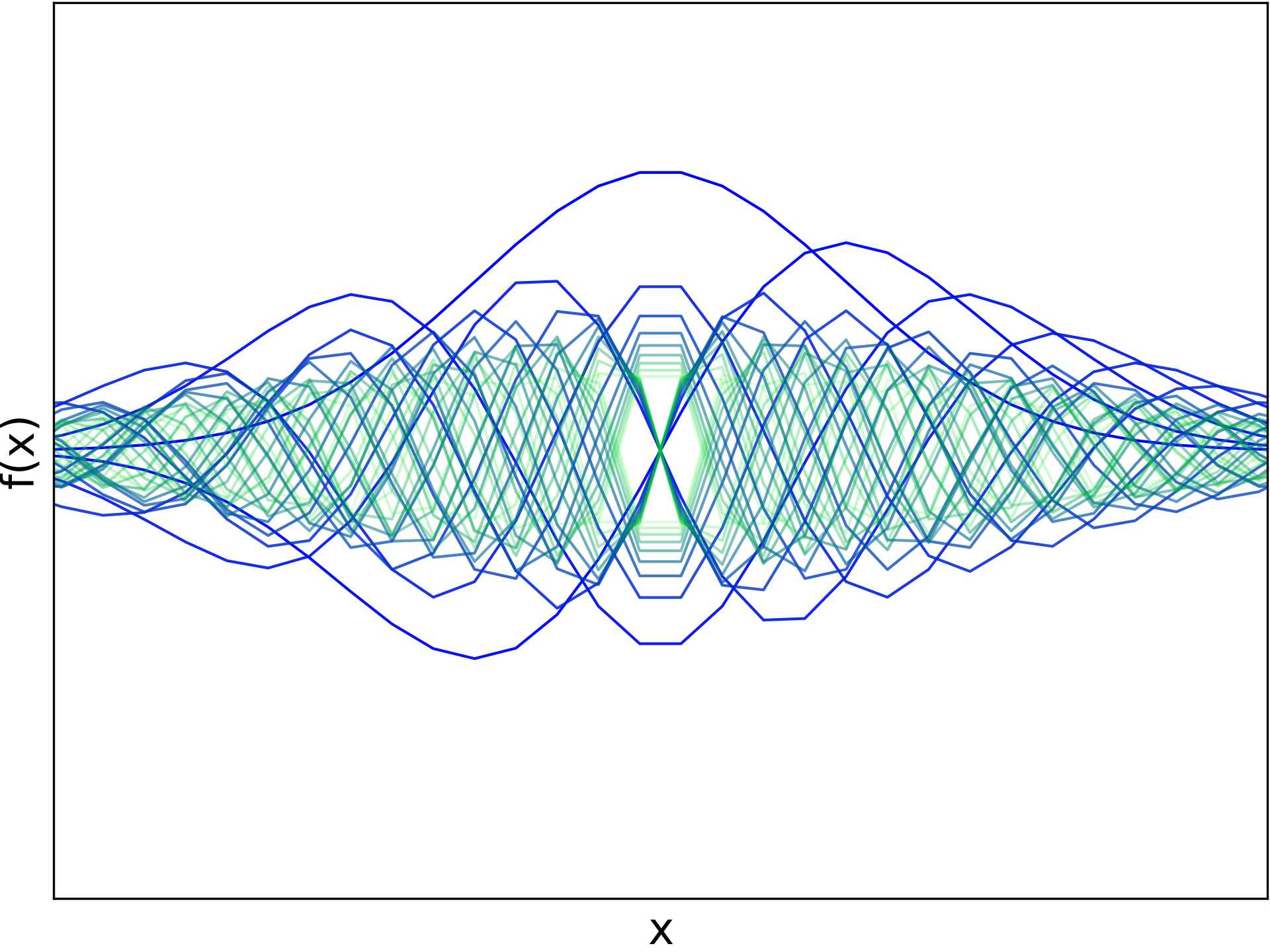}
  
  \caption{Basis functions for data generation}
  \label{fig:toy_exp_fpca_given_basis_MM}
\end{subfigure}
\hfill
\begin{subfigure}[]{0.32\linewidth}
  \centering
  \includegraphics[width=1.0\linewidth]{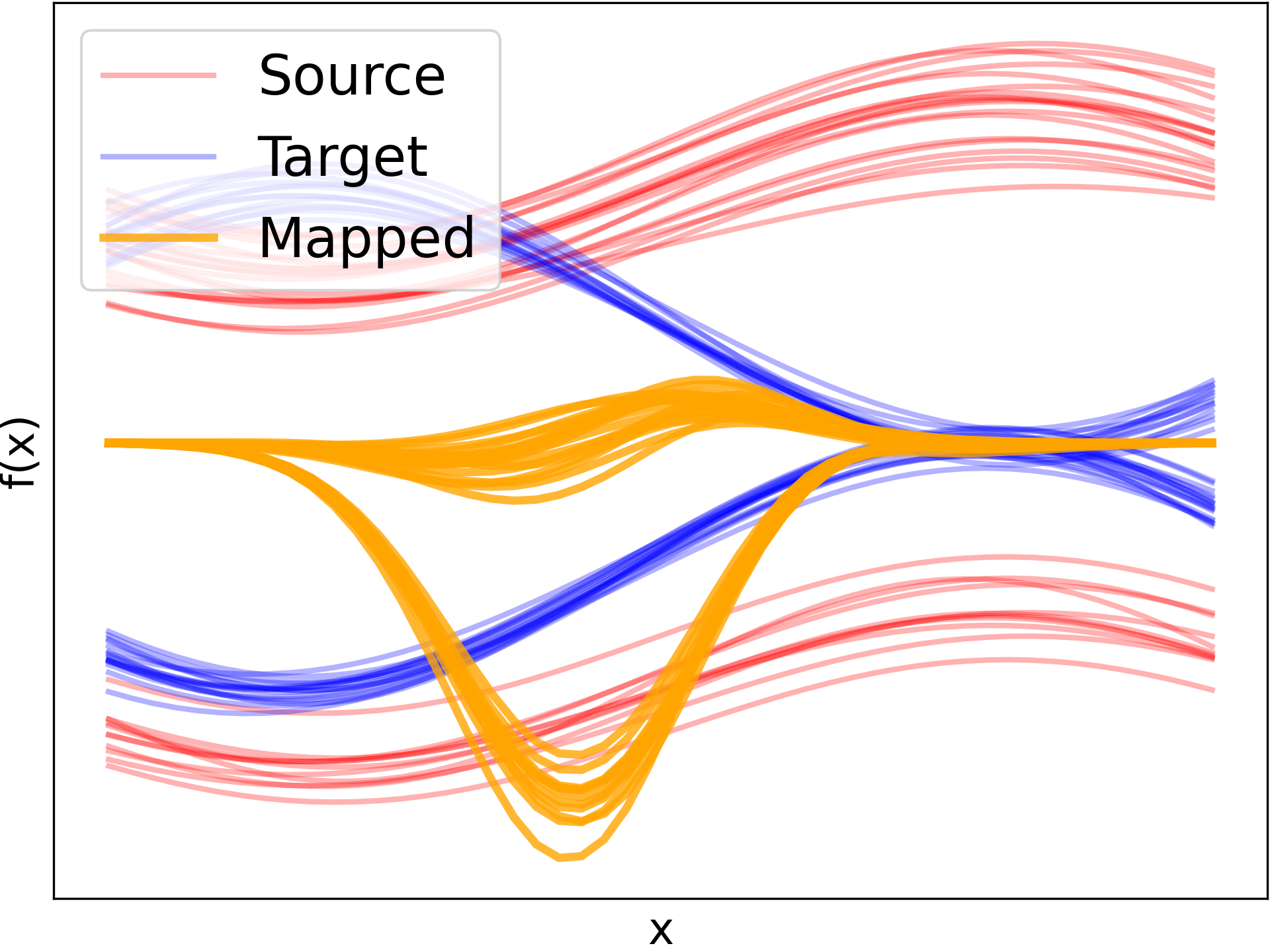}
  
  \caption{Result with misspecified basis functions
  }
  \label{fig:toy_exp_fpca_pushforward_given_basis_MM}
\end{subfigure}
\hfill
\begin{subfigure}[]{0.32\linewidth}
  \centering
  \includegraphics[width=1.0\linewidth]{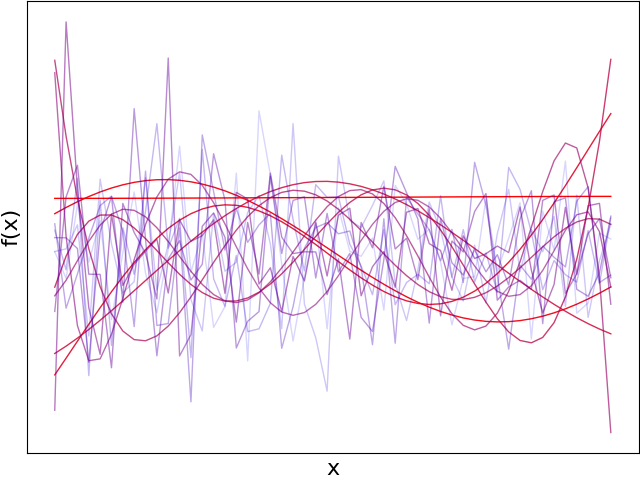}
  
  \caption{Learned basis $\hat{V}$}
  \label{fig:toy_exp_fpca_learned_basis_src_MM}
\end{subfigure}
\hfill
\begin{subfigure}[]{0.32\linewidth}
  \centering
  \includegraphics[width=1.0\linewidth]{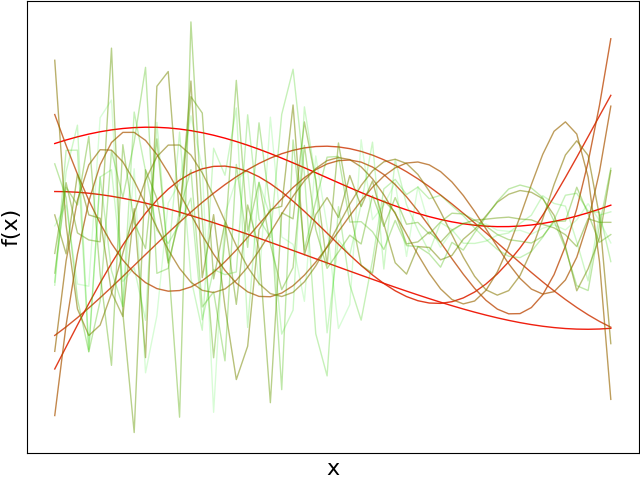}
  \caption{Learned basis $\hat{U}$}
  \label{fig:toy_exp_fpca_learned_basis_tgt_MM}
\end{subfigure}
\hfill
\begin{subfigure}[]{0.32\linewidth}
  \centering
  \includegraphics[width=1.0\linewidth]{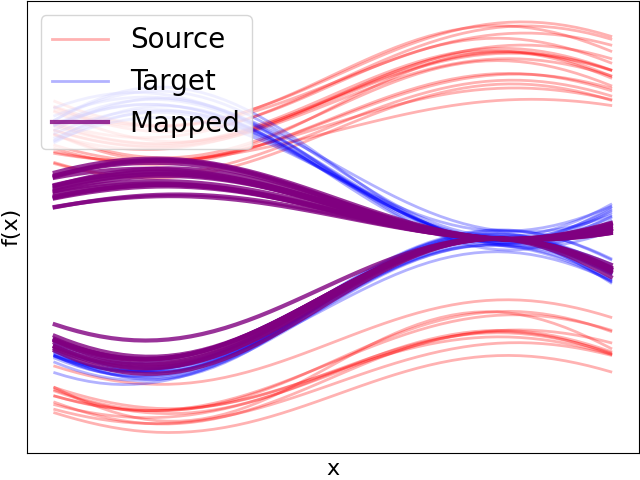}
  \caption{Result with learned basis}
  \label{fig:toy_exp_fpca_pushforward_learned_basis_MM}
\end{subfigure} 
\caption{
{
Defining appropriate basis functions (Fig. (\ref{fig:toy_exp_fpca_given_basis_MM})) for optimal map estimations itself is a challenging task. 
Employing FPCA (Fig. (\ref{fig:toy_exp_fpca_learned_basis_src_MM}) and Fig. (\ref{fig:toy_exp_fpca_learned_basis_tgt_MM})) yields superior map estimation outcomes (Fig. (\ref{fig:toy_exp_fpca_pushforward_learned_basis_MM})). In contrast, misspecified basis functions from randomly selected basis function coefficients result in sub-optimal map pushforward samples (Fig. (\ref{fig:toy_exp_fpca_pushforward_given_basis_MM})).
}}
\label{fig:toy_exp_fpca_MM}
\end{figure*}

\noindent \textbf{Continuity/ Multimodality preserving properties:} As shown in Fig. (\ref{fig:out-of-sample}), the map learned by FOT does a good job at pushing forward out-of-sample curves that were not observed during training. Moreover, the coupling $\pi$ reveals the multi-modality in the data: as the source distribution is unimodal but the target distribution is bimodal, there is a "splitting" behavior in how the sampled curves from the source are distributed and transported to the target. Finally, Fig. (\ref{fig:continuous-index}) shows that FOT is very effective for functional data evaluated at different design points. On the upper right panel of Fig. (~\ref{fig:out-of-sample}), the estimated integral operator $\hat{\mathbf{T}}_K$ is shown; note that it is close to an identity matrix while having some permutations around the first elements. We show the estimated coupling $\pi$ on the lower right panel. The coupling clearly reveals the underlying cluster structure in the target function data.

\noindent\textbf{Comparison to OT methods for finite-dimensional vectors:} Although one can always apply existing OT map estimation methods such as that of \cite{alvarez2019towards_joint,perrot2016mapping_est_discre_OT, grave2019unsupervised} to functional data by simply discretizing continuous functions into fixed-dimension vector measurements, we shall demonstrate the ineffectiveness of such an approach due to its failure to properly account for the functional nature (e.g. smoothness) of the data in the source and/or target domain (see Fig.~\ref{fig:nonfunctionalex}). In particular, we present experiments and comparisons with more baseline methods under the same settings considered in Section \ref{section_simulation}.  In these experiments we assume all the functional data are evaluated on a set of fixed-size design points to apply conventional OT map estimation methods for fixed-dimensional vectors directly. In addition, the observed continuous function data is perturbed by non-continuous noise.
Under this setting, all baseline OT formulations neglect the smooth nature of functional data and overfit the signals contaminated with noises.
Only the pushforward of maps estimated with GPOT \citep{mallasto2017learning_wGP} and our methods 
successfully recover the smoothness of the target curves. 
This suggests the necessity of treating data as sampled functions (rather than sampled vectors). 
Plot (h) and plot (i) of Fig.~\ref{fig:nonfunctionalex} show the role played by parameter $\eta$ in controlling the smoothness of the map. 
 

\begin{figure*}[t]
\subfloat[Entropic regularization]{%
\label{fig:reg_a}
\includegraphics[height=0.132\textheight]{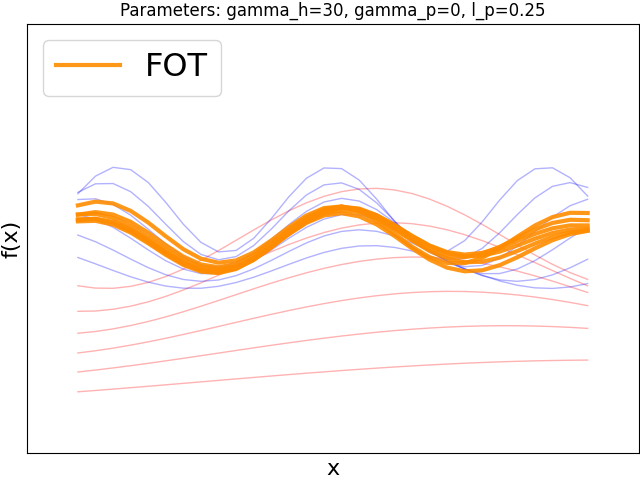}
\includegraphics[height=0.132\textheight]{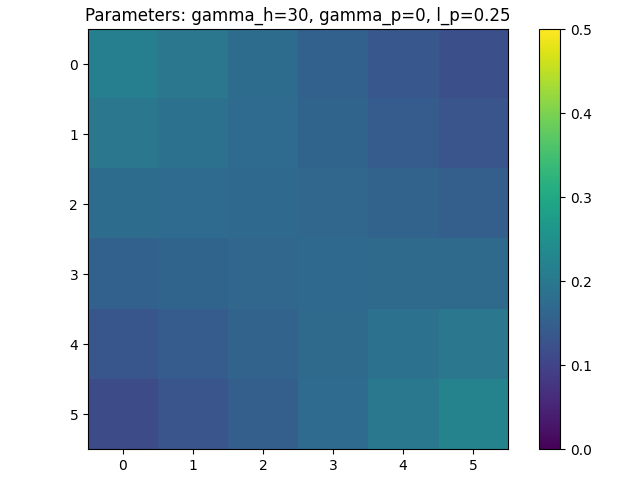}
}
\hspace*{\fill}
\subfloat[Power regularization $\gamma_p = 15$]{%
\label{fig:reg_b}
\includegraphics[height=0.132\textheight]{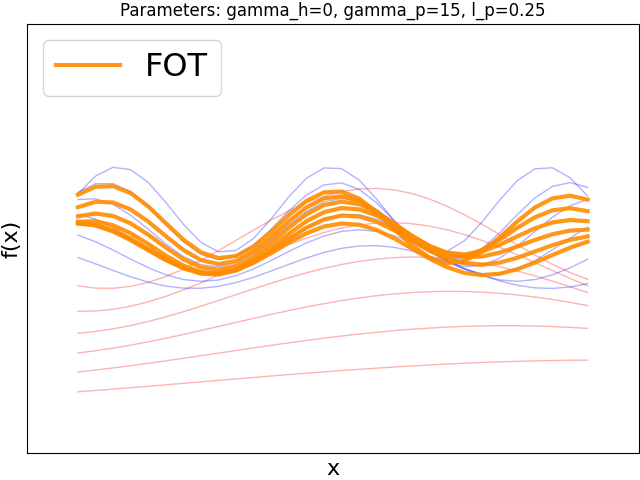}
\includegraphics[height=0.132\textheight]{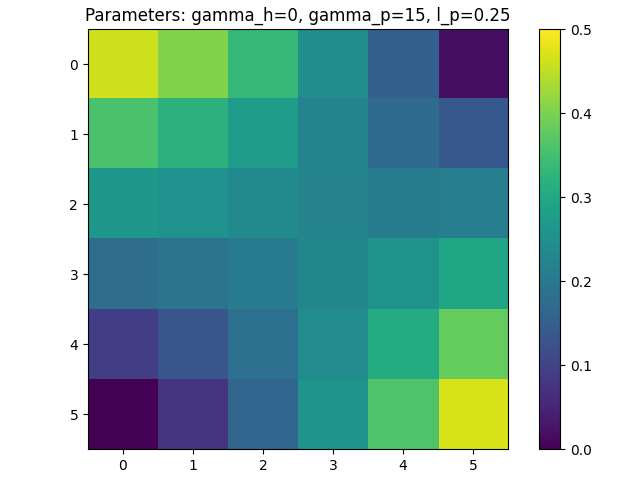}
}

\subfloat[$\gamma_p = 0$]{%
\label{fig:reg_p_0}
\includegraphics[height=0.132\textheight]{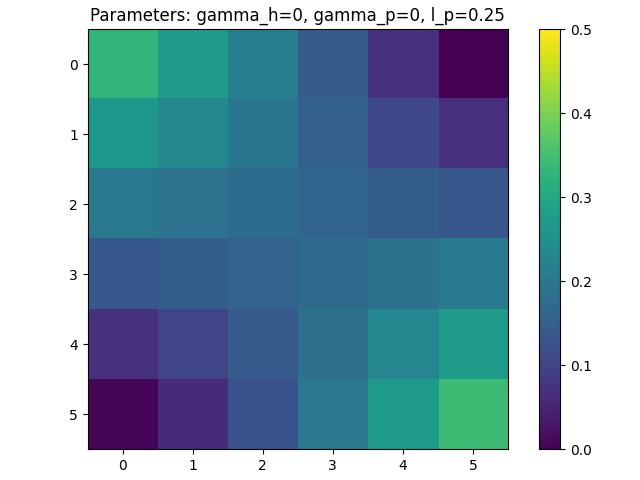}}
\hspace*{\fill}
\subfloat[$\gamma_p = 4$]{%
\label{fig:reg_p_4}
\includegraphics[height=0.132\textheight]{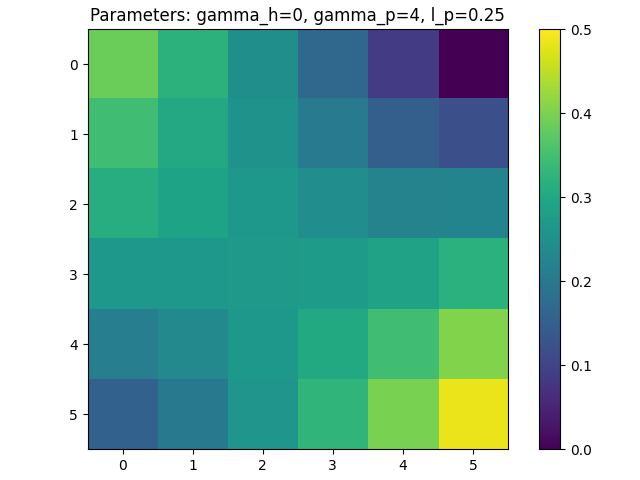}}
\hspace*{\fill}
\subfloat[$\gamma_p = 10$]{%
\label{fig:reg_p_10}
\includegraphics[height=0.132\textheight]{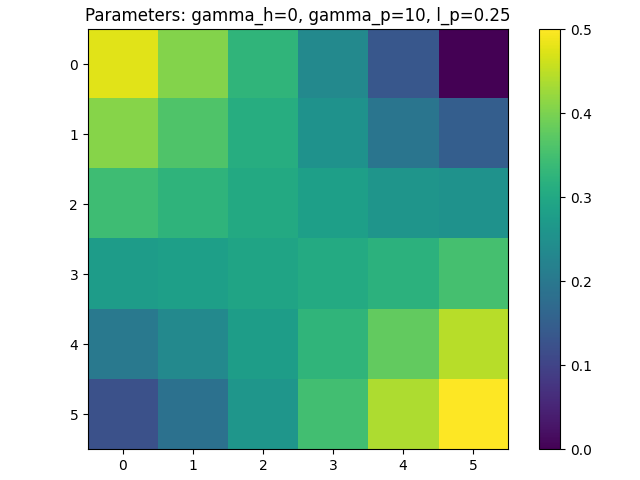}}
\hspace*{\fill}
\subfloat[entropy]{%
\label{fig:reg_p_vs_e}
\includegraphics[height=0.132\textheight]{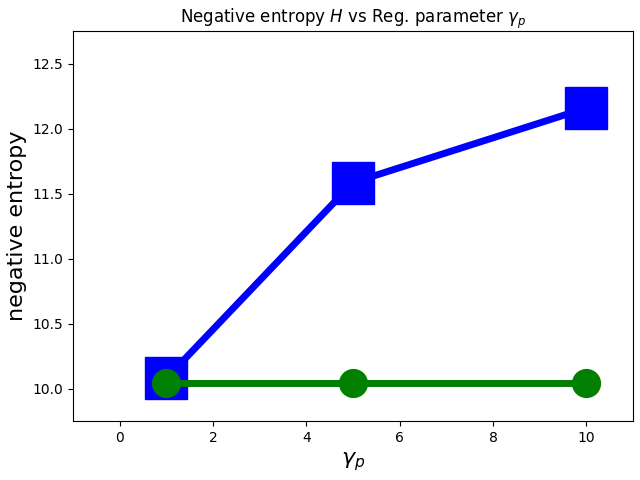}}
\caption{Panel (a) and (b) depict the pushforward samples and coupling matrix obtained by entropic and power regularization, respectively.  The entropic regularization leads to a smooth coupling while the corresponding pushforward samples concentrate near the average of the target samples, suggesting a rather poor transport map. Panels (b)--(f) show that the coupling becomes sparser as we increase the power regularization coefficient $\gamma_p$. 
Panel (f) confirms that the negative entropy of the coupling matrix increases with coefficient $\gamma_p$.} 
\label{fig:reg}
\end{figure*}

\paragraph{Selection of basis functions:}
We investigate the selection of basis functions using the FPCA approach introduced in Section \ref{sec:opt}. We generate the original data using Hermite polynomials as basis functions. 
As illustrated in Fig. \ref{fig:toy_exp_fpca}, knowing the ground-truth basis function when estimating the transport map leads to a consistent pushforward map. 
We also display the estimated basis functions for both the source (Fig. \ref{fig:toy_exp_fpca_learned_basis_src}) and target (Fig. (\ref{fig:toy_exp_fpca_learned_basis_tgt})) sample curves. 
{ Following the FPCA procedure described in Sec. \ref{sec:opt}, we assembled the function values $y_i(x)$ into matrix $\mathbf{Y}$, with each row representing a function sample as a vector. Leveraging the shared index set, we applied Singular Value Decomposition (SVD) to $\frac{1}{N} \mathbf{Y}$, yielding $\mathbf{U} \mathbf{S} \mathbf{V}^\top$ where the columns of $\mathbf{U}$ are the estimation of basis functions.}
The estimated basis functions are ordered by the degree of oscillation. As we can see, the estimated pushforward map does not match the pushforward curves perfectly but is still satisfactory, as illustrated in Fig. (\ref{fig:toy_exp_fpca_pushforward_learned_basis}). 

In numerous situations, the basis functions for complex multimodal distributions may not be readily available; estimating a highly complex parametrized family of basis functions effectively can be challenging. Under these conditions, employing a data-driven approach such as FPCA for obtaining the most relevant basis functions can yield improved performance. 
The flexibility and usefulness of the FPCA approach are illustrated in  Fig. \ref{fig:toy_exp_fpca_MM} which demonstrates that our algorithm augmented with the FPCA-based basis functions can effectively adapt to previously unobserved data distributions.

\noindent\textbf{Effects of regularization:} 
The final set of simulations is designed to evaluate the effects of regularization terms in the FOT formulation. The results of this study are depicted in Fig. \ref{fig:reg}.
We observe that the power regularizer finds sparser coupling distributions than entropic regularization. This phenomenon is expected as the entropic penalty keeps the coupling strictly positive in its support. The lack of sparsity can be problematic, especially in the case of iterative map estimation, where a sparse coupling distribution is crucial for learning a meaningful and expressive transport map between domains of functions. 
It is worth noting that, the estimated coupling represents the joint probability density matrix and it reveals the clustering structure of data (cf. Fig. (\ref{fig:out-of-sample})). 

\subsection{Optimal transport domain adaptation for robot arm's multivariate sequences of motion}

Recent advances in robotics include many novel data-driven approaches such as motion prediction \citep{jetchev2009trajectory_icml}, human-robot interaction \citep{liu2018serocs_HRI}, and others \citep{tompkins2020online_slam_ot, xu2020task_agnositc_DPGP}.
However, generalizing knowledge across different automated tasks for a robot, and generalizing across robots, are considered challenging since data collection in the real world is expensive and time-consuming. 
A variety of approaches have been developed to tackle these problems, such as domain adaptation \citep{bousmalis2018_robotarm_domain_adaptation}, transfer learning \citep{weiss2016survey_transferlearning}, and so on \citep{tobin2017robotarm_domainrandom, finn2017model_MAML}.

\textbf{Optimal transport based domain adaptation:} 
{ We propose to eliminate the heterogeneity in robot learning datasets by following the formulation of}
optimal transport based domain adaptation (OTDA) \citep{courty2016optimal_OTDA}. 
{ Specifically, }
the pipeline consists of the following three steps: 1) learn an optimal transport map, 2) apply the pushforward map on the observed source samples towards the target domain, and 3) train a motion predictor on the pushforwarded samples that lie in the target domain. 

Although it might be possible to discretize and interpolate data to fixed-size vectors of observed measurements, trajectories of robot motion are intrinsically \textit{continuous functions of time of various lengths}. Functional optimal transport provides a natural solution for this challenging task over existing OT map estimation methods for discrete samples.

{
Typically, a robot motion dataset $\{f_i\}_{i=0}^N$ contains multiple trails $f_i$ for a specific task. Although these trials are somewhat similar, they differ slightly due to various real-world factors due to sensor and actuator noises, and human intervention. 
Each trail can be viewed as a multidimensional $n_t$ length timeseries representing the joint or end effector location over time $f_i:= \{(f_{i,1}, t_{i,1}), ..., (f_{i,n_t}, t_{i,n_t}) \}$. Each robot motion dataset is viewed as samples for a distribution $\mu$, via the empirical distribution $\hat{\mu}=\sum_{i=1}^{N} p_{i} \delta_{\hat{f_i}}$, where $\delta_{\hat{f_i}}$ is the Dirac measure at $\hat{f_i}$, which is the embedded function of $f_i$ in the Hilbert space using basis functions, and $p_{i}$ are probability masses associated to the $i$-th sample ($\sum_{i} p_{i}=1$). 
Given a source and target robot-motion dataset $D_s$ and $D_t$ (with empirical measures $\hat{\mu}_{s}$ and $\hat{\mu}_{t}$, respectively), we assume that there are sufficient samples in the source $\{ f_{s,i}\}_{i=1}^{N_s}$ but \textit{only limited samples} in the target $\{ f_{t,j}\}_{j=1}^{N_t}$, where $N_s \gg N_t$. To obtain an ML model in the target domain, we want to leverage the knowledge in the source by using a transport map that pushes forward all samples $\{f_i\}$ to match the distributions. At this point, the transport map can be estimated using the functional optimal transport formulation, by solving $T^* = \argmin_T W_2 (T_\# \hat{\mu}_s, \hat{\mu}_t ) + \eta \| T \|^2_F$.
}





\begin{figure}[h]\centering

\includegraphics[width=0.9\linewidth]{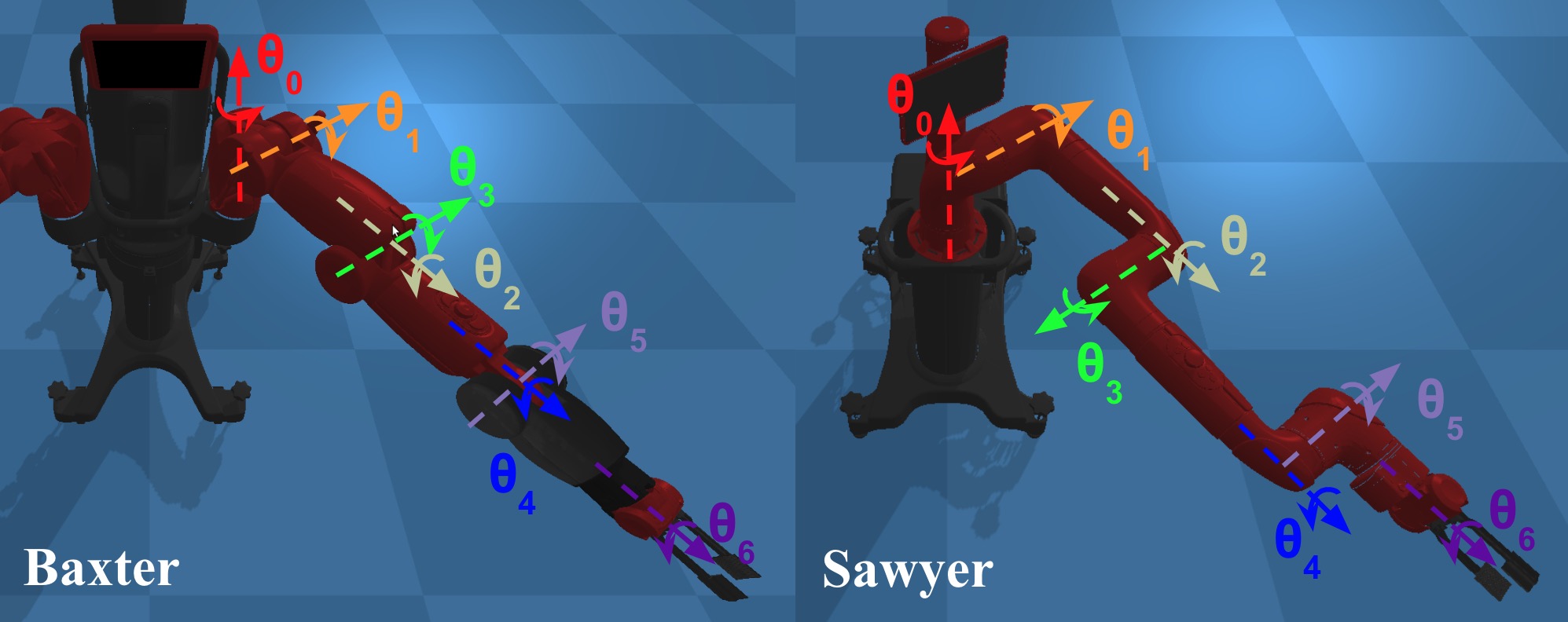}
\caption{The structure of the Baxter robot and the Sawyer robot used in MIME dataset and Roboturk dataset. Their arms share a similar structure as they both have 7 joints and one end effector. This allows us to perform domain adaptation between these two datasets.
}\label{fig:robot_arm_compare}
\end{figure}

\begin{figure}[h]\centering
\begin{subfigure}{0.99\textwidth}
        \centering
        \includegraphics[width=0.95\linewidth]{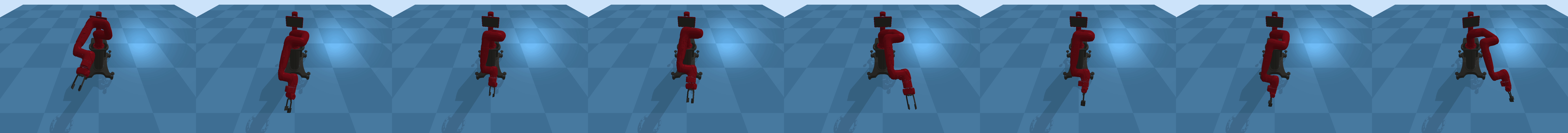}%
        
        \includegraphics[width=0.99\linewidth]{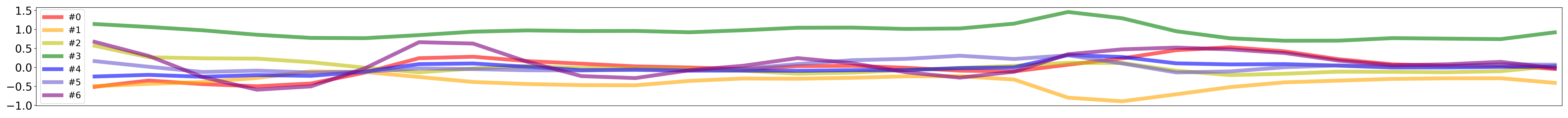}
        \caption{Source motion:  "Roboturk-bins-Bread"  by Sawyer robot.}
        \label{fig:robotarm_source}
        
        \includegraphics[width=0.95\linewidth]{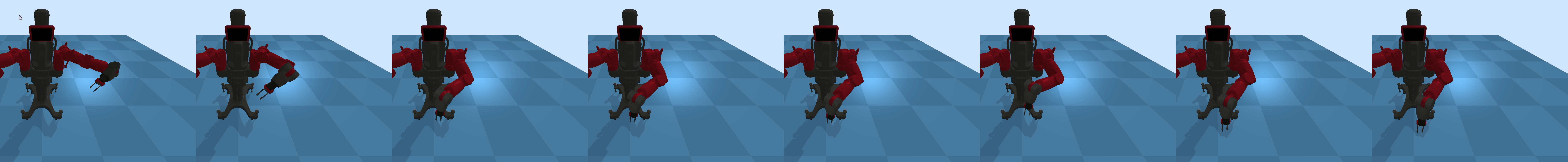}%
        
        \includegraphics[width=0.99\linewidth]{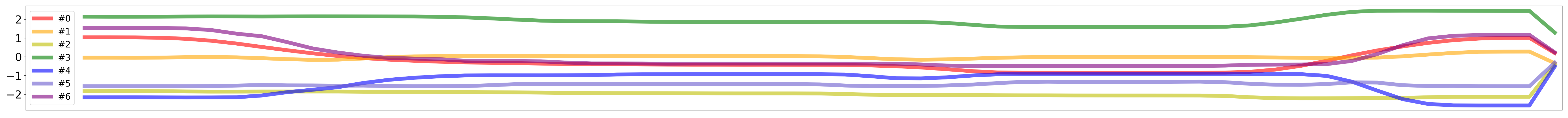}
        \caption{Target motion:  "MIME Picking (left-hand)"  by Baxter robot.}
        \label{fig:robotarm_target}
        
        \includegraphics[width=0.95\linewidth]{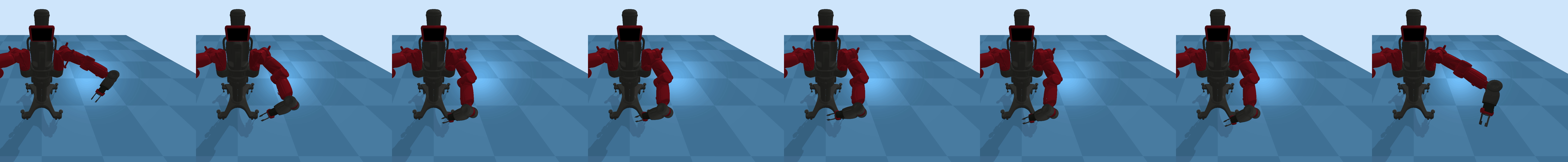}%
        
        \includegraphics[width=0.99\linewidth]{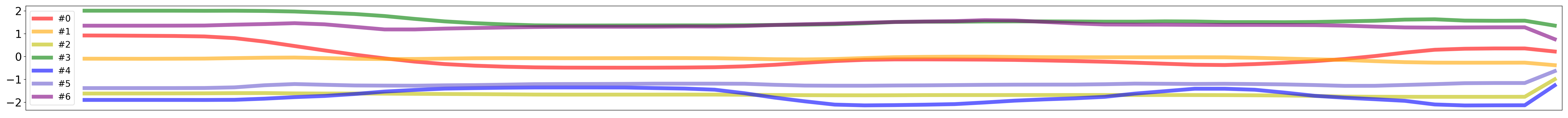}
        \captionsetup{justification=centering}
        \caption{The pushforward motion of the transport map and  the target motion \newline look similar to each other but differ slightly. }
        \label{fig:robotarm_mapped}
        
    \end{subfigure}
\caption{Pushforward of robot arm motions: In each sub-figure, a robot-arm motion is visualized as image clips, while the robot-arm joint angles are plotted as a multivariate time series, the x-axis is time and are omitted for cleanliness.}
\label{fig_robot}
\end{figure}


\textbf{Datasets:} The \textbf{MIME} Dataset \citep{sharma2018_mime_dataset} contains 8000+ motions across 20 tasks 
collected on a two-armed Baxter robot. 
The \textbf{Roboturk} Dataset \citep{mandlekar2018roboturk} is collected by a Sawyer robot over 111 hours.
As shown in Fig. (\ref{fig:robot_arm_compare}), both robot arms have 7 joints with similar but slightly different configurations, which enable us to learn domain adaptation between the two.
We picked two tasks, \textit{Pouring (left arm) and Picking (left arm)},
from MIME dataset and two tasks, \textit{(bins-Bread, pegs-RoundNut)},
from Roboturk dataset.
We considered each task as an individual domain. 

\textbf{Pushforward of robot motions:}  Our method successfully learns the transport map that pushes forward samples from one task domain to another. The source dataset contains motion records
from task \textit{bins-full} in the Roboturk dataset while the target includes motion records from task \textit{Pour (left-arm)} in the MIME dataset. We visualize the motion by displaying the robot joint angles sequences in a physics-based robot simulation gym \citep{erickson2020assistivegym}. Animated motions can be found here\footnote{More examples can be found here: \indent\href{https://sites.google.com/view/functional-optimal-transport}{https://sites.google.com/view/functional-optimal-transport}.}. In Fig. \ref{fig_robot}, we show image clips of each move along with a plot of time series of joint angles.
We can see from the robot simulation that the pushforward sequence in Fig. \ref{fig:robotarm_mapped} matches with the target motion in Fig. \ref{fig:robotarm_target} while simultaneously preserving certain features of the source motion in Fig. \ref{fig:robotarm_source}.

\begin{figure}[t] 
\resizebox{\textwidth}{!}{
\begin{tabular}{ p{1.3cm}|p{1.5cm}|p{1.5cm}|p{1.5cm}|p{1.5cm} | p{1.5cm} |p{1.5cm}|p{1.5cm}|p{1.5cm}|p{1.5cm} | p{1.5cm} }
 \hline
 Method  &  LSTM & ANP & RANP  &  MAML\textsuperscript{*} &  TL\textsuperscript{*} &  FOT\textsubscript{LSTM} &   FOT\textsubscript{ANP} & FOT\textsubscript{RANP} &  FOT\textsubscript{MAML} &  FOT\textsubscript{TL}  \\ 
 \hline
 \hline
 R1$\to$M1     & {2.0217} &  1.3261  &  1.9874 &  0.0307 & 0.5743 & 0.0271 & 0.0963 & 0.0687 & 0.0165 & 0.0277  \\
 R1$\to$M2    & 1.6821 &  1.0951  &  1.5681 &  0.0374 & 0.7083  & 0.0414 & 0.1642 & 0.1331 & 0.0191 & 0.0446    \\
 R2$\to$M1    & 1.3963 &  0.6642  &  1.7256 &  0.0327 & 0.2491  & 0.0277 & 0.0951 & 0.0696 & 0.0202 & 0.0906   \\
 R2$\to$M2    & 1.1952 &  0.6307  &  1.3659 &  0.0477 & 0.4020  & 0.0331 & 0.1620 & 0.1554 & 0.0167 & 0.0406  \\
 
 \hline 
\end{tabular}
}
\captionof{table}{MSE error results of different predictive models. R1: Roboturk-bins-bread, R2: Roboturk-pegs-RoundNut, M1:MIME1-Pour-left, M2: MIME12-Picking-left. }
\label{table:robot_arm_DA_results}
\end{figure}

{\textbf{Motion prediction}}: 
{
For the \textit{Robot Arm Motion Prediction} task, a motion trajectory $f_i = (f_{i,1}, t_{i,1}),...,(f_{i,l}, t_{i,l})$ of length $l$ consists of a set of vectors $f_{i,j} \in \mathbb{R}^d$ with associated timestamps $t_{i,j}$,
where the time series trajectories are governed by continuous functions of time $f_S(t) : t \in \mathbb{R} \mapsto S \in \mathbb{R}^d$.  
Since the task is to predict the future $l_f$ points based on the past $l_p$ points, ignoring the index of individual trajectories, we arrange the data to have the format 
$
X_{t} = \{ (f_{t+1}, t + 1), ..., (f_{t+l_p}, t + l_p) \}
$,
$
Y_{t} = \{ (f_{t+l_p + 1}, t + l_p + 1), ..., (f_{t+l_p+l_f}, t + l_p + l_f) \}
$.}
Our task is learning a predictive model that minimizes the squared prediction error in the target domain $\label{transfer_learning}
\arg \min_\theta  \sum^M_{i=1}(F_{\theta}(X^t_i ) - Y^t_i)^2$
where $Y^t_i$ is the true label from target domain and $ \hat{Y}^t_i = F_{\theta}(X^t_i)$ is the predictive label estimated by a model trained on source domain $(X^s, Y^s)$ and a subset of target domain $(X^{tm}, Y^{tm})$. 
{ It is worth noting that if the distribution of the testing set differs from that of the target set, a predictive model trained solely on the training set will experience a significant performance drop on the target set. To address this issue, several paradigms have been proposed, including transfer learning, meta-learning, and few-shot learning.} 

{
\textbf{Methods}: 
We considered 5 baselines for this task, including 
\begin{itemize}
\item [(1)] a conventional baseline approach we employ is a vanilla LSTM that is trained exclusively on the source data samples. Despite the fact that LSTM models are recognized for their ability to capture temporal dependencies, their performance may still be restricted due to the distributional discrepancy;
\item [(2)] the Attentive Neural Process (ANP) \citep{kim2019_ANP}, which is a deep Bayesian model that learns a distribution of functions. ANP can be seen as models that do few-shot learning since it looks for predictive distribution conditioning on context data;
\item [(3)] the RANP model \citep{qin2019_ANPRNN}, which is an extension of the ANP models, which incorporates the temporal dependency of data using an LSTM model. Similar to the ANP, the RANP is also adept at few-shot learning tasks;
\item [(4)] the MAML model (Model-Agnostic Meta-Learning), which is a popular meta-learning algorithm \citep{finn2017model_MAML};
It is designed to enable fast adaptation of a model's parameters to new tasks with limited data. To enable a fair comparison, we learn the meta parameters on a small set of various tasks and perform limited $t=1$ adaptation step on the target domain;
\item [(5)] and finally, the conventional transfer learning (TL) \citep{weiss2016survey_transferlearning} method, where we first pre-trained the model on the source domain and then fine-tuned it on the target domain. 
To leverage our proposed FOT, we use the estimated map to pushforward all source samples to match the target distribution, and then use these samples for the training, adaptation, or the fine-tuning process for the above baseline methods. 
\end{itemize}
}

\textbf{Results:} The results are given in Table \ref{table:robot_arm_DA_results}. 
{
It is noted that the vanilla and few-shot training approaches are having large errors, as expected, while MAML and transfer learning have better generalization ability as they have the access to some target samples.
However, we have also observed that, somewhat surprisingly, utilizing pushforward samples from the FOT transport map enhances the performance of LSTM, NP, and RANP to surpass the baseline performance of both meta-learning and transfer learning approaches. 
Additionally, even MAML and TL approaches can benefit from utilizing the mapped samples from the FOT. This is because the pushforward data offer additional samples that adhere to the target distribution, which helps mitigate the distributional gap due to the model misspecification.}

All experiments were implemented with Numpy and PyTorch (matrix computation scaling) using one GTX2080TI GPU and a Linux desktop with 32GB memory.
For all simulations, we set the optimization coefficients as $\rho_k = 800 \times \mathbf{1} \in \mathbb{R}^{N \times 1}$, $\rho_l = 800 \times \mathbf{1} \in \mathbb{R}^{n \times 1}$, $\eta = 0.001$, $\gamma_h = 40$, $\gamma_p = -10$, power $p=3$. The learning rate for updating $\mathbf{\Lambda}$ is $lr_{\Lambda} = 4e-4$, the learning rate for updating $\pi_{lk}$ is $lr_{\pi} = 1e-5$. The maximum iteration step is set as $T_{max} = 1000$. 
{ In the experimental results on the robot-arm datasets, the hyperparameters are set by $\gamma_h=30$, $\gamma_p=-30$, and power $p=3$. The same hyperparameters as in the simulation experiments were employed. }
We found that our algorithm's performance was not sensitive to varying hyperparameters. 
{ Specifically, when hyperparameters are perturbed around these values, the performance of the downstream domain adaptation experiments remains stable.}

\section{Proofs}
\label{section:proof}
In this section, we provide proofs for theoretical results in Section \ref{section:OT on Hilbert} and Section \ref{section:methodology}. We first define some notations regarding the proofs.

\paragraph{Notations.} Fix Borel probability measures $\mu$ on $H_1$ and $\nu$ on $H_2$. We define the cost function (without regularization term) to be $\Phi(T): = W_2(T\# \mu, \nu)$ for $T\in \Bspace(H_1, H_2)$. For the ease of notation, as in the main text we write $n$ for $(n_1, n_2)$, $K$ for $(K_1, K_2)$, $\Bspace$ for $\Bspace(H_1, H_2)$ and $\BKspace$ for its restriction on the space spanned by the first $K_1\times K_2$ basis operators. $\|\cdot\|_{HS}$ and $\|\cdot\|_{op}$ are used to denote the Hilbert-Schmidt norm and operator norm on operators, respectively. It is known that $\|T\|_{op}\leq \|T\|_{HS}$ for all operator $T$.

In this section we often deal with convergence of a sequence with multiple indices. Specifically, we say a tuple $m=(m_1, \dots, m_p)\to \infty$ when $m_1\to \infty, \dots, m_p\to \infty$. By saying that a sequence $A(m_1, m_2,\dots, m_p)$ of index $m=(m_1, \dots, m_p)$ converge to a number $a$ as $m\to \infty$, it is meant that for all $\epsilon > 0$, there exists $M_1, \dots, M_p$ such that for all $m_1 > M_1,\dots, m_p > M_p$, we have
 \begin{equation}
    | A(m_1, \dots, m_p) - a| < \epsilon.
 \end{equation}

We write $(m_1, m_2,\dots, m_p) > (m'_1, m'_2,\dots, m'_p)$ if $m_1 > m'_1, \dots, m_p > m'_p$. 

We say a function $f:\Bspace(H_1, H_2)\to \mathbb{R}$ is \textit{coercive} if \begin{equation}\label{eq:coercive}
    \lim_{\|T \|_{HS} \to \infty} f(T) = \infty,
\end{equation}
and it is \textit{(weakly) lower semi-continuous} if \begin{equation}
     f(T_0) \leq \liminf_{k\to \infty} f(T_k),
\end{equation}
 for all sequences $T_k$ (weakly) converging to $T_0$. Further details on convergence in a strong and weak sense in Hilbert spaces can be found in standard texts on functional analysis, e.g., \citep{yosida1995functional}.
 
Now we are going to prove the results presented in Section 3 of the main text. 
For ease of the readers, we recall all statements before proving them.

\paragraph{Existence and uniqueness}
First, we verify some properties of the objective function $J$.
\begin{customlemma}{\ref{lem:propsofJ}}\label{three}
The following statements hold.
    \begin{enumerate}[itemsep=0mm]
        \item[(i)] $W_2(T \# \mu, \nu)$ is a Lipschitz continuous function of $T \in \Bspace(H_1, H_2)$, which implies that 
        $J:\Bspace \rightarrow \mathbb{R}_+$ is also continuous.
        \item[(ii)] $J$ is a strictly convex function.
        \item[(iii)] There are constants $C_1,C_2 >0$ such that $J(T)\leq C_1 \|T \|_{HS}^2 + C_2\;\; \forall T\in \Bspace$.
        \item[(iv)] $\lim_{\|T\|_{HS} \to \infty} J(T) = \infty$.
    \end{enumerate}
\end{customlemma}

\begin{proof}[Proof of Lemma \ref{lem:propsofJ}]
\begin{enumerate}
    \item We first show that the cost function (without regularization term) $\Phi(T) = W_2(T\# \mu, \nu)$ for $T\in \Bspace(H_1, H_2)$ is Lipschitz continuous. Indeed, consider any $T_1, T_2 \in \Bspace$, by the triangle inequality applied to Wasserstein metric,
\begin{align*}
    W_2(T_1\# \mu, \nu) - W_2(T_2\# \mu, \nu) & \leq W_2(T_1\# \mu, T_2 \# \mu) \\
    & = \left(\inf_{\pi \in \Pi(\mu, \mu)} \int_{H_1\times H_1} \|T_1 f_1 - T_2 f_2 \|_{H_2}^2 d\pi(f_1, f_2)\right)^{1/2}\\
    &\leq \left(\int_{H_1\times H_1} \|T_1 f_1 - T_2 f_2 \|_{H_2}^2 d\pi'(f_1, f_2)\right)^{1/2} \\
    &= \left(\int_{H_1} \|T_1 f_1 - T_2 f_1 \|_{H_2}^2 d\mu(f_1)\right)^{1/2}\\
    &\leq  \left(\int_{H_1} \|T_1 - T_2 \|_{op}^2 \|f_1 \|_{H_1}^2 d\mu(f_1)\right)^{1/2}\\
    &\leq \|T_1 - T_2\|_{HS} \left( \int_{H_1} \|f_1 \|_{H_1}^2 d\mu(f_1)\right)^{1/2}\\
    & = \|T_1 - T_2\|_{HS}  (E_{f\sim \mu} \|f\|_{H_1}^2)^{1/2},
\end{align*}
where $\pi'$ is the identity coupling. Hence, both $\Phi^2(T)$ and $\eta \|T \|_{HS}^2$ are continuous, which entails continuity of $J$ as well. 
    \item If we can prove that $\Phi^2(T)$ is convex with respect to $T$, then the conclusion is immediate from the strict convexity of $\eta \|T \|_{HS}^2$. We first observe that $W_2^2(\cdot, \nu)$ is convex, as for any measure $\nu_1, \nu_2$ on $H_2$ and $\lambda \in [0,1]$, if $\gamma_1$ is the optimal coupling of $(\nu_1, \nu)$ and $\gamma_2$ is the optimal coupling of $(\nu_2, \nu)$, then $\lambda \gamma_1 + (1-\lambda) \gamma_2$ is a valid coupling of $(\lambda \nu_1 + (1-\lambda)\nu_2, \nu)$, which yields
    \begin{align*}
        W_2^2(\lambda \nu_1 + (1-\lambda)\nu_2, \nu) & \leq \int_{H_1\times H_2} \|f - g \|_{H_2}^2 d(\lambda \gamma_1 + (1-\lambda) \gamma_2)(f, g)\\
        & = \lambda W_2^2(\nu_1, \nu) + (1-\lambda) W_2^2(\nu_2, \nu).
    \end{align*}
    Now the convexity of $\Phi^2(T)$ follows as for any $T_1, T_2 \in \Bspace, \lambda \in [0,1]$,
    \begin{align*}
        W_2^2(((1-\lambda)T_1 + \lambda T_2)\# \mu, \nu) & = W_2^2((1-\lambda) (T_1\# \mu) + \lambda (T_2\# \mu), \nu) \\
        &\leq (1-\lambda)W_2^2(T_1\# \mu, \nu) + \lambda W_2^2(T_2\# \mu, \nu).
    \end{align*}
    \item This can be proved by an application of Cauchy-Schwarz inequality and the fact that the operator norm is bounded above by the Hilbert-Schmidt norm. Let $\pi$ be any coupling of $\mu$ and $\nu$,
    \begin{align*}
        J(T) & = W_2^2(T \# \mu, \nu) + \eta\|T\|_{HS}^2\\
        & \leq\int_{H_1\times H_2} \|Tf_1 - f_2 \|_{H_2}^2 d\pi(f_1, f_2)  + \eta\|T\|_{HS}^2\\
        & \leq 2\int_{H_1 \times H_2} (\|Tf_1 \|_{H_2}^2 + \|f_2 \|_{H_2}^2) d\pi(f_1, f_2) + \eta\|T\|_{HS}^2\\
        & \leq 2\left(\|T \|_{HS}^2 \int_{H_1} \|f_1\|_{H_1}^2 d\mu(f_1) + \int_{H_2} \|f_2\|_{H_2}^2 d\mu(f_2)\right) + \eta\|T\|_{HS}^2\\
        & = C_1 \|T \|_{HS}^2 + C_2,
    \end{align*}
    for all $T \in B$, where $C_1 = 2 E_{f_1\sim \mu} \|f_1 \|_{H_1}^2 d\mu(f) + \eta, C_2 = 2 E_{f_2\sim \nu} \|f_2 \|_{H_2}^2 d\nu(f)$.
    \item This follows from the fact that $\Phi^2(T) \geq 0$ for all $T$ and $\eta \|T \|^2$ is coercive as in equation~\eqref{eq:coercive}.
\end{enumerate}
\end{proof}

We are ready to establish existence and uniqueness of the minimizer of $J$. The technique being used is well-known in the theory of calculus of variations (e.g., cf. Theorem 5.25. in \citep{Demengel_2012}).

\begin{customthm}{\ref{thm:existuniq}}
    There exists a unique minimizer $T_0$ for the problem \eqref{eq:shrinkage}.
\end{customthm}

\begin{proof}[Proof of Theorem \ref{thm:existuniq}] 
    As $J(T) \geq 0$ and is finite for all $T$, there exist $L_0 = \inf_{T \in \Bspace} J(T) \in [0, \infty)$. Consider any sequence $(T_k)_{k=1}^{\infty}$ such that $J(T_k) \to L_0$. We see that this sequence is bounded, as otherwise, there exists a subsequence $(T_{k_h})_{h=1}^{\infty}$ such that $\|T_{k_{h}} \|_{HS} \to \infty$. But this means $L_0 = \lim J(T_{k_h}) = \infty$ (due to the coercivity), which is a contradiction. Now, because $(T_k)$ is bounded, by Banach-Alaoglu theorem, there exists a subsequence $(T_{k_p})_{p=1}^{\infty}$ converges weakly to some $T_0$. 
    
    Next, we will prove that $J$ is weakly lower semi-continuous. Indeed, we can readily verify that $J$ is (weakly) lower semi-continuous if and only if the epigraph $\{(T, y): y \geq J(T) \}$ is (weakly) closed. Because $J$ is convex and continuous, we have the epigraph is convex and closed. Recall a theorem of Mazur (page 292 of  ~\cite{royden1988real_analysis}), which states that a convex, closed subset of a Banach space is weakly closed. This result implies that the epigraph $\{(T, y): y \geq J(T) \}$ is also weakly closed. Hence, $J$ is weakly lower semi-continuous. Thus,
    \begin{equation}
        J(T_0) \leq \liminf_{p \to \infty}J(T_{k_p}) = L_0.
    \end{equation}
    Therefore the infimum of $J$ is attained at some $T_0$. The uniqueness of $T_0$ follows from the strict convexity of $J$.
\end{proof}

\paragraph{Approximation analysis}
Next, we proceed to analyze the convergence of the minimizers of finite dimensional approximations to the original problem \eqref{eq:shrinkage}. The proof is valid thanks to the presence of the regularization term $\eta \|T\|_{HS}^2$.

\begin{customlemma}{\ref{lem:convergeTK}}
For each $K=(K_1,K_2)$, there exists a unique minimizer $T_K$ of $J$ over $\BKspace$. Moreover, $T_K \to T_0$ in $\|\cdot\|_{HS}$ as $K_1, K_2\to \infty$.
\end{customlemma}

\begin{proof}[Proof of Lemma \ref{lem:convergeTK}]
    Similar to the proof above, for every $K = (K_1, K_2)$ there exists uniquely a minimizer $T_K$ for $J$ on $\BKspace$ as $\BKspace$ is closed and convex. Denote $T_{0, K}$ the projection of $T_0$ to $\BKspace$. As $K\to \infty$, we have $T_{0, K} \to T_0$, which yields  $J(T_{0, K}) \to J(T_0)$. From the definition of minimizers, we have
    \begin{equation}
        J(T_{0, K}) \geq J(T_K) \geq J(T_0), \quad \forall \, K.
    \end{equation}
    Now let $K\to \infty$, we have $\lim_{K\to \infty} J(T_K) = J(T_0)$ thanks to the Sandwich rule. Since $J$ is convex, 
    \begin{equation}
        J(T_0) + J(T_K) \geq 2J\left(\dfrac{1}{2} (T_0 + T_K)\right),
    \end{equation}
    passing this through the limit, we also have 
    \begin{equation}
        \lim_{K\to \infty} J\left(\dfrac{1}{2} (T_0 + T_K)\right) = J(T_0).
    \end{equation}
    Now using the parallelogram rule,
    \begin{align*}
        \dfrac{\eta}{2} \|T_K - T_0\|_{HS}^2 & = \eta \left(\|T_K\|_{HS}^2 + \|T_0\|_{HS}^2 - 2 \left\|\dfrac{1}{2}(T_0 + T_K) \right\|_{HS}^2 \right)\\
        & = \left(J(T_K) + J(T_0) - 2J\left(\dfrac{1}{2} (T_0 + T_K)\right)\right)  \\
        & \quad - \left(\Phi^2(T_K) + \Phi^2(T_0) - 2\Phi^2\left(\dfrac{1}{2} (T_0 + T_K)\right)\right)\\
        &\leq \left(J(T_K) + J(T_0) - 2J\left(\dfrac{1}{2} (T_0 + T_K)\right)\right),
    \end{align*}
    as $\Phi^2$ is convex. Let $K\to \infty$, we have the last expression goes to 0. Hence, $\|T_K - T_0 \|_{HS} \to 0$. 
\end{proof}

What is remarkable in the proof above is that it works for any sequence $(T_m)_{m=1}^{\infty}$: whenever we have $J(T_m) \to J(T_0)$ then we must have $T_m \to T_0$. 

\paragraph{Uniform convergence and consistency analysis}
Now we turn our discussion to the convergence of empirical minimizers. Using the technique above, there exists uniquely minimizer $\hat{T}_{K, n}$ for $\hat{J}_{n}$ over $\BKspace$. We want to prove that $\hat{T}_{K, n}\xrightarrow{P} T_K$ uniformly in $K$ in a suitable sense and then combine with the result above to have the convergence of $\hat{T}_{K, n}$ to $T_0$. 
A standard technique in the analysis of M-estimator is to establish uniform convergence of $\hat{J}_n$ to $J$ in the space of $T$ \citep{keener2010theoretical}. Note that the spaces $\Bspace$ and all $\BKspace$'s are not bounded, so care must be taken to show that $(\hat{T}_{K, n})_{K, n}$ will eventually reside in a bounded subset and then uniform convergence is attained in that subset. The following auxiliary result presents that idea.


\begin{customlemma}{\ref{lem-unifconv}}
\begin{enumerate}
\item For any fixed $C_0>0$, 
\begin{equation}
    \sup_{\|T\|_{HS} \leq C_0} |\hat{J}_n(T) - J(T)| \xrightarrow{P} 0 \quad (n\to \infty).
\end{equation}
\item Let $\hat{T}_{K, n}$ be the unique minimizer of $\hat{J}_n$ over $\BKspace$. There exists a constant $D$ such that $P(\sup_{K} \|\hat{T}_{K, n} \|_{HS} < D) \to 1$ as $n\to \infty$.    
\end{enumerate}
\end{customlemma}
\begin{proof}
    \begin{enumerate}
        \item The proof proceeds in a few small steps.
        
        \emph{Step 1.}  By triangle inequality of Wasserstein distances,
    \begin{align}\label{eqn-boundphi}
        |W_2(T\#\mu, \nu) - W_2(T\#\hat{\mu}_{n_1}, \hat{\nu}_{n_2})| & \leq W_2(T\# \hat{\mu}_{n_1}, T\#\mu) +  W_2( \hat{\nu}_{n_2}, \nu)\nonumber\\
        & \leq \|T\|_{op}  W_2( \hat{\mu}_{n_1}, \mu) + W_2( \hat{\nu}_{n_2}, \nu)\nonumber\\
        & \leq \|T\|_{HS}  W_2( \hat{\mu}_{n_1}, \mu) + W_2( \hat{\nu}_{n_2}, \nu).
    \end{align}
    Therefore,
    \begin{equation}
    \label{eqn-uniboundphi}
        \sup_{\|T\|_{HS} \leq C_0} | \hat{\Phi}_n(T) - \Phi(T) | \leq C_0 W_2( \hat{\mu}_{n_1}, \mu) + W_2( \hat{\nu}_{n_2}, \nu)
    \end{equation}
    By Proposition 2.2.6. of \cite{panaretos2020invitation} and with our assumption of bounded second moments of $\mu$ and $\nu$, we have $W_2( \hat{\mu}_{n_1}, \mu)$ and $W_2( \hat{\nu}_{n_2}, \nu)$ converge almost surely to 0 as $n\to \infty$.
As almost surely convergence implies convergence in probability, we have
    \begin{equation}
        \sup_{\|T\|_{HS} \leq C_0} | \hat{\Phi}_n(T) - \Phi(T) | \xrightarrow{P} 0,
    \end{equation}
    which means for all $\epsilon > 0$, 
    \begin{equation}
        P\left(\sup_{\|T\|_{HS} \leq C_0} | \hat{\Phi}_n(T) - \Phi(T)| < \epsilon \right) \to 1,
    \end{equation}
    \emph{Step 2.} Combining $\sup_{\|T\|_{HS} \leq C_0} | \hat{\Phi}_n(T) - \Phi(T)| < \epsilon$ with the fact that $\Phi^2(T) \leq C_1\|T\|_{HS} +C_2$ implies that for all $T$ such that $\|T\|_{HS} \leq C_0$, we have $\Phi^2(T) \leq C_1C_0 +C_2 =: C$
    \begin{align*}
        |\hat{J}_n(T) - J(T)| & = |\hat{\Phi}^2_n(T) - \Phi^2(T)| \\
        & = | \hat{\Phi}_n(T) - \Phi(T)| | \hat{\Phi}_n(T) + \Phi(T)| \\
        & \leq \epsilon (2\sqrt{C} + \epsilon).
    \end{align*}
    Hence
    \begin{equation}
        P\left(\sup_{\|T\|_{HS} \leq C_0} | \hat{J}_n(T) - J(T)| < \epsilon (2\sqrt{C} + \epsilon) \right) \geq P\left(\sup_{\|T\|_{HS} \leq C_0} | \hat{\Phi}_n(T) - \Phi(T)| < \epsilon \right) \to 1.
    \end{equation}
    Noticing that for all $\delta > 0$, there exists an $\epsilon > 0$ such that $\epsilon (2\sqrt{C} + \epsilon) = \delta$, we arrive at the convergence in probability to 0 of $\sup_{\|T\|_{HS} \leq C_0} | \hat{J}_n(T) - J(T)|$.
    
    \item We also organize the proof in a few steps. 
    
    \emph{Step 1.} Denote $\hat{\Phi}_n(T) = W_2(T\# \hat{\mu}_{n_1}, \hat{\nu}_{n_2})$. We first show that for any fixed $C_0$, 
\begin{equation}
    \sup_{\|T\|_{HS} \geq C_0} \dfrac{|\hat{\Phi}_n(T) - \Phi(T)|}{\|T\|_{HS}} \xrightarrow{P} 0\quad (n\to \infty).
\end{equation}
    
    Indeed, from~\eqref{eqn-boundphi},
    \begin{equation}
        \sup_{\|T\|_{HS} \geq C_0} \dfrac{| \hat{\Phi}_n(T) - \Phi(T) |}{\|T\|_{HS}} \leq W_2( \hat{\mu}_{n_1}, \mu) + \dfrac{W_2( \hat{\nu}_{n_2}, \nu)}{C_0}.
    \end{equation}
    As above, we have $W_2( \hat{\mu}_{n_1}, \mu)$ and $W_2( \hat{\nu}_{n_2}, \nu)$ converge to 0 almost surely as $n\to \infty$.
    Hence, $\sup_{\|T\|_{HS} \geq C_0} \dfrac{| \hat{\Phi}_n(T) - \Phi(T) |}{\|T\|_{HS}} \to 0$ almost surely, and therefore in probability.
    
    \emph{Step 2.} For any fixed $C_0$ and $\delta$,
    \begin{equation}
        P\left(\sup_{\|T\|_{HS} \geq C_0}  \dfrac{|\hat{\Phi}_n(T) - \Phi(T)|}{\|T\|_{HS}} <\delta \right) \to 1 \quad (n\to \infty).
    \end{equation}
    The event $\sup_{\|T\|_{HS} \geq C_0}  \dfrac{|\hat{\Phi}_n(T) - \Phi(T)|}{\|T\|} <\delta$ implies that for all $T$ such that $\|T\|_{HS} \geq C_0$, from Lemma~\ref{lem:propsofJ} we have
    \begin{equation*}
        \hat{J}_n(T) \leq (\Phi(T) + \delta \|T\|_{HS})^2 + \eta \|T\|_{HS}^2 \leq (\sqrt{C_1 \|T\|_{HS}^2 + C_2} + \delta \|T\|_{HS})^2 + \eta \|T\|_{HS}^2.
    \end{equation*}
    
    Now for each $K$, we can choose a $\tilde{T}_K\in \BKspace$ such that $\| \tilde{T}_K\|_{HS} = C_0$. Thus,
    \begin{align*}
        \inf_{T\in \BKspace} \hat{J}_n(T) & \leq \hat{J}_n(\tilde{T}_K) \leq  (\sqrt{C_1 \|\tilde{T}_K\|_{HS}^2 + C_2} + \delta \|\tilde{T}_K\|_{HS})^2 + \eta \|\tilde{T}_K\|_{HS}^2 \\
        & = (\sqrt{C_1 C_0^2 + C_2} + \delta C_0)^2 + \eta C_0^2 =: C,
    \end{align*}
    which is a constant. 
    
    On the other hand, choose $D = \sqrt{C / \eta}$, we have for all $T$ such that $\|T\|_{HS} > D$
    \begin{equation}
         \hat{J}_n(T) \geq \eta \|T \|_{HS}^2 > C,
    \end{equation}
    which means $\inf_{T\in \BKspace: \|T \|_{HS} > D} \hat{\Phi}_n(T) > C$ for all $K$. 
    
    Combining two facts above, we have $\|\hat{T}_{K, n}\|_{HS} \leq D$ for all $K$. 
    
    \emph{Step 3.} It follows from the previous step that 
    \begin{equation}
        P\left(\sup_{K} \|\hat{T}_{K, n}\|_{HS} \leq D\right) \geq P\left(\sup_{\|T\|_{HS} \geq C_0}  \dfrac{|\hat{\Phi}_n(T) - \Phi(T)|}{\|T\|_{HS}} <\delta \right),
    \end{equation}
    which means this probability also goes to 1 as $n\to \infty$.
    \end{enumerate}
\end{proof}

We are ready to tackle the consistency of our estimation procedure.

\begin{customthm}{\ref{thm:convergence}}
There exists a unique minimizer $\hat{T}_{K, n}$ of $\hat{J}_{n}$ over $\BKspace$ for all $n$ and $K$. Moreover, $\hat{T}_{K, n} \xrightarrow{P} T_0$ in $\|\cdot \|_{HS}$ as $K_1, K_2, n_1, n_2 \to \infty$.
\end{customthm}

\begin{proof}[Proof of Theorem \ref{thm:convergence}]
  The proof proceeds in several smaller steps.
  
  \emph{Step 1.} Take any $\epsilon > 0$. As $T_K \to T_0$ when $K\to \infty$, there exist $\kappa = (\kappa_1, \kappa_2)$ such  that $\|T_K - T_0\|_{HS} \leq \epsilon$ for all $K_1 > \kappa_1, K_2> \kappa_2$. Let 
    \begin{equation}
        L_{\epsilon} = \inf_{T \in \Bspace\setminus B(T_0, \epsilon)} J(T),
    \end{equation}
    where $B(T, \epsilon)$ is the Hilbert-Schmidt open ball centered at $T$ having radius $\epsilon$. It can be seen that $L_{\epsilon} > J(T_0)$, as otherwise, there exists a sequence $(T_p)_p \not \in B(T, \epsilon)$ such that $J(T_p) \to J(T_0)$, which implies $T_p \to T_0$, a contradiction.
    
    \emph{Step 2.} Let $\delta = L_{\epsilon} - J(T_0) > 0$. By Lemma~\ref{lem:convergeTK}, we can choose $\kappa$ large enough so that we also have $|J(T_K) - J(T_0)| < \delta/2\,\,\forall \, K_1>\kappa_1, K_2>\kappa_2$. Let 
    \[L_{K, \epsilon} = \inf_{\BKspace \setminus B(T_K, 2\epsilon)} J(T).\] 
    As $B(T_0, \epsilon) \subset B(T_K, 2\epsilon)$ and $\BKspace \subset \Bspace$, we have
    \begin{equation}
        L_{K, \epsilon} = \inf_{\BKspace \setminus B(T_K, 2\epsilon)} J(T) \geq  \inf_{T \in \Bspace\setminus B(T_0, \epsilon)} J(T) = L_{\epsilon}.
    \end{equation}
    Therefore,
    \begin{equation}
        L_{K, \epsilon} - J(T_K) \geq L_{\epsilon} - J(T_0) - \delta/2 = \delta/2.
    \end{equation}
    for all $K > \kappa$. 
    
    \emph{Step 3.} Now, if we have
    \begin{equation}
        \sup_{\|T \|\leq D} |\hat{J}_n(T) - J(T)| \leq \delta /4 ,\quad \sup_{K} |\hat{T}_{K, n}| \leq D,
    \end{equation}
    where $D$ is a constant as in  Lemma~\ref{lem-unifconv}, then 
    \begin{equation}
        \hat{J}_n(T_K) \leq J(T_K) + \delta/4,
    \end{equation}
    and
    \begin{equation}
        \hat{J}_n(T) \geq J(T) - \delta/4 \geq J(T_K) + \delta/4,
    \end{equation}
    for all $\| T\|_{HS}\leq D$ \emph{and} $T \in \BKspace \setminus B(T_K, 2\epsilon)$, where the last inequality is due to Step 2. 
    
    Combining with $|\hat{T}_{K, n}| \leq D$, we have $\hat{T}_{K, n}$ must lie inside $B(T_K, 2\epsilon)\cap \BKspace$ because it is the minimizer of $\hat{J}_n$ over $\BKspace$. Hence $\|\hat{T}_{K, n} - T_K \|_{HS} \leq 2 \epsilon$, which deduces that $\|\hat{T}_{K, n} - T_0 \|_{HS} \leq \|\hat{T}_{k,n} - T_K\|_{HS} + \|T_k - T_0\|_{HS} \leq 2\epsilon + \epsilon = 3 \epsilon$. 
    
    \emph{Step 4.} Continuing from the previous step, for all $\kappa$ large enough, we have the following inclusive relation of events
    \begin{equation}
        \{\sup_{\|T \|\leq D} |\hat{J}_n(T) - J(T)| \leq \delta /4\} \cap \{ \sup_{K} |\hat{T}_{K, n}| \leq D\} \subset \{ \sup_{K > \kappa} \|\hat{T}_{K, n} - T_0 \|_{HS} \leq 3 \epsilon\} 
    \end{equation}
    Using the inequality that for any event $A, B$, $P(A \cap B) \geq P(A) + P(B) - 1$, we obtain
    \begin{equation}
        P( \sup_{K > \kappa} \|\hat{T}_{K, n} - T_K \|_{HS} \leq 3 \epsilon) \geq P(\sup_{\|T \|_{HS}\leq D} |\hat{J}_n(T) - J(T)| \leq \delta /4) + P(\sup_{K} |\hat{T}_{K, n}| \leq D) - 1,
    \end{equation}
    which goes to 1 as $n\to \infty$ due to Lemma~\ref{lem-unifconv}. Because this is true for all $\epsilon>0$, we have
    \begin{equation}
        \hat{T}_{K, n} \xrightarrow{P} T_0,
    \end{equation}
    as $K, n\to \infty$.
\end{proof}

\paragraph{Consistency when the functional data are observed only at design points}
Finally, we are ready to prove Theorem~\ref{thm:consistency-nkd}, which is re-stated herein.

\begin{customthm}{\ref{thm:consistency-nkd}}
(i)        For every $n_1, n_2, K_1, K_2$ and sequences of design points in source and target domains, the cost function
\begin{eqnarray}\label{eq:optimize_connection_new_d2}
    \hat{J}_{n, K, d}(\mathbf{\Lambda}) = \min_{\pi \in \hat{\Pi}} \sum_{l,k=1}^{n_1,n_2} \pi_{lk} D_{lkd}(\mathbf{\Lambda}) + \eta \|\mathbf{\Lambda}\|_F^2,
\end{eqnarray}
where 
\begin{equation*}
     D_{lkd}(\mathbf{\Lambda}) = \|\mathbf{\Lambda} a_{ld} - b_{kd}\|_2^2,
\end{equation*}
in which $a_{ld} =  (\langle f_{1,l}, U_i  \rangle_{d})_{i=1}^{K_1}$ and  $b_{kd} =  (\langle f_{2,k}, V_j  \rangle_{d})_{i=1}^{K_2}\, \forall l, k$, has unique minimizer $\mathbf{\Lambda}_{n, K, d} \in \mathbb{R}^{K_2\times K_1}$ that corresponds to operator $T_{n, K, d}$. 

(ii) Suppose that for any natural index pair $(i, j)$, there holds
\begin{equation}
    \langle f, U_i \rangle_d \to \langle f, U_i \rangle_{H_1},  \langle g, V_j \rangle_d \to \langle g, V_j \rangle_{H_2}, 
\end{equation}
almost surely as $d\to \infty$, where $f \sim \mu$ and $g \sim \nu$. Then for any sequence $n_1, n_2, K_1, K_2 \to \infty$ and $d\to \infty$ with a rate depends on $n_1, n_2, K_1, K_2$, we have $T_{n, K, d} \xrightarrow{P} T_0$ in $\|\cdot \|_{HS}$. Here $T_0$ denotes the minimizer of the population version of FOT given in Eq.~\eqref{eq:shrinkage}.
\end{customthm}

\begin{proof}
For each $n_1, n_2, K_1, K_2$ and sequences of design points, the existence and uniqueness of $\mathbf{\Lambda}_{n, K, d}$ follows from Theorem \ref{thm:existuniq}. Thus, part (i) is immediate. To establish part (ii), we rewrite the objective function \eqref{eq:optimize_connection_new_d2} as
\begin{equation}\label{eq:optimize_new_d_Wasserstein}
    W_2^2(T_{\#} \mu_{n, K, d}, \nu_{n, K, d}) + \eta \|T\|_{HS}^2,
\end{equation}
where
\begin{equation*}
    \mu_{n, K, d} = \dfrac{1}{n_1}\sum_{l=1}^{n_1} \delta_{f_{l, K, d}}, \quad f_{l, K, d} = \sum_{i=1}^{K_1} \langle f_{l}, U_i  \rangle_d U_i, \quad (f_{l})_{l=1}^{n_1} \overset{iid}{\sim}\mu,
\end{equation*}
and 
\begin{equation*}
    \nu_{n, K, d} = \dfrac{1}{n_2}\sum_{k=1}^{n_2} \delta_{g_{k, K, d}}, \quad g_{k, K, d} = \sum_{i=1}^{K_2} \langle g_k, V_j  \rangle_d V_j, \quad (g_{k})_{k=1}^{n_2} \overset{iid}{\sim}\nu. 
\end{equation*}
As a consequence of Lemma \ref{lem-unifconv} and Theorem \ref{thm:convergence}, the conclusion of the theorem can be achieved if we can show that
\begin{equation}
    W_2(\mu_{n, K, d}, \mu) \xrightarrow{P} 0, \quad W_2(\nu_{n, K, d}, \nu) \xrightarrow{P} 0.
\end{equation}
It suffices to establish the convergence for $\mu$, as the work for $\nu$ can be done in the same way. Note that
\begin{equation}
    W_2(\mu_n, \mu) \xrightarrow{a.s.} 0, 
\end{equation}
as $n_1 \to \infty$, where $\mu_n = \frac{1}{n_1} \sum_{l=1}^{n_1}\delta_{f_{l}}$, so that we only need to show 
\begin{equation}
\label{eq:next-to-last}
    W_2(\mu_{n, K, d}, \mu_n) \xrightarrow{P} 0\quad (\text{as } n\to \infty).
\end{equation}

Consider the coupling that places mass $\dfrac{1}{n_1}$ on $(f_l, f_{l. K, d})$ for all $l=1, \dots, d_1$, then we have
\begin{align}\label{eq:bound-W2-ndk}
     W_2^2(\mu_{n, K, d}, \mu_n)& \leq  \dfrac{1}{n_1} \sum_{l=1}^{n_1} \|f_l - f_{l, K, d}\|_{H_1}^2 \nonumber \\
     & = \dfrac{1}{n_1} \sum_{l=1}^{n_1} \left(\sum_{i=1}^{K_1} (\langle f_{l}, U_i \rangle_d - \langle f_l, U_i \rangle_{H_1})^2 + \sum_{i=K_1+1}^{\infty} \langle f_l, U_i \rangle_{H_1}^2\right).
\end{align}
As $n_1, K_1\to \infty$, by an application of Fubini's theorem, we have
\begin{equation}
    \lim_{n_1, K_1\to \infty} \sum_{l=1}^{n_1}\sum_{i=1}^{K_1} \dfrac{1}{n_1}\langle f_l, U_i \rangle_{H_1}^2 = \lim_{n_1\to \infty} \sum_{l=1}^{n_1}\dfrac{1}{n_1}\|f_{l}\|_{H_1}^2 = E_{f\sim \mu} \|f\|_{H_1}^2,
\end{equation}
almost surely. Hence, 
\begin{equation}
    \lim_{n_1, K_1\to \infty} \sum_{l=1}^{n_1}\sum_{i=K_1+1}^{\infty} \dfrac{1}{n_1}\langle f_l, U_i \rangle_{H_1}^2 = 0,
\end{equation}
almost surely. Now consider the first sum of the right-hand side of \eqref{eq:bound-W2-ndk}, for each $l=1, \dots, n_1$ and $i=1,\dots, K_1$, we have from the assumption of the theorem that
\begin{equation}
    (\langle f_{l}, U_i \rangle_d - \langle f_l, U_i \rangle_{H_1})^2\to 0,
\end{equation}
almost surely for $f_l \sim \mu$ as $d\to \infty$, and almost surely convergence implies convergence in probability. So, for every $\delta, \epsilon >0$, we can choose $D = D(\delta, \epsilon, K_1, n_1)$ such that
\begin{equation}
    P\left(\sum_{i=1}^{K_1}(\langle f_{l}, U_i \rangle_d - \langle f_l, U_i \rangle_{H_1})^2 > \epsilon\right) \leq \dfrac{\delta}{n_1}, \quad \forall d > D.
\end{equation}
Hence,
\begin{align*}
    P\left(\dfrac{1}{n_1} \sum_{l=1}^{n_1}\sum_{i=1}^{K_1}(\langle f_{l}, U_i \rangle_d - \langle f_l, U_i \rangle_{H_1})^2 > \epsilon\right) & \leq P\left(\bigcup_{l=1}^{n_1} \left\{\sum_{i=1}^{K_1}(\langle f_{l}, U_i \rangle_d - \langle f_l, U_i \rangle_{H_1})^2 > \epsilon \right\}\right)\\
    &\leq \sum_{l=1}^{n_1}P\left(\sum_{i=1}^{K_1}(\langle f_{l}, U_i \rangle_d - \langle f_l, U_i \rangle_{H_1})^2 > \epsilon\right)\\
    & \leq \delta,
\end{align*}
     for all $d > D(\delta, \epsilon, K_1, n_1)$. It means that $W_2(\mu_{n, K, d}, \mu_n)$ converges to 0 in probability as $n_1, K_1\to \infty$ and the numbers of design points grow to infinity with a rate depending on $n_1$ and $K$. Thus, the RHS of \eqref{eq:bound-W2-ndk} vanishes in probability, so Eq.~\eqref{eq:next-to-last} is established, and the rest of the proof follows similarly to the proofs of Lemma \ref{lem-unifconv} and Theorem \ref{thm:convergence}.
\end{proof}

\section{Discussions and Future Work}
\label{section:conclusion}

We proposed a formulation of optimal transport for probability distributions on domains of functions, where the stochastic map between functional domains can be represented by an infinite dimensional Hilbert-Schmidt operator mapping a Hilbert space of functions to another. 
We proposed a learning method for transport maps based on subspace approximations of Hilbert-Schmidt operators, and implemented an efficient algorithm that involves joint optimization of such operators and a suitable space of stochastic couplings.  Theoretical guarantees on the existence, uniqueness, and consistency of our estimator were achieved. Through simulation studies, we validated our theory and demonstrated the effectiveness of our method of approximation, and that of the learning algorithm, by taking into account the functional nature of the data domains. The effectiveness of our approach was further demonstrated in a couple of real-world domain adaptation applications involving complex and realistic robot arm movements.

\begin{figure}[ht!] 
 \resizebox{\textwidth}{!}{
\begin{tabular}{ |p{4.6cm}|p{4.4cm}|p{1.6cm}|p{4.2cm} | p{4.4cm} | }
 \hline
 Work  &  measure  & functional  &  transport map \\ 
 \hline
 \hline
\citep{genevay2016stoch_LSOT}    & discrete, semi-discrete &  no    &  n/a \\ 
 \hline
\citep{seguy2017LSOT}     & continuous &  no    &  neural network \\ 
 \hline
 \citep{alvarez2019towards_joint, grave2019unsupervised}, \citep{meng2019large_map_est}    & discrete &  no   &  rigid transformation \\ 
\hline
 \citep{perrot2016mapping_est_discre_OT}    & discrete &  no   &  linear \& kernel  \\ 
 \hline 
\citep{xie2019scalable_LSOT} & empirical &  no   &  GAN  \\ 
 \hline 
\citep{makkuva2019ICNN}    & empirical &  no   &  ICNN  \\ 
 \hline 
\citep{mallasto2017learning_wGP}    & single Gaussian process &  \textbf{yes}   &  n/a$^*$ \\ 
\hline 
\textbf{This work}    & measures on Hilbert spaces &  \textbf{yes}   &  Hilbert-Schmidt operator \\ 
\hline 
\end{tabular}
}
\captionof{table}{Related works on optimal transport map estimation.}
\label{tab:summary}
\end{figure}

{ \subsection{Related work }}

{
Optimal transport-based applications have made significant strides in the field of machine learning  ~\citep{arjovsky2017wgan,ho2017multilevel,li2019learning_IOT,chen2019imp_seq2seq_OT,alvarez2019towards_joint,fan2020scalable_wbc_icnn,alvarez2020geometric_otdd,fan2021variational_w_gradflow,korotin2021neural_ot_benchmark,fatras2021unbalanced_minibatch,nguyen2022hierarchical_sliced,bunne2022proximal_ot_dynamics}.
}
{
A suite of OT based approaches involve solving the Kantorovich problem through linear programming or the Sinkhorn algorithm~\citep{Cuturi-Sinkhorn-13,courty2016optimal_OTDA,pooladian2021entropic_map}. Typically, they look for an optimal coupling between empirical measures while the objectives can be extended depending on the problems at hand, such as the ones that gave rise to the Gromov-Wasserstein distance~\citep{memoli2011gromov}, Sliced Wasserstein~\citep{nguyen2022hierarchical_sliced}, partial minibatch OT~\citep{fatras2021unbalanced_minibatch}, outlier robustness criterion~\citep{mukherjee2020outlier_OT}, Orlicz-Wasserstein distance~\citep{Guha-Growing-19}, and so on.
}
{
The results obtained can be extended to vector spaces with high dimensions, but scaling the algorithms for larger sample sizes is typically challenging. 
Also, managing continuous measures within vector spaces is complicated and the most significant difficulty lies in generalizing to out-of-sample data that has not been seen before.}

{
From an applied viewpoint, the problem of (Monge) map estimation has always been of interest, because the Monge map represents the simplest form of the optimal coupling distribution \emph{if} such a map exists. The question of existence and almost sure uniqueness of the Monge map was settled by \cite{brenier1987} in what is now known as the celebrated Brenier theorem, a result which has been extended and generalized by many other authors (see Chapter 6 of \cite{ambrosio2005gradient} and Chapters 9--11 of \cite{villani2008optimal}). Brenier-type theorems helped to rejuvenate the development of optimal transport, in theory and subsequently in practice. In practice, early attempts at \emph{learning} a transport map from data include approaches that represent transport maps as linear transformation or kernel based function classes ~\citep{perrot2016mapping_est_discre_OT}.
A more ambitious approach was made in the work of ~\cite{seguy2017LSOT}, who proposed to find the optimal transport map parameterized by a rich neural network model, following the regularized Kantorovich dual formulation. Exploiting the characterization of the Monge map as the gradient of a convex potential, various authors have proposed to exploit computational convexity~\citep{makkuva2019ICNN,korotin2021neural_ot_benchmark,fan2020scalable_wbc_icnn,korotin2022neural,bunne2022supervised} by leveraging input convex neural networks (ICNNs)~\citep{amos2017_ICNN}. A summary of map estimation work is given in Table \ref{tab:summary}. 
While most of the aforementioned literature still centers at the situation where the support of the distributions are subsets of finite dimensional vector spaces, the growing implementation of machine learning algorithms in various real-world applications, such as time series, have motivated the formulation of optimal transport problem in the domains of functions.}

{ As mentioned in the Introduction, most known results and techniques on optimal transport between distributions on function spaces are related to Gaussian processes and Gaussian measures on normed spaces 
~\citep{mallasto2017learning_wGP,masarotto2019procrustes_GP,knott1984optimal,pigoli2014distances}. These results are natural generalization from those of the multivariate Gaussian distributions~\citep{dowson1982frechet,givens1984class_W_prob}. Specifically, the 2-Wasserstein distance between Gaussian processes with certain covariance coincides with the Procrustes distance between the two covariance operators, which can be approximated arbitrarily well via finite-dimensional approximation~\citep{masarotto2019procrustes_GP}. Additional advances on optimal transport for Gaussian measures have been made by other authors, e.g., \citep{takatsu2011wasserstein_geo_Gaus_Meas,agueh2011barycenters,alvarez2016fixed_barycenter}. In particular, for centered Gaussian measures supported by Hilbert spaces, there exists a linear \emph{subspace} of the (source) Hilbert space where the optimal map is explicit and well-defined as a linear operator. Unfortunately, such a linear map is unbounded so it cannot be extended to the whole (source and target) domains. Such theoretical advances notwithstanding, in practice, the Gaussian distribution assumption is clearly too restrictive in many domains (recall our comparison to
GPOT of \cite{mallasto2017learning_wGP} in Section \ref{section:Experiments}). Our work may be viewed as a first step at addressing optimal transport in the domains of functions that go beyond the Gaussian assumption, and with a particular focus on learning the explicit transport map for sampled functional data.}

{ The formulation presented in this paper can be viewed as a regularization approach to the optimal coupling problem with respect to the source and target distributions on Hilbert spaces of functions. The overall optimal coupling is therefore represented jointly by a compact linear operator $T$ which transports functions in the source domain to functions in the target domain, and by a stochastic coupling distribution $\Pi$. The compactness of $T$ and the sparseness of $\Pi$ induced by suitable penalty techniques offer several theoretical advantages (i.e., existence, uniqueness and consistency of the estimates), as well as practical advantages (i.e., efficient computation and interpretability). 

The primary limitation of our approach, as discussed in the Introduction, is in the situation where the deterministic optimal transport map exists but is either an unbounded linear operator or a nonlinear operator. Developing a theory and methods to accommodate unbounded or nonlinear optimal transport map in the infinite dimensional setting will be  challenging. From a practical and data-driven perspective, one needs to balance the cost of estimating a nonlinear map with the possibility that a single nonlinear map may not offer a very accurate pushforward probability measure for the target domain. That is because in infinite dimensional settings, it is more likely than not that such a deterministic map does not exist, unless certain restrictions are in place. 
}
{}
{}

{ \subsection{Future work}}
Within our formulation of linear operator regularized optimal transport in the functional domains, there are a number of interesting questions and approaches that are worthy of further investigation.
\begin{itemize}
    \item It is of interest to characterize the convergence behavior of the FOT estimator with respect to the number of function samples, basis functions, and design points in the infinite-dimensional setting. 
    Very recently, several groups of researchers have started to study rates of convergence of optimal transport map estimators. However, their work focuses on finite-dimensional domain settings (e.g., \citep{hutter2021minimax_ot_map, tudor_manole2021plugin_map_rate, gunsilius_2021_map_rate, deb2021rates_ot_map}). New ideas and more powerful techniques will probably be required to address analogous questions in the functional domains.
    
    \item There is a vast opportunity to expand the scope of real-world applications by bridging functional data analysis techniques with the optimal transport formalism, where both functional data analysis and optimal transport viewpoints play complementary roles toward achieving effective solutions. For example, in domain adaptation tasks in robotics, healthcare, and autonomous driving, data are intrinsically associated with physical processes and functions. One may also be interested in learning a transport map that pushes forward the samples across diverse yet related domains, such as assembling in manufacturing. For these tasks, it would be critical to investigate the choice of basis functions for a specific machine learning problem, which is further related to functional PCA or representational learning. 
    
    
    \item The proposed FOT framework should be useful toward 
    tackling the domain generalization problem. A principled way is to pushforward the predictive function from a source domain towards a target domain by directly working on the predictive function's parameters. This idea can be generalized to prevalent deep learning methods by considering the basis functions as deep feature extractors. Thus, an FOT based approach may provide a more \textit{interpretable} solution for domain generalization with high-dimensional data, such as those that arise in natural language processing and computer vision.

\end{itemize}

\section*{Acknowledgement}

We thank Rayleigh Lei and Vincenzo Loffredo for valuable discussions and advice on an early version of this paper. Jiacheng Zhu is supported in part by Carnegie Mellon University’s College of Engineering Moonshot Award for Autonomous Technologies for Livability and Sustainability (ATLAS) and by CMU’s Mobility21 National University Transportation Center, which is sponsored by the US Department of Transportation. Mengdi Xu and Ding Zhao are partially supported by the NSF CAREER CNS-2047454. Long Nguyen is partially supported by NSF grant DMS-2015361 and a research gift from Wells Fargo.

\vskip 0.2in

\bibliography{main,Aritra}

\end{document}